
\typeout{IJCAI--ECAI 26 Instructions for Authors}


\documentclass{article}
\pdfpagewidth=8.5in
\pdfpageheight=11in

\usepackage{ijcai26}

\usepackage[hidelinks]{hyperref}
\usepackage{url}
\urlstyle{same}

\usepackage{times}    
\usepackage{helvet}   
\usepackage{courier}  

\usepackage{graphicx}
\usepackage{caption}      
\usepackage{subcaption}   

\usepackage{xcolor}
\usepackage{colortbl}
\usepackage{array}
\usepackage{booktabs}
\usepackage{pifont}
\usepackage{wrapfig}
\usepackage{float}

\usepackage{amsmath}
\usepackage{amssymb}
\usepackage{amsthm}
\newtheorem{proposition}{Proposition}
\newtheorem{lemma}{Lemma}
\newtheorem{theorem}{Theorem}
\newtheorem{corollary}[theorem]{Corollary}

\usepackage{algorithm}
\usepackage{algpseudocode}

\usepackage[switch]{lineno}






\pdfinfo{
/TemplateVersion (IJCAI.2026.0)
}

\title{SNN-Driven Multimodal Human Action Recognition via Sparse Spatial-Temporal Data Fusion}
\author{
Naichuan Zheng
\and
Hailun Xia\thanks{Corresponding author}
\and
Zeyu Liang
\and
Yuchen Du\\
\affiliations
School of Information and Communication Engineering,\\
Beijing University of Posts and Telecommunications, Beijing, China\\
\emails
2022110134zhengnaichuan@bupt.edu.cn,
xiahailun@bupt.edu.cn,
lzy\_sfding@bupt.edu.cn,
duyuchen@bupt.edu.cn
}

\begin{document}

\maketitle

\begin{abstract}
Recent multimodal action recognition approaches that combine RGB and skeleton data have achieved strong performance, but their high computational cost and poor energy efficiency hinder deployment on edge devices. To address these limitations, we propose the first spiking neural network (SNN)-based framework for multimodal human action recognition, to the best of our knowledge, offering an energy-efficient and scalable solution that fuses sparse spatiotemporal data of event cameras and skeletons within a unified spiking architecture. The framework leverages the sparse and asynchronous nature of event and skeleton data and the energy-efficient properties of SNNs. It achieves this through a series of tailored components, including modality-specific feature extraction, a sparse semantic extractor, spiking-based cross-modal fusion via Spiking Cross Mamba, and task-relevant feature compression utilizing a Discretized Information Bottleneck (DIB). To support reproducible evaluation, we further introduce a data construction pipeline that generates temporally aligned event-skeleton pairs from existing RGB-skeleton datasets. Extensive experiments demonstrate that our approach achieves state-of-the-art accuracy among SNNs while significantly reducing energy consumption, providing a practical and scalable solution for neuromorphic multimodal action recognition.
\end{abstract}

Human action recognition (HAR) is a fundamental task in computer vision with applications in intelligent surveillance, human-computer interaction, and medical rehabilitation \cite{yadav2021review}. Existing HAR methods rely on diverse modalities, each with unique advantages and limitations\cite{b1}. RGB data, processed through Convolutional Neural Networks (CNN)\cite{b5}, 3D Convolutional Networks (C3D)\cite{b3,zhou2018mict,liu2018t}, or Vision Transformers (ViT)\cite{yang2022recurring}, effectively captures spatial-temporal features. However, they are highly sensitive to lighting conditions, background clutter, and occlusions.
\cite{b31}. Skeleton data, modeled by Graph Convolutional Networks (GCNs)\cite{b8,b32,b33,b34} or Transformers\cite{b7}, provides robust geometric information but loses critical details in complex actions\cite{b6}. Event cameras, with high dynamic range and low latency, are well-suited for capturing fast movements. Still, they are susceptible to background noise under uncontrolled illumination.\cite{b13,gallego2020event}.

Multimodal human action recognition addresses the limitations of single-modality methods by integrating complementary data sources.\cite{shaikh2024cnns}. Among them, RGB-skeleton fusion, as shown in Figure \ref{fig:Vanilla method}, has received significant attention, as RGB offers rich visual details while skeleton data enhances structural robustness\cite{b4,das2020vpn,b15,kim2023cross}.
However, they mainly rely on Artificial Neural Networks (ANNs), which operate on floating-point arithmetic, causing high computation and energy costs. RGB dependency also causes storage and memory overhead, especially for high-resolution videos, further raising resource demands. Moreover, multi-stream fusion typically operates at the classification level, often ignoring the intrinsic relationships between features. In contrast, event and skeleton data possess sparse characteristics, which reduce memory usage and offer complementary features worth exploring at the feature level for efficient fusion.
\begin{figure}[t]
    \centering
    \begin{subfigure}{0.23\textwidth}
        \centering
        \includegraphics[width=\textwidth]{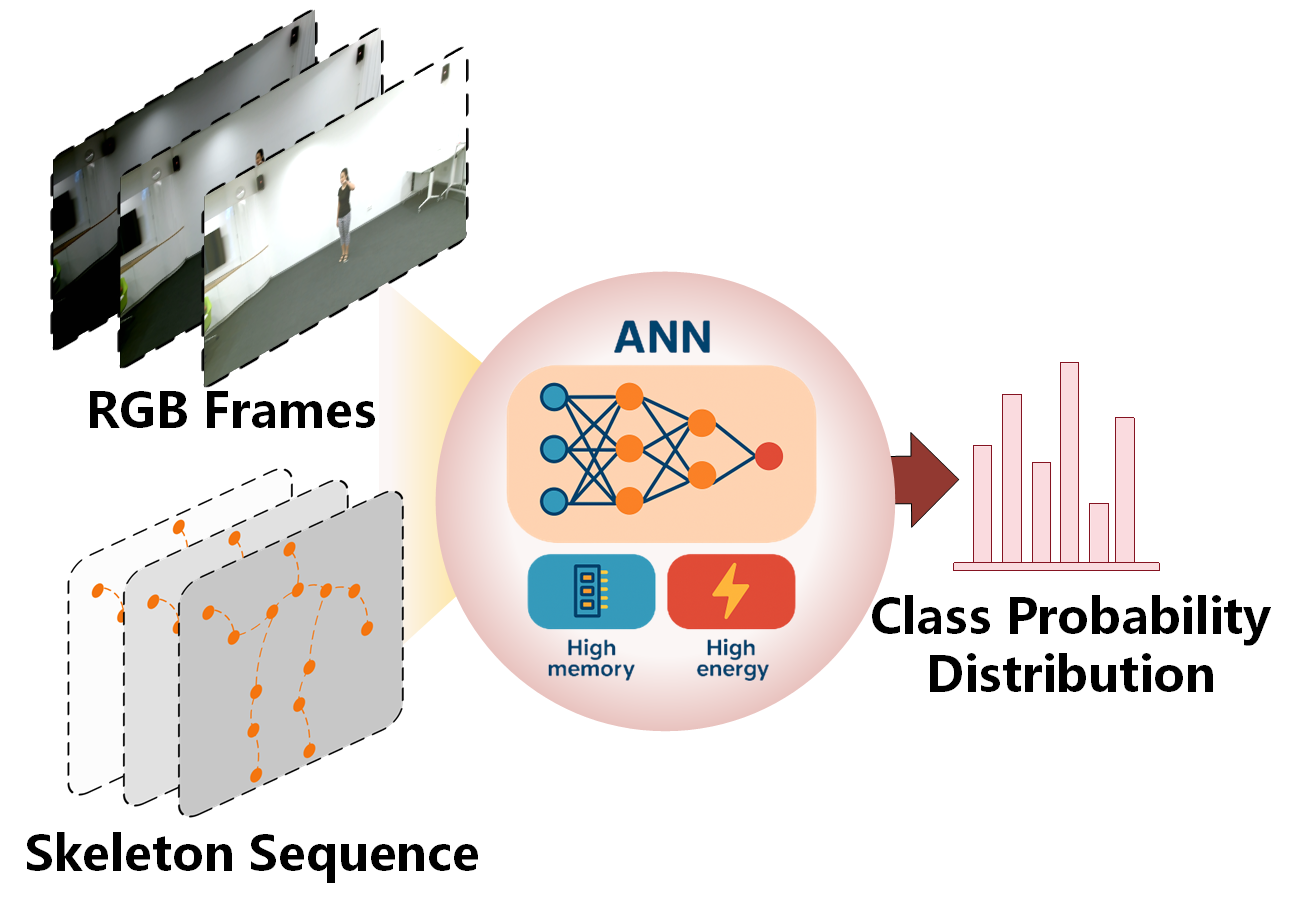}
        \caption{Vanilla method}
        \label{fig:Vanilla method}
    \end{subfigure}
    \begin{subfigure}{0.23\textwidth}
        \centering
        \includegraphics[width=\textwidth]{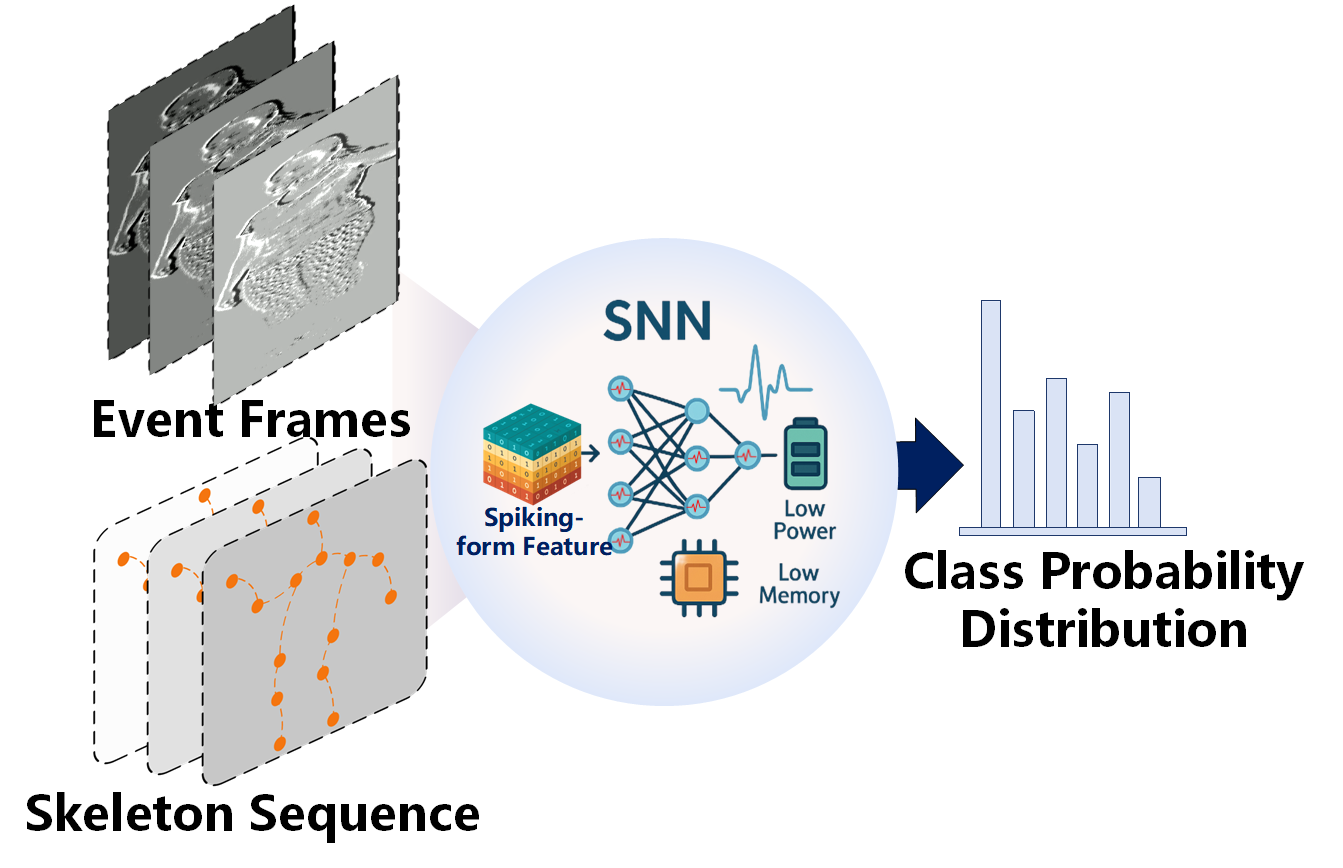}
        \caption{Our method}
        \label{fig:Our method}
    \end{subfigure}
    \caption{Comparison of vanilla ANN-based RGB-Skeleton fusion (a) and our proposed SNN-based event-skeleton fusion (b) for human action recognition.}
    \label{fig:main}
\end{figure}

Meanwhile, Spiking Neural Networks (SNNs) have emerged as a promising alternative to ANNs. They offer biologically inspired sparse activation and asynchronous processing, which significantly reduce power consumption compared to ANN-based models\cite{yamazaki2022spiking,han2021survey}. Recent studies have explored SNNs in single-modality action recognition, including graph-based SNNs for skeleton modeling\cite{zheng2024signal} and event-driven SNN architectures for sparse visual processing\cite{b22,b23,b21,b20}. Although these approaches demonstrate energy efficiency, they remain limited to single-modality settings, leaving their potential for multimodal fusion largely unexplored. Furthermore, SNNs inherently suffer from information loss due to their discrete spike-based representations, which constrains their ability to encode fine-grained motion details and cross-modal dependencies.

Building on these insights, we propose the first SNN-driven multimodal framework for human action recognition that jointly models sparse spatiotemporal event vision and skeleton data at the feature level, as illustrated in Figure~\ref{fig:Our method}. This framework is designed to address two core challenges: (i) how to effectively structure and refine sparse spatiotemporal features from each modality within an SNN framework, and (ii) how to retain task-relevant information during fusion despite the quantized, binary nature of spikes. By fusing sparse event and skeleton streams under spiking constraints, our approach better exploits the complementary nature of these modalities, paving the way for more efficient and accurate human action recognition.  Our key contributions are summarized as follows:

\begin{itemize}
    \item \textbf{SNN-based multimodal fusion architecture}: We design a unified framework that integrates event and skeleton data within an SNN backbone, incorporating Sparse Semantic Extractor (SSE) and Spiking Cross Mamba (SCM) modules for modality-specific encoding and biologically plausible fusion. This architecture enables efficient and robust action recognition under spiking constraints.

    \item \textbf{Discretized Information Bottleneck (DIB):} We formulate an information-theoretic compression mechanism tailored to the spiking domain, enabling task-relevant feature selection while maintaining compatibility with discrete spike-based computation. DIB significantly enhances the compactness and discriminative power of the fused representations.

    \item \textbf{Data construction and experimental validation:} We develop a reproducible pipeline to synthesize synchronized event-skeleton datasets from RGB-skeleton sources, facilitating comprehensive evaluation. Extensive experiments demonstrate superior accuracy and substantial energy savings, confirming the practicality and scalability of our framework for real-world, resource-constrained applications.
\end{itemize}

\section{Related Work}
\paragraph{Multimodal Human Action Recognition}
Multimodal human action recognition has emerged as an effective solution to the limitations of single-modality methods by leveraging complementary data sources for improved accuracy and adaptability\cite{b1,shaikh2024cnns}. Among fusion strategies, RGB-skeleton integration is widely adopted, combining RGB’s rich spatial context with the structured motion cues of skeleton data, enabling robust recognition of both coarse and fine-grained actions\cite{b28,b29}. Numerous studies have demonstrated that integrating RGB and skeleton data significantly improves recognition robustness, especially in complex environments where single-modality methods struggle\cite{b4,das2020vpn,b14,b15,zhu2022skeleton}. Furthermore, recent research has investigated multimodal contrastive learning approaches such as Contrastive Language–Image Pretraining (CLIP), which incorporates textual information to refine vision-based recognition models\cite{wang2023actionclip,b16,b17}. While improving accuracy, existing multimodal approaches still rely heavily on dense data modalities such as RGB, which are not fully optimized for efficient resource usage and energy consumption. In contrast, our method leverages completely sparse spatiotemporal data from both event cameras and skeletons, enabling a more efficient and complementary fusion approach at the feature level.
\paragraph{Spiking Neural Networks}
SNNs process information via discrete spikes, making them naturally suited for event-driven and energy-efficient computation~\cite{b19,yamazaki2022spiking,han2021survey}. Common neuron models like the Leaky Integrate-and-Fire (LIF) and its variants (e.g., PLIF~\cite{bu2023optimal}) simulate the accumulation of input signals until a threshold triggers a spike, enabling sparse and biologically plausible dynamics~\cite{wu2018spatio}. Recent works have applied SNNs to single-modality action recognition — leveraging temporal sparsity in event data~\cite{b13,b23,b27,chen2024spikmamba} or structural priors in skeleton-based graphs~\cite{zheng2024signal}. However, these methods are limited in capturing cross-modal complementarity. To overcome this, we propose a multimodal SNN framework that integrates event and skeleton data, combining structured motion cues with event-driven vision for efficient and robust action recognition.
\section{Methodology}

An overview of the proposed SNN-Driven Multimodal Human Action Recognition Framework is illustrated in Figure \ref{fig:wide}. SGN and Spiking Mamba extract features from skeleton and event frames, respectively. The SSE further enhances multimodal representations. SCM enables cross-modal feature interaction, while the DIB preserves essential modality-specific semantics for classification. The following sections detail each module.
\subsection{SGN and Spiking Mamba: Skeleton and Event Encoding}
As illustrated in the left of Figure \ref{fig:wide}, our framework adopts SNN to extract structured spiking-form features from both skeleton and event data. 
We adapt the spiking graph network (SGN) from prior spiking-based graph modeling work for skeleton recognition~\cite{zheng2024signal} to convert the input skeleton sequence into a spiking-form representation. Given an input skeleton sequence $X_s \in \mathbb{R}^{T \times C \times V}$, the network first applies a linear projection, batch normalization, and spike neuron activation, followed by a spiking positional embedding. The resulting features are processed by stacked spiking graph convolution layers and spiking self-attention (SSA) to capture spatial dependencies and long-range joint interactions. To enhance frequency-domain modeling, we further introduce a multi-branch spectral module: the spiking features are transformed into the frequency domain using FFT, processed by multiple parallel dilated convolution branches, and then transformed back to the temporal-spatial domain via IFFT.

\begin{align}
X_s^{(0)} &= \mathrm{SN}\big(\mathrm{BN}(\mathrm{LP}(X_s))\big) + \mathrm{SPE}_s , \label{eq:init} \\
X_g^{(l)} &= \mathrm{SN}\Big(\mathrm{BN}\big(\hat{A}^{(l)} X_g^{(l-1)} W_g^{(l)}\big)\Big) , \label{eq:gcn} \\
F_{\xi}^{(l)} &= \mathrm{FFT}\big(\mathrm{SSA}^{(l)}(X_g^{(l)})\big), \quad \xi \in \{\mathrm{real}, \mathrm{imag}\}, \label{eq:fft} \\
F_{\xi,d}^{(l)} &= \mathrm{SN}\big(\mathrm{BN}(\mathrm{PDConv1D}(F_{\xi}^{(l)}, d))\big), \quad d \in \{1,2,3,4\}, \label{eq:branch} \\
X_s^{(l)} &= \mathrm{IFFT}\!\left( \big\Vert_{d=1}^{4} F_{\xi,d}^{(l)} \right). \label{eq:ifft}
\end{align}

After $L$ layers, the final output is $X_s^{(L)} \in \mathbb{R}^{T \times C^{(L)} \times V}$, where $T$, $C$, and $V$ denote the temporal length, feature dimensionality, and number of skeleton joints, respectively. $\hat{A}^{(l)} \in \mathbb{R}^{V \times V}$ is the normalized adjacency matrix at the $l$-th spiking graph convolution layer, and $W_g^{(l)} \in \mathbb{R}^{C^{(l-1)} \times C^{(l)}}$ is the corresponding learnable weight matrix. $X_g^{(l)}$ and $X_s^{(l)}$ represent the graph-domain features and the spectral-enhanced features at layer $l$, respectively. FFT produces real and imaginary components, which are processed independently. $\Vert$ denotes channel-wise concatenation across the multi-branch spectral features.

For the event modality, we first apply a Spiking Patch Splitting (SPS) module~\cite{b21} to obtain spiking patch sequences $X_e^{(0)} \in \mathbb{R}^{T \times V \times D^{(0)}}$, where $V$ denotes the number of patch tokens. The resulting representations are then processed by $L$ stacked Spiking-Mamba blocks~\cite{b39}, which adapt the selective state-space model to spiking computation by replacing standard activations with SN. Each block integrates selective and state-space pathways, and incorporates a multilayer perceptron (MLP) built with Linear Projection(LP)-BN-SN layers to refine local features under spiking constraints. This design ensures both long-range dependency modeling and sparse event-driven computation. The final output $X_e^{(L)} \in \mathbb{R}^{T \times V \times D^{(L)}}$ is dimensionally aligned with the skeleton modality for multimodal fusion.

\begin{figure*}[t]
    \centering
    \includegraphics[width=0.95\textwidth]{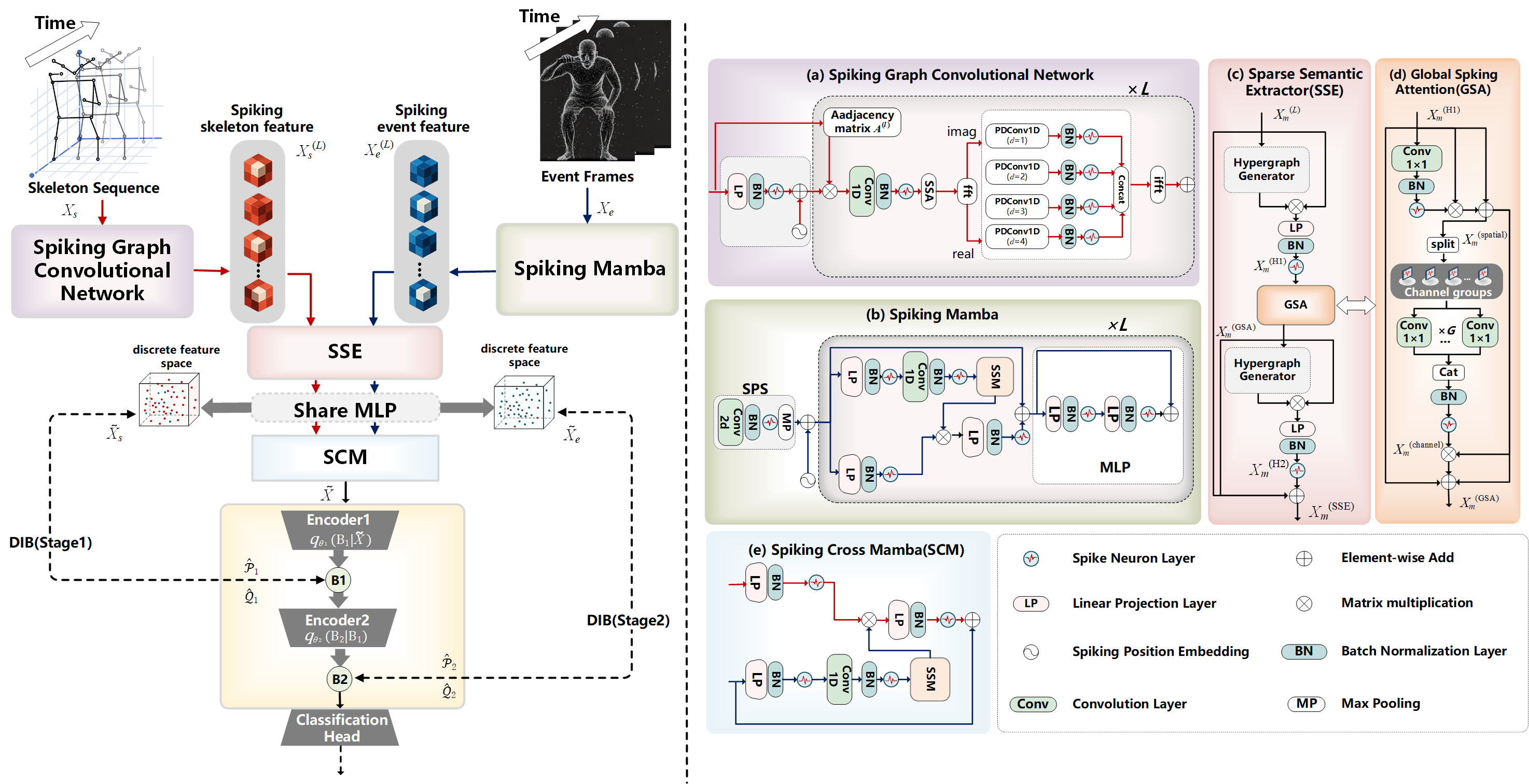}  
\caption{
Overview of the proposed SNN-driven multimodal action recognition framework. The left panel shows the full pipeline; right subfigures detail core modules.
(a) SGN and (b) Spiking Mamba extract features from skeleton and event data, respectively.
(c) SSE enhances modality-specific semantics via structural self-similarity.
(d) GSA, within SSE, improves global spiking feature alignment.
(e) SCM enables cross-modal interaction through selective and state-space paths. The bottom-left illustrates the two-stage DIB module, which compresses fused features before classification.
}
\label{fig:wide}
\end{figure*}
\subsection{Sparse Semantic Extractor}
\label{SSE}
We propose SSE to enhance the semantic representation of spiking-form skeleton feature \( X_s^{(L)} \) and event features \( X_e^{(L)} \) independently, as shown in Figure\ref{fig:wide}(c). SSE comprises two key components: Hypergraph Generators (HG) for capturing structured spatiotemporal relationships and a Global Spiking Attention (GSA) module for refining feature importance via attention.

In HG, each spiking-form tensor is first reshaped into a set of \( T \cdot V \) spatiotemporal nodes, each with dimensionality \( C^{(L)} \). 
A sparse hypergraph is then constructed by computing the pairwise similarity between nodes. Specifically, for each node \( i \), we identify its \( k \)-nearest neighbors based on Euclidean distance, and define the adjacency matrix \( H \in \mathbb{R}^{(T \cdot V) \times (T \cdot V)} \) as:
\begin{equation}
H_{ij} = \frac{1}{1 + \| x_i - x_j \|_2}, \quad \forall j \in \mathcal{N}_k(i),
\end{equation}
where \( x_i \) is the feature vector of the \( i \)-th node, and \( \mathcal{N}_k(i) \) denotes its local neighborhood in the spatiotemporal domain.
This adjacency matrix is then used to propagate information among related nodes via matrix multiplication with the flattened input feature matrix. The result is subsequently passed through LP, BN, and SN to obtain the refined spiking-form feature \( X_m^{(\text{H1})}\in \mathbb{R}^{T \times V \times C^{(L)}} \), where \( m \in \{s, e\} \). 

 As illustrated in Figure~\ref{fig:wide}(d), given the input feature \( X_m^{(\text{H1})}\), the GSA module applies spatial and channel attention sequentially. In the spatial attention branch, a \(1 \times 1\) convolution is first applied to reduce the channel dimension from \( C^{(L)} \) to 1. The resulting feature map is then passed through BN and SN, generating an attention map of shape \( \mathbb{R}^{T \times V \times 1} \). This attention map is broadcast-multiplied with the input \( X_m^{(\text{H1})} \), yielding the spatially attended representation \( X_m^{(\text{spatial})}\). Then, in the channel attention branch, \( X_m^{(\text{spatial})} \) is divided into \( G \) groups along the channel dimension, producing sub-tensors of shape \( \mathbb{R}^{T \times V \times \frac{C^{(L)}}{G}} \). Each group is processed independently by a grouped \(1\times1 \) convolution, followed by BN and SN, to capture channel-specific dependencies within each group. The outputs from all groups are concatenated along the channel axis to reconstruct a tensor of shape \( \mathbb{R}^{T \times V \times C^{(L)}} \). This aggregated result is matrix multiplied with \( X_m^{(\text{spatial})} \) to obtain the channel-attended feature \( X_m^{(\text{channel})} \). The spatial and channel-attended features are then fused via residual addition, producing the output \( X_m^{(\text{GSA})} \in \mathbb{R}^{T \times V \times C^{(L)}} \). To further capture higher-order structural dependencies, a second hypergraph is constructed, and the same GSA refinement process is applied again, yielding \( X_m^{(\text{H2})} \).
 
 Finally, the output of the SSE module integrates multi-stage hypergraph refinement and residual pathways, formulated as:
\begin{equation}
X_m^{(\text{SSE})} = X_m^{(\text{H2})} + X_m^{(\text{GSA})} + X_m^{(L)}
\end{equation}
This structured attention mechanism enables efficient spiking-based feature extraction and facilitates subsequent multimodal fusion.

\subsection{Multimodal Fusion Modules}
Before fusing skeleton and event spiking-form features, a shared-weight MLP is applied to align their semantic spaces, producing the transformed representation \( \widetilde{X}_m \) in a common feature space while preserving modality-specific characteristics.
\subsubsection{Spiking Cross Mamba}
\label{sect:SCM}
SCM extends Spiking Mamba by introducing cross-modal interaction between aligned spiking features, as shown in Figure~\ref{fig:wide}(e). In this module, we adapt the two components of the original Mamba architecture: the Selective Path (SP) and the State Space Path (SSP), where SP refines skeleton features (\( \widetilde{X}_s \)) and SSP captures dependencies in event features (\( \widetilde{X}_e \)). These paths interact via matrix multiplication to integrate complementary modality features into a unified space. The final multimodal output \( \widetilde{X} \), which retains the original event representation through a residual connection to \( \widetilde{X}_{e} \), fuses multimodal information while retaining the sparse, low-power properties of SNNs.

\subsubsection{Discretized Information Bottleneck}
\label{sect:DIB}
Information Bottleneck (IB) has shown strong potential in multimodal fusion by enforcing mutual information constraints to retain task-relevant features while discarding redundancy \cite{xiao2024neuro}. However, traditional IB methods rely on continuous activations and reparameterization, which are incompatible with SNNs due to their binary and event-driven nature. This makes mutual information estimation and optimization particularly challenging under spiking constraints. 
As we further elaborate in Appendix~\ref{sec:lim}, Gaussian IB formulations fail under spiking sparsity due to ineffective KL regularization, non-differentiable thresholds, unstable Bernoulli reconstruction, and ill-conditioned objectives. 

To bridge this gap, we propose the Discretized Information Bottleneck tailored for SNNs. As illustrated in the left-bottom part of Figure~\ref{fig:wide}, DIB adopts a two-stage encoder structure. The first encoder maps $\widetilde{X}$ to $B_1$, applying sparse regularization to maintain modality-awareness. The second encoder compresses $B_1$ into a joint latent representation $B_2$, which is passed to a classification head— the only decoder in the pipeline — for final prediction. 

At the first stage, following the information bottleneck principle, we formulate the objective as a constrained optimization problem:
$\min_{p(B_1|\widetilde{X}),\, p(B_2|B_1)} I\bigl(\widetilde{X}; B_1\bigr)
$ subject to:$I\bigl(\widetilde{X}; \widetilde{X}_s\bigr) - I\bigl(B_1; \widetilde{X}_s\bigr)\leq \epsilon_1$, $I\bigl(B_1; B_2\bigr)\leq \epsilon_2$, and 
$I\bigl(\widetilde{X}; \widetilde{X}_e\bigr) - I\bigl(B_1; \widetilde{X}_e\bigr)\leq \epsilon_3,$
where $\epsilon_1, \epsilon_2, \epsilon_3 > 0$ control the degree of compression. We reformulate the mutual-information constraints by replacing the compression terms with KL-based regularizers and the retention terms with variational likelihood surrogates. This yields the following tractable objectives:
\begin{equation}
\label{equ:1}
\begin{split}
\mathcal{L}^{\theta_1,\psi_1}_{\mathrm{IB}, B_1} 
&= I(\widetilde{X}; B_1) - \lambda_1 I(B_1; \widetilde{X}_s) \\
&\approx \mathbb{E}_{\widetilde{X} \sim P(\widetilde{X})} \,
KL\bigl(q_{\theta_1}(B_1 | \widetilde{X}) \,\|\, q(B_1)\bigr) -\\
&\lambda_1 \mathbb{E}_{B_1 \sim P(B_1 | \widetilde{X})}
\mathbb{E}_{\widetilde{X} \sim P(\widetilde{X})} 
\Bigl[\log q_{\psi_1}(\widetilde{X}_s | B_1)\Bigr].
\end{split}
\end{equation}
\begin{equation}
\label{equ:2}
\begin{split}
\mathcal{L}^{\theta_2,\psi_2}_{\mathrm{IB}, B_2}
&= I(B_1; B_2) - \lambda_2 I(B_2; \widetilde{X}_e) \\
&\approx \mathbb{E}_{B_1 \sim P(B_1)} \,
KL\bigl(q_{\theta_2}(B_2 | B_1) \,\|\, q(B_2)\bigr) -\\
& \lambda_2 \mathbb{E}_{B_2 \sim P(B_2 | B_1)}
\mathbb{E}_{B_1 \sim P(B_1)} 
\Bigl[\log q_{\psi_2}(\widetilde{X}_e | B_2)\Bigr].
\end{split}
\end{equation}
Here $\theta_1$ and $\theta_2$ denote the parameters of encoding neural networks
while $\psi_1$ and $\psi_2$ denote the parameters of the output predicting neural networks. $\lambda_1$ and $\lambda_2$ are trade-off parameters. The detailed derivation of this formulation is provided in the Appendix \ref{sec:mi_derivation}. 

To instantiate the KL divergence terms in Eq.~\ref{equ:1} and Eq.~\ref{equ:2} under spiking constraints, we introduce a discrete Bernoulli KL divergence loss at each compression stage $n \in \{1,2\}$. Specifically, each encoder consists of two sequential layers. The first is an LP-BN-SN block that encodes the input into a spiking latent representation $B_n$. The second component is an LP-BN layer followed by a sigmoid activation, which transforms the spiking latent representation $B_n$ into Bernoulli-distributed activation probabilities $\hat{\mathcal{P}}_n$, indicating the likelihood of spike activations for each channel. To construct a valid reference prior $\hat{\mathcal{Q}}_n$, we apply an Exponential Moving Average (EMA) over the batch-wise mean of $\hat{\mathcal{P}}_n$, resulting in an input-independent yet sparsity-aware distribution. 
These two distributions are then used to compute the discrete KL divergence as formulated in Eq.~\ref{equ:DKL}:
\begin{equation}
\label{equ:DKL}
\mathcal{L}_{\text{KL}, n} = \hat{\mathcal{P}}_n \log \frac{\hat{\mathcal{P}}_n}{\hat{\mathcal{Q}}_n} + (1 - \hat{\mathcal{P}}_n) \log \frac{1 - \hat{\mathcal{P}}_n}{1 - \hat{\mathcal{Q}}_n}.
\end{equation}

As standard reparameterization is incompatible with binary spike activations, we adopt a discrete surrogate strategy. Specifically, the latent feature \( B_n \) is projected by a learnable matrix \( W_n \) and passed through a sigmoid to obtain sampling probabilities. Then, a binary mask \( \Gamma_n \sim \text{Bernoulli}(\sigma(W_n B_n)) \) is sampled and combined with \( B_n \) via bitwise XOR to yield the final representation \( \tilde{B}_n = B_n \oplus \Gamma_n \). This approximates discrete reparameterization within the binary support; gradients through XOR are estimated with a straight-through surrogate. At inference time, the mask is omitted for deterministic decoding.

The retention terms of Eq.~\ref{equ:1} and Eq.~\ref{equ:2} are expressed as
$\lambda_1 \,\mathbb{E}_{\widetilde{X} \sim P(\widetilde{X})}\,\mathbb{E}_{B_1 \sim P(B_1 \mid \widetilde{X})}\!\big[\log q_{\psi_1}(\widetilde{X}_s \mid B_1)\big]$
and
$\lambda_2 \,\mathbb{E}_{\widetilde{X} \sim P(\widetilde{X})}\,\mathbb{E}_{B_1 \sim P(B_1 \mid \widetilde{X})}\,\mathbb{E}_{B_2 \sim P(B_2 \mid B_1)}\!\big[\log q_{\psi_2}(\widetilde{X}_e \mid B_2)\big]$.
Since direct variational estimation can be unstable for binary spikes, we approximate these expectations with a
\emph{normalized cosine surrogate} (see Appendix.~\ref{sec:cosine} for assumptions and derivations), leading to the stage-wise objectives:
\begin{align}
\mathcal{L}_{\mathrm{DIB},1} &= \mathcal{L}_{\mathrm{KL},1}
- \lambda_1 \operatorname{cos}\big(\psi_1(\widetilde{B}_1),\, \widetilde{X}_s\big), \label{equ:cos}\\
\mathcal{L}_{\mathrm{DIB},2} &= \mathcal{L}_{\mathrm{KL},2}
- \lambda_2 \operatorname{cos}\big(\psi_2(\widetilde{B}_2),\, \widetilde{X}_e\big).
\end{align}
where $\mathcal{L}_{\text{KL},1}$ and $\mathcal{L}_{\text{KL},2}$ are the discrete KL losses for the two compression stages, and
$\psi_1(\cdot)$/$\psi_2(\cdot)$ are spiking MLP projections specialized for skeleton ($\widetilde{X}_s$) and event ($\widetilde{X}_e$), respectively. We define $\widehat{\cos}(a,b)=\left\langle \frac{a}{\|a\|_2+\varepsilon},\frac{b}{\|b\|_2+\varepsilon}\right\rangle$ with a small $\varepsilon=1e-6$.
This surrogate is heuristic and not a mutual-information bound; see Appendix.~\ref{sec:cosine} for scope and conditions.

Finally, the fused code $\widetilde{B}_2$ is passed through the classification head to produce $\hat y$. The overall training objective is
\begin{equation}
\mathcal L_{\text{total}}
= \mathcal L_{\text{CE}}(\hat y, y)
+ \alpha\big(\mathcal L_{\mathrm{DIB},1} + \mathcal L_{\mathrm{DIB},2}\big),
\end{equation}
where $\mathcal L_{\text{CE}}(\hat y,y)$ is the cross-entropy loss, $\alpha$ controls the trade-off between compression and classification, and $\lambda_1,\lambda_2$ weight the cosine MI surrogates. For completeness, we provide the training pseudocode of DIB in Appendix~\ref{sec:DIB_opt} and a theoretical feasibility analysis in Appendix~\ref{sec:FeaAna}.

\begin{table*}[htbp]
\caption{Comparison of Different Models on NTU RGB+D (NRD), NTU RGB+D 120 (NRD120), and NW-UCLA (NU) Datasets. Xs, Xv, and Xt denote the protocols: X-Sub (Cross-Subject), X-View (Cross-View), and X-Set (Cross-Setup), respectively. OPs refers to SOPs in SNN and FLOPs in ANN. E, R, and S denote event data, RGB data, and skeleton sequences, respectively. “--” indicates that the original paper did not provide the corresponding energy formula or accuracy on this dataset, making reproduction or comparison infeasible.}
\centering
{\small 
{
\setlength{\tabcolsep}{4pt}

\begin{tabular}{lccccccccccc}
\toprule\toprule
Model &Modality& Param. & \multicolumn{2}{c}{NRD} & \multicolumn{2}{c}{NRD120} & NU & OPs & Power \\
& & (M) & (Xs) & (Xv) & (Xs) & (Xt) & & (G) & (mJ) \\ 
\hline
ST-GCN \cite{b32}     & S              & 3.1       & 81.5\%  & 88.3\%   & 70.7\%   & 73.2\%   & -   & 3.48   & 16.01 \\
Shift-GCN \cite{b33}     & S          & -         & 87.8\%   & 95.1\%   & 80.9\%   & 83.2\%   & 92.5\%   & 2.5    & 11.5  \\
CTR-GCN \cite{chen2021channel}  & S                & 1.46      & 89.9\%   & 94.5\%   & 84.9\%   & 87.1\%   & 94.7\%   & 1.97   & 9.06  \\
Koopman \cite{wang2023neural}   & S               & -         & 90.2\%   & 95.2\%   & 85.7\%   & 87.4\%   & 95.0\%   & -      & -     \\
SGM-NET \cite{b14}     & S+R            & 71.6      & 88.9\%   & 95.7\%   & -     & -     & -    & 169.2  & 778.3 \\
MMFF \cite{zhu2022skeleton}  &S+R                    & 29.1      & 89.6\%   & 96.3\%   & -     & -     & -    & 24.5   & 112.7 \\
VPN \cite{das2020vpn}     & S+R              & 24.0      & 93.5\%   & 96.2\%   & 86.3\%   & 87.8\%   & -    & -      & -     \\
MMNet \cite{b15}        & S+R         & 34.2      & 94.2\%   & 97.8\%   & 92.9\%   & 94.2\%   & -    & 89.2   & 410.32 \\ \hline
Spikformer \cite{b21}    &S           & 4.78      & 73.9\%   & 80.1\%   & 61.7\%   & 63.7\%   & 85.4\%   & 1.69   & 2.17  \\
Spike-driven Transformer \cite{b22}  & S & 4.77      & 73.4\%   & 80.6\%   & 62.3\%   & 64.1\%   & 83.4\%  & 1.57   & 1.93  \\
Spike-driven Transformer V2 \cite{b23} &S & 11.47     & 77.4\%   & 83.6\%   & 64.3\%   & 65.9\%   & 89.4\%   & 2.59   & 2.91  \\
Spike Wavelet Transformer \cite{fang2024spiking} & S & 3.24      & 74.7\%   & 81.2\%   & 63.5\%   & 64.7\%   & 86.7\%   & 1.48   & 2.01  \\
STAtten \cite{lee2025spiking}  &S & 3.19     &72.8\%   & 79.7\%   & 60.3\%   &61.7\%   & 82.8\%   &1.98   &2.48  \\
MK-SGN \cite{b25}                   &S & 2.17  & 78.5\%  & 85.6\%  & 67.8\%   & 69.5\%   & 92.3\%   & 0.68  & 0.614 \\
Signal-SGN \cite{zheng2024signal} &S&1.74&80.5\% &87.7\% &69.2\% &72.1\% &92.7\% &0.31&0.37\\
Spikformer \cite{b21}       & E& 4.18      & 76.9\%   & 82.3\%   & 64.1\%   & 65.3\%   & 88.6\%   & 1.03   & 1.42  \\
Spike-driven Transformer \cite{b22}   &E& 4.48      & 78.3\%   & 84.5\%   & 67.4\%   & 68.3\%   & 90.3\%   & 1.00   & 1.37  \\
Spike-driven Transformer V2 \cite{b23}&E& 6.54      & 82.1\%   & 89.2\%   & 70.4\%   & 71.7\%   & 93.1\%   & 9.98   & 11.75 \\
Spike Wavelet Transformer \cite{fang2024spiking} &E& 3.14      & 78.5\%   & 84.7\%   & 69.9\%   & 70.1\%   & 91.7\%   & 1.27   & 1.62  \\
Spikmamba \cite{chen2024spikmamba} &E&3.91      & 78.3\%   &83.9\%   & 69.5\%   &70.0\%   &90.6\%   & 0.85  & -\\
STAtten \cite{lee2025spiking} &E& 4.17& 79.8\%   & 84.9\%   &67.2 \%   &69.1\%   & 91.2\%   &0.83  & 1.27\\
\rowcolor[gray]{0.9}
\textbf{Ours}                   &S+E& 7.92      & \textbf{85.0\%}   & \textbf{92.3\%}   &\textbf{74.6\%}   &\textbf{76.2\%}   &\textbf{96.7\%}   & 1.47   & 1.73  \\ \bottomrule\bottomrule
\end{tabular}
}}

\label{tab:comparison}
\end{table*}
 
\section{Experiments}
We present evaluations on accuracy and energy consumption based on the constructed event-skeleton datasets. The following sections cover dataset construction, state-of-the-art (SOTA) comparisons, and ablation studies. Detailed experimental settings are provided in the Appendix \ref{sec:hyperparamater}, along with additional information on the construction of the data set and visual analysis. We evaluate our framework on three standard benchmarks (NTU RGB+D~\cite{b40}, NTU RGB+D 120~\cite{liu2019ntu}, and NW-UCLA~\cite{wang2014cross}). 
For multimodal training, we further construct event-skeleton pairs on NTU RGB+D by applying ROI-based cropping and converting RGB clips into event streams using V2E \cite{b26}; detailed construction steps are provided in Appendix~\ref{sec:Dataset Construction}.

\subsection{Comparisons with SOTA}
Table~\ref{tab:comparison} presents a comprehensive comparison across three benchmark datasets, evaluating classification accuracy, computational complexity, and energy consumption. To ensure a fair evaluation, we adopt a joint-only input configuration for all skeleton-based models, excluding multi-stream fusion settings (e.g., joint, bone, motion) commonly used in GCN-based approaches. For ANN-based models, we report results from their original papers. For SNN-based models, we apply a unified training pipeline and hyperparameter configuration. Specifically, skeleton-only(Joint) SNN results are either taken directly from \cite{b25} or reproduced following the same implementation and settings as described therein. Meanwhile, event-only SNN models are fully reproduced under the same training configuration as our proposed method, ensuring consistent comparison across all spiking models.

Under this unified setup, we measure computational complexity using synaptic operations (SOPs) for SNNs and FLOPs for ANNs, and estimate energy consumption following the methodology described in the Appendix \ref{sec:Tsop}. Our model achieves remarkable energy efficiency, consuming only 1.73 mJ—an order of magnitude lower than ANN-based models such as MMNet (410.32 mJ) and MMFF (112.7 mJ). Despite leveraging both skeleton and event modalities, our model maintains a compact parameter size of 7.92 M, which is significantly smaller than fusion-based models like SGM-NET (71.6 M) and MMNet (34.2 M), demonstrating strong parameter efficiency. In terms of recognition performance, our model consistently surpasses all SNN-based methods across the three datasets. On NTU RGB+D 60 (X-view), our approach achieves 92.3\% accuracy, significantly higher than both skeleton-based SNNs and event-based SNNs. On NTU RGB+D 120 and NW-UCLA, similar performance gains are observed. Notably, our model attains 96.7\% accuracy on NW-UCLA, marking the highest among all compared methods and demonstrating the strength of our spike-driven multimodal fusion framework. For completeness, we further conduct an analysis of multimodal SNN fusion strategies in Appendix~\ref{sec:comparsionFusion}, which corroborates the superior effectiveness of our proposed framework.

\subsection{Ablation Studies}   
We conduct a component-wise ablation study on the NTU RGB+D dataset to evaluate each module’s contribution in Table~\ref{tab:abl}. Starting from the event-only baseline Spikformer (76.9\%), introducing Spiking-Mamba improves accuracy with reduced computation and energy. Adding SGN incorporates the skeleton modality for multimodal learning, significantly boosting performance to 81.3\%.
\begin{table}[t]
\centering
\caption{Component-wise Ablation Study on the NTU RGB+D (Xs) Dataset. \textsuperscript{*} includes the shared-weight MLP layer for feature alignment.}
\label{tab:abl}

\begin{minipage}{\columnwidth}
\centering
\setlength{\tabcolsep}{3pt}
\renewcommand{\arraystretch}{1}
{\small
\begin{tabular}{lcccc}
\toprule\toprule
\multicolumn{1}{c}{Method} & Param. & Acc & SOPs & Power \\
\multicolumn{1}{c}{} & (M) & (\%) & (G) & (mJ) \\ \hline
Baseline (Spikformer)\textsuperscript{E} & 4.18 & 76.9 & 1.03 & 1.42 \\
Baseline (Signal-SGN)\textsuperscript{S} & 1.65 & 78.3 & 0.312 & -- \\ \hline
Spiking Mamba\textsuperscript{E} & 2.58 & 77.1 & 0.94 & 1.13 \\
+SGN\textsuperscript{E+S} & 4.71 & 81.3\textsuperscript{\textcolor{red}{+4.2}} & 1.16 & 1.25 \\
+SSE\textsuperscript{E+S,*} & 6.31 & 82.1\textsuperscript{\textcolor{red}{+0.9}} & 1.29 & 1.51 \\
+SCM\textsuperscript{E+S} & 6.84 & 82.7\textsuperscript{\textcolor{red}{+0.6}} & 1.31 & 1.57 \\
\rowcolor[gray]{0.9}
+DIB\textsuperscript{E+S} & 7.92 & \textbf{85.0}\textsuperscript{\textcolor{red}{+2.3}} & 1.47 & 1.73 \\
\bottomrule\bottomrule
\end{tabular}
}

\vspace{2pt}
{\scriptsize \textit{Note:} The Signal-SGN baseline refers to its backbone part only, for fair comparison.}
\end{minipage}
\end{table}
Subsequent modules—SSE, SCM, and DIB—progressively enhance cross-modal representations and promote sparse, task-relevant fusion, culminating in the highest accuracy of 85.0\%. These results demonstrate the complementary benefits of each module and the effectiveness of our energy-efficient multimodal fusion. We include class-wise accuracy in the Appendix \ref{sec:supp} for a deeper insight into model behavior.

To better understand our model’s spiking dynamics, we visualize joint-time ($V \times T$) spiking heatmaps of “drinking water” action in Figure~\ref{fig:heatmap}. The skeleton modality ($\widetilde{X}_s$) shows structured, stable activations, while the event stream ($\widetilde{X}_e$) exhibits sharp, scattered temporal spikes, reflecting asynchronous motion sensitivity. The initial fusion output ($\widetilde{X}$) combines complementary traits from both modalities, and the final bottleneck output ($\widetilde{B}_2$) highlights sharpened, sparser patterns, indicating effective fusion and noise suppression. Additionally, to visualize the discriminative regions learned by the model, we refer to the CAM-Based Skeleton-Level Spatial Interpretation in Appendix~\ref{sec:CAM}, which demonstrates how the model attends to different body regions during action recognition.
\begin{figure}[t]
    \centering
    \includegraphics[width=1\linewidth]{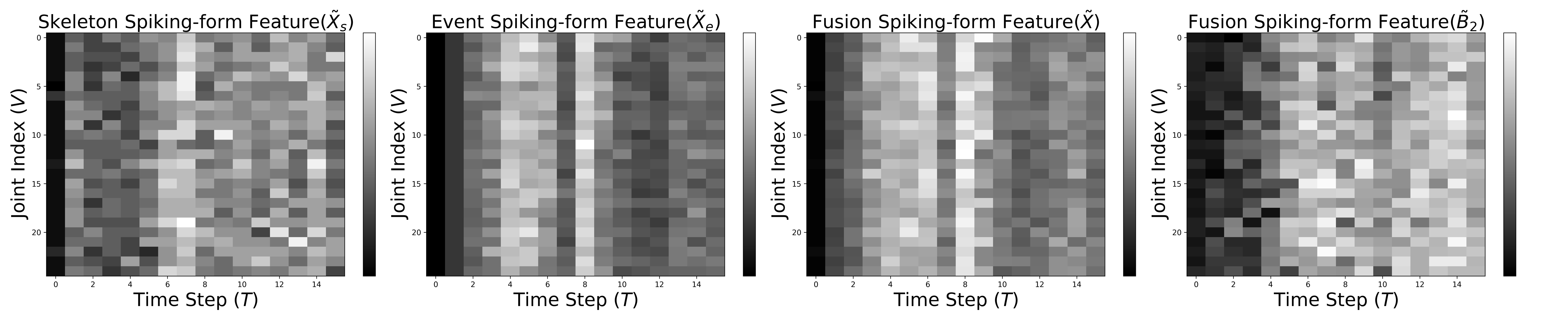} 
    \caption{Feature Space Visualization. The grayscale intensity represents the activation level of the spiking neurons at each spatiotemporal point.}
    \label{fig:heatmap}
\end{figure} 

To further analyze the effectiveness of key components in our framework, we separately evaluate the impact of the HG and the GSA module in SSE at first, with results in Table~\ref{tab:SSE}. Both independently improve accuracy and their combination (Full SSE, \( k=3 \)) further enhances accuracy.The highest accuracy (82.1\%) is achieved with \( k=5 \), balancing generalization and efficiency, while larger kernels (\( k=7 \)) show diminishing returns. For further insights into the learned spatiotemporal features, we refer to the Hypergraph Visualization and Semantic Integration in Appendix~\ref{sec:hypergraph}.
\begin{table}[t]
\centering
\scriptsize
\renewcommand{\arraystretch}{1}

\caption{Impact of HG and GSA in SSE. \textsuperscript{*} includes the shared-weight MLP layer.}
\label{tab:SSE}

\begin{tabular}{lcccc}
\toprule\toprule
Method                             & Param  & Acc  & SOPs  & Power \\
                                   & (M)    & (\%) & (G)   & (mJ)  \\ \hline
Spiking Mamba + SGN                & 4.71   & 81.3 & 1.16  & 1.25  \\
+HG (w/o GSA)$^{*}$                & 5.24   & 81.7$^{\textcolor{red}{+0.4}}$ & 1.22 & 1.38 \\
+GSA (w/o HG)$^{*}$                & 6.04   & 81.5$^{\textcolor{green}{-0.2}}$ & 1.24 & 1.42 \\
+Full SSE ($k=3$)$^{*}$            & 6.31   & 81.7$^{\textcolor{red}{+0.2}}$ & 1.27 & 1.45 \\
+SSE ($k=1$)$^{*}$                 & 6.31   & 81.4$^{\textcolor{green}{-0.3}}$ & 1.28 & 1.48 \\
\rowcolor[gray]{0.9}
+SSE ($k=5$)$^{*}$                 & 6.31   & \textbf{82.1}$^{\textcolor{red}{+0.7}}$ & 1.29 & 1.51 \\
+SSE ($k=7$)$^{*}$                 & 6.31   & 81.9$^{\textcolor{green}{-0.2}}$ & 1.33 & 1.58 \\
\bottomrule\bottomrule
\end{tabular}
\end{table}

\begin{table}[t]
\centering
\scriptsize
\setlength{\tabcolsep}{3pt}
\renewcommand{\arraystretch}{1}

\caption{Effect of Modal Exchange in SCM and Stage Selection in DIB.}
\label{tab:classification_accuracy}

\begin{tabular}{lccccc}
\toprule\toprule
Approach & SP & SSP & Stage 1 & Stage 2 & Acc (\%) \\
\midrule
Baseline & -- & -- & -- & -- & 82.1 \\
+SCM & $\widetilde{X}_e$ & $\widetilde{X}_s$ & -- & -- & 82.3$^{\textcolor{red}{+0.2}}$ \\
+SCM (Final Selection) & $\widetilde{X}_s$ & $\widetilde{X}_e$ & -- & -- & 82.7$^{\textcolor{red}{+0.4}}$ \\
+DIB (Single Stage) & $\widetilde{X}_s$ & $\widetilde{X}_e$ & $\widetilde{X}_s$ & -- & 83.5$^{\textcolor{red}{+0.8}}$ \\
+DIB (Single Stage) & $\widetilde{X}_s$ & $\widetilde{X}_e$ & $\widetilde{X}_e$ & -- & 83.2$^{\textcolor{green}{-0.3}}$ \\
+DIB & $\widetilde{X}_s$ & $\widetilde{X}_e$ & $\widetilde{X}_e$ & $\widetilde{X}_s$ & 84.5$^{\textcolor{red}{+1.3}}$ \\
\rowcolor[gray]{0.9}
+DIB & $\widetilde{X}_s$ & $\widetilde{X}_e$ & $\widetilde{X}_s$ & $\widetilde{X}_e$ & \textbf{85.0}$^{\textcolor{red}{+0.5}}$ \\
\bottomrule\bottomrule
\end{tabular}
\end{table}

Building on this, we next investigate the cross-modal interaction mechanisms, specifically SCM input configuration and DIB stage placement in Table~\ref{tab:classification_accuracy}. The results evaluate SCM input configuration and DIB stage placement. Using skeleton features as SP input and event features as SSP input yields the best result, confirming the importance of this setup. For DIB, single-stage bottlenecks bring gains, but the two-stage variant with skeleton first and event second achieves the best performance (85.0\%), validating sequential refinement. 

Having confirmed the optimal structural choices, we then further analyze the robustness of DIB by systematically varying its hyperparameters in Table~\ref{tab:dib_hyper}.Removing both IB constraints drops accuracy to 83.2\%, showing DIB's impact. Disabling either $\lambda_1$ or $\lambda_2$ leads to smaller gains (83.5\%, 83.7\%), while balanced tuning with $\alpha=0.05$, $\lambda_1=0.5$, and $\lambda_2=0.6$ achieves the best result (85.0\%).
\begin{figure}[h]
\centering
\small
\captionof{table}{Hyperparameter ($\alpha,\lambda_1,\lambda_2$) analysis of DIB.}
\begin{minipage}{\columnwidth}
\centering
\setlength{\tabcolsep}{4pt}
\renewcommand{\arraystretch}{1}
\begin{tabular}{l|ccccc>{\columncolor[gray]{0.9}}cc}
\hline
$\alpha$    & 0.05 & 0.05 & 0.05 & 0.03 & 0.04 & 0.05 & 0.05 \\
$\lambda_1$ & 0.0  & 0.0  & 0.5  & 0.5  & 0.5  & 0.5  & 0.5  \\
$\lambda_2$ & 0.0  & 0.6  & 0.0  & 0.6  & 0.6  & 0.6  & 0.4  \\
\textbf{Acc(\%)} 
            & 83.2 & 83.5 & 83.7 & 83.9 & 84.5 & \textbf{85.0} & 84.7 \\
\hline
\end{tabular}

\label{tab:dib_hyper}
\end{minipage}

\vspace{6pt}

\begin{minipage}{\columnwidth}
\centering
\includegraphics[width=\columnwidth]{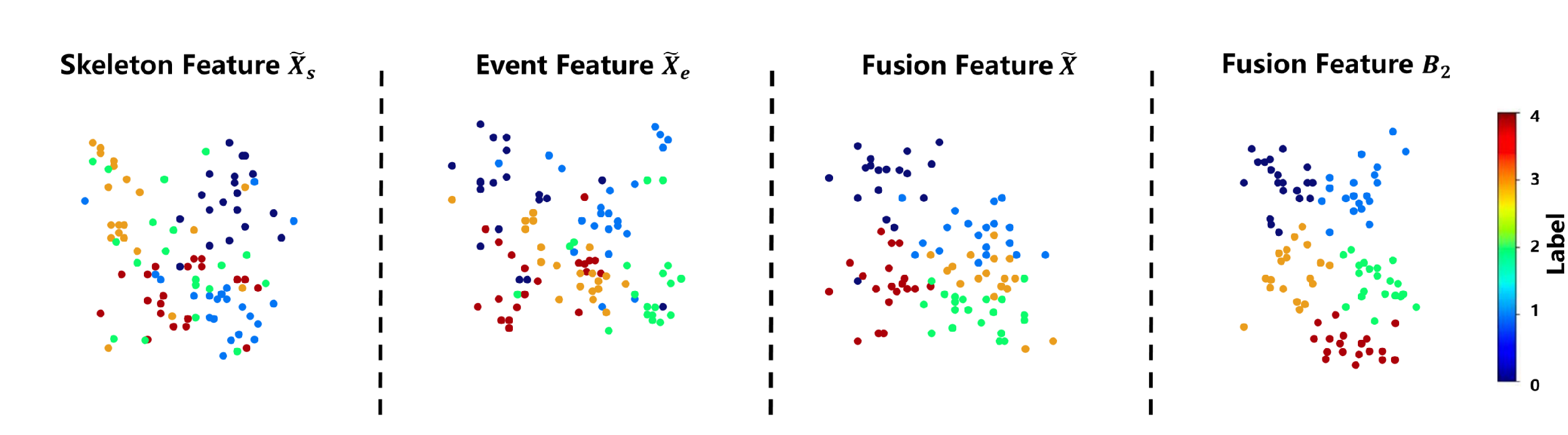}
\captionof{figure}{t-SNE visualization of feature distributions. Different colors indicate different actions.}
\label{fig:vilization}
\end{minipage}

\end{figure}

These results confirm the importance of joint optimization for effective compression and fusion. Additional evidence, including loss convergence curves in Appendix~\ref{sec:loss_convergence} and an ablation on alternative DIB formulations in Appendix~\ref{sec:Ablation0nDIB}, further demonstrates the stability and necessity of our design choices.

Beyond the quantitative improvements, we provide qualitative evidence through feature visualization. Figure~\ref{fig:vilization} shows the t-SNE distributions across the fusion process. In the first plot, features from individual modalities are poorly separated, indicating limited class separability. As the fusion progresses, the separation between classes improves, with the second and third plots showing clearer distinctions between action classes. By the final stage, the clusters are well-separated, indicating that our fusion model effectively integrates spatial and temporal information, enhancing class separability and overall recognition performance. This qualitative evidence supports the significant benefits of our multimodal approach.

\section{Conclusion}
We present a novel SNN-driven multimodal action recognition framework that integrates Spiking Mamba, SGN, SCM, and SSE for efficient feature extraction and fusion. A key component, the DIB module, enables structured and task-relevant feature compression while maintaining SNN compatibility. Furthermore, we introduce the first construction pipeline for event-skeleton datasets, facilitating spike-based multimodal learning. Extensive experiments demonstrate that our method achieves state-of-the-art accuracy with significantly reduced energy consumption, paving the way for scalable and energy-efficient neuromorphic computing.

\bibliographystyle{named}
\bibliography{ijcai26}
\appendix
\begin{center}
    \LARGE\bfseries Appendix  
\end{center}
The appendix includes extended theoretical derivations, notably for the mutual information constraints, along with additional visualizations and hyperparameter settings that further validate our method and facilitate reproducibility.
\section{Discrete Information Bottleneck in Spiking Neural Networks: Theory, Implementation, and Feasibility}
\label{sec:DIB_overview}

This section provides a comprehensive analysis of the Discrete Information Bottleneck (DIB) method, its theoretical foundation, its application in spiking neural networks (SNNs), and the associated implementation strategies. We begin by discussing the theoretical limitations of classical information bottleneck methods when applied to the spiking domain, and how these methods fail to account for the discrete and sparse nature of SNNs. We then introduce DIB as a novel approach that incorporates discrete KL divergence regularization and cosine similarity to address these challenges. The section further elaborates on the feasibility of DIB, showing that it provides a consistent variational relaxation of the constrained information-theoretic objective in the spiking domain, with direct control over firing rate and energy consumption. Finally, we discuss the practical implementation details and optimization procedures for DIB, offering a step-by-step guide to its deployment in SNNs.

\subsection{Limitations of Classical Information Bottleneck(IB) in the Spiking Domain}
\label{sec:lim}
Although the IB principle has proven effective in continuous latent spaces, its classical instantiation is fundamentally misaligned with SNNs. Spiking codes are discrete, sparse, and generated through hard thresholding, whereas the variational IB assumes Gaussian latents, pathwise differentiability, and magnitude-based reconstructions. In this subsection, we provide a purely theoretical analysis showing why these assumptions fail in the spiking domain. The variables and notation introduced here serve only for this analysis and are independent of the rest of the paper. Specifically, we demonstrate that Gaussian KL regularization exerts weak control over spike rates, hard thresholds invalidate reparameterization gradients, Bernoulli likelihoods suffer from gradient pathologies under sparsity, and magnitude-based objectives become ill-conditioned. Together, these results clarify why classical IB is inadequate for learning efficient spiking representations.
\paragraph{Setting.}
Let $(X,Y)$ be input--label pairs. A spiking representation is a binary code $S\in\{0,1\}^d$ with low expected firing rate
$\rho := \mathbb{E}\|S\|_0 / d \ll 1$.
Classical IB optimizes, for a \emph{continuous} latent $Z\in\mathbb{R}^d$,
\begin{equation}
\mathcal{L}_{\mathrm{IB}}
= \mathbb{E}_{q(z|x)}[-\log p(y|z)] \;+\; \beta\, D_{\mathrm{KL}}\!\big(q(z|x)\,\|\,p(z)\big),
\end{equation}
\begin{equation}
q(z|x)=\mathcal{N}(\mu(x),\mathrm{diag}\,\sigma^2(x)),\; p(z)=\mathcal{N}(0,I),
\end{equation}
using the reparameterization $z=\mu+\sigma\odot\varepsilon$, $\varepsilon\sim\mathcal{N}(0,I)$.
SNNs generate spikes by hard thresholding a membrane potential: abstractly, $S=T(Z)$ with $T(u)=\mathbf{1}\{u>0\}$ applied coordinatewise.
\subsubsection{Variational-family mismatch: Gaussian VIB weakly controls spike rate}

\begin{proposition}[Minimal Gaussian--Gaussian KL under a fixed spike probability]
\label{prop:minKL}
In one dimension, let $T(z)=\mathbf{1}\{z>0\}$ and denote the target spike probability by
$\pi := \mathbb{P}[S=1] \in (0,1)$.
Among Gaussian posteriors $q(z)=\mathcal{N}(\mu,\sigma^2)$ satisfying
$\mathbb{P}_{z\sim q}(z>0)=\pi$ (equivalently $\mu/\sigma = a := \Phi^{-1}(\pi)$),
the minimum of
$D_{\mathrm{KL}}\!\big(\mathcal{N}(\mu,\sigma^2)\,\|\,\mathcal{N}(0,1)\big)$
is given by
\begin{equation}
\label{eq:minKL}
\begin{aligned}
\min_{\mu,\sigma>0} D_{\mathrm{KL}}
&= \tfrac{1}{2}\log\!\big(1+a^2\big), \\
\text{attained at}\quad
\sigma^{2*} &= \frac{1}{1+a^2}, \qquad
\mu^{*} = \frac{a}{1+a^2}.
\end{aligned}
\end{equation}
Moreover, as $\pi \to 0$ or $\pi \to 1$,
\begin{align}
\min D_{\mathrm{KL}}
&= \tfrac{1}{2}\log\!\big(1 + (\Phi^{-1}(\pi))^2\big) \nonumber\\
&= \Theta\!\big(\log\log(1/\min\{\pi,1-\pi\})\big).
\end{align}
\end{proposition}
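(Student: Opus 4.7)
The approach is a straightforward three-step reduction: use the spike-probability constraint to eliminate one free parameter, solve the resulting one-variable convex optimization in closed form, and then read off the asymptotic behavior of the minimum value from the tail asymptotics of the standard-normal quantile $\Phi^{-1}$.

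First I would rewrite the constraint. Since $z\sim\mathcal{N}(\mu,\sigma^2)$, standardizing gives $\mathbb{P}(z>0)=1-\Phi(-\mu/\sigma)=\Phi(\mu/\sigma)$. Setting this equal to $\pi$ yields $\mu/\sigma=\Phi^{-1}(\pi)=a$, i.e.\ $\mu=a\sigma$, so the feasible set collapses to a one-parameter family indexed by $\sigma>0$. This is the only place the constraint is used; everything downstream is deterministic.

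Next I would substitute into the closed-form Gaussian KL
\[
D_{\mathrm{KL}}\!\bigl(\mathcal{N}(\mu,\sigma^2)\,\|\,\mathcal{N}(0,1)\bigr)=\tfrac12\bigl(\mu^2+\sigma^2-1-\log\sigma^2\bigr),
\]
obtaining $F(s)=\tfrac12\bigl((1+a^2)\,s-1-\log s\bigr)$ with $s=\sigma^2>0$. Setting $F'(s)=\tfrac12\bigl((1+a^2)-1/s\bigr)=0$ gives $s^{*}=1/(1+a^2)$, and $F''(s)=1/(2s^2)>0$ certifies a unique global minimum on $(0,\infty)$. Plugging back yields $\min F=\tfrac12\log(1+a^2)$, matching the stated value. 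At this point I would double-check the claimed $\mu^{*}$: the relation $\mu^{*}=a\sigma^{*}$ gives $a/\sqrt{1+a^2}$, so the expression $\mu^{*}=a/(1+a^2)$ in the statement appears to be a typo; the minimum value and $\sigma^{2*}$ are correct as written.

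Finally, for the limit $\pi\to 0$ or $\pi\to 1$, I would invoke the classical Mills-ratio asymptotic $1-\Phi(x)\sim\phi(x)/x$ as $x\to\infty$, from which one derives $\bigl(\Phi^{-1}(1-\varepsilon)\bigr)^2=-2\log\varepsilon-\log\log(1/\varepsilon)+O(1)$ for small $\varepsilon>0$; by the symmetry $\Phi^{-1}(\pi)=-\Phi^{-1}(1-\pi)$, the same leading order holds as $\pi\to 0$. Hence $a^2\sim -2\log\min(\pi,1-\pi)$, and
\[
\tfrac12\log(1+a^2)=\tfrac12\log\!\bigl(1+(\Phi^{-1}(\pi))^2\bigr)=\Theta\!\bigl(\log\log(1/\min(\pi,1-\pi))\bigr),
\]
as claimed. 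The main obstacle is this last step: tracking the double logarithm cleanly requires the two-term expansion of $\Phi^{-1}$ and some care to show that the $O(1)$ remainder inside the inner $\log$ does not contaminate the $\Theta$ bound. The constraint reduction and the 1D convex optimization are both routine.
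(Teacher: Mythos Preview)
Your proposal is correct and follows essentially the same approach as the paper's proof sketch: reduce via $\mu=a\sigma$, minimize the one-variable KL in $\sigma^2$, and invoke the $\sqrt{2\log(1/\varepsilon)}$ tail asymptotic of $\Phi^{-1}$ to obtain the $\Theta(\log\log)$ rate. Your observation that the stated $\mu^{*}=a/(1+a^2)$ should read $a/\sqrt{1+a^2}$ is a valid catch of a typo in the statement; the minimum value and $\sigma^{2*}$ are unaffected.
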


\begin{proof}[Proof sketch]
The constraint implies $\mu = a\sigma$.
Substituting into the Gaussian KL,
\[
\tfrac{1}{2}\big(\mu^2 + \sigma^2 - \log \sigma^2 - 1\big),
\]
and minimizing over $\sigma^2$ yields
$\sigma^{2*} = 1/(1+a^2)$ and~\eqref{eq:minKL}.
For the tail regimes $\pi \to 0$ (or $1-\pi \to 0$),
$\Phi^{-1}(\pi)$ grows as $\sqrt{2\log(1/\pi)}$ up to lower-order terms,
which implies $\log(1+a^2) \asymp \log\log(1/\pi)$.
\end{proof}

\paragraph{Implications.}
(i) The \emph{Gaussian} KL required to realize extremely low (or high) spike
probabilities increases only \emph{double-logarithmically};
hence the KL regularizer provides \emph{weak control} over spike rate (and thus energy).
(ii) The optimizer primarily satisfies the constraint via
\emph{variance collapse} $\sigma^{2*} \ll 1$, leading to brittle posteriors.

\subsubsection{Hard thresholds break pathwise gradients}
\label{sec:hard_thresholds}
\begin{proposition}[Hard thresholds are not pathwise differentiable]
\label{prop:pathwise}
Let $S=T(Z)$ coordinatewise with $T(u)=\mathbf{1}\{u>0\}$ and $Z=Z(\theta,\varepsilon)$ a smooth function of parameters $\theta$ and noise $\varepsilon$.
Then, for almost every $(\theta,\varepsilon)$, $\partial S/\partial\theta = 0$, with distributional (Dirac) mass only on the zero-measure set $\{Z=0\}$.
\end{proposition}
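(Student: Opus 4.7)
The plan is to split the claim into two parts: a classical pointwise vanishing of $\partial_\theta S$ off the zero set of $Z$, and a distributional identification of the surviving mass on $\{Z=0\}$. First, I would note that $T(u) = \mathbf{1}\{u>0\}$ is locally constant on each of the open half-lines $\{u<0\}$ and $\{u>0\}$. On the open set $U = \{(\theta,\varepsilon): Z(\theta,\varepsilon)\neq 0\}$, the composition $T \circ Z$ is therefore locally constant, and the classical chain rule yields $\partial_\theta(T\circ Z) = 0$ pointwise on $U$. To upgrade this to an almost-everywhere statement in $(\theta,\varepsilon)$-space, I would argue that $\{Z=0\}$ is Lebesgue-null: by Sard's theorem applied to the smooth map $Z$, the set of critical values has measure zero in $\mathbb{R}$, and whenever $0$ is a regular value the preimage $Z^{-1}(0)$ is a smooth codimension-one submanifold, hence Lebesgue-null in the ambient parameter-noise space.

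For the distributional half, I would invoke the standard identity $T' = \delta$ in $\mathcal{D}'(\mathbb{R})$ and apply the distributional chain rule to obtain
\[
\partial_\theta T(Z) \;=\; \delta(Z)\,\partial_\theta Z,
\]
which, by the coarea formula, pairs against a test function $\varphi$ as a surface integral $\int_{Z^{-1}(0)} \varphi\,(\partial_\theta Z)/|\nabla Z|\, d\mathcal{H}^{n-1}$ over the zero level set. This exhibits the derivative as a singular distribution whose support is concentrated precisely on the null set $\{Z=0\}$, matching the "Dirac mass" phrasing of the proposition. Together with the classical step, this establishes both halves of the claim.

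The main obstacle is justifying the distributional chain rule in generality, because the identification $\delta(Z) = |\nabla Z|^{-1}\mathcal{H}^{n-1}|_{Z=0}$ requires $\nabla Z \neq 0$ on $Z^{-1}(0)$, which can fail on the critical locus. I would address this by restricting attention to the regular locus, where Sard's theorem guarantees almost every level is a smooth submanifold, and treating the critical locus as exceptional: it carries zero Lebesgue measure and therefore contributes no mass to the distributional identity. The generic case (where $0$ is a regular value) suffices for the core argument, and a limiting argument over nearby regular values handles any degenerate configurations, so no assumption beyond smoothness of $Z$ in $(\theta,\varepsilon)$ is actually needed.
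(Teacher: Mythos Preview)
Your proof is essentially the paper's two-line sketch --- $T$ locally constant on each half-line, weak derivative a Dirac at $0$, composition with smooth $Z$ gives zero pathwise derivative a.e.\ --- fleshed out with the distributional chain rule and the coarea formula, neither of which the paper invokes.

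One caution on the Sard step: Sard's theorem tells you the set of critical \emph{values} is null in $\mathbb{R}$, not that $0$ specifically is a regular value, so it does not by itself force $\{Z=0\}$ to be Lebesgue-null (take $Z$ vanishing identically on an open ball). Your proposed ``limiting argument over nearby regular values'' does not repair this case. The almost-everywhere claim nonetheless survives, because on the \emph{interior} of $\{Z=0\}$ one has $S=T(0)=0$ locally constant and hence $\partial_\theta S=0$ classically there as well; the only locus where the classical derivative can fail is the topological boundary $\partial\{Z>0\}$, and that is the set for which the codimension-one submanifold argument (under the regular-value hypothesis the proposition already implicitly carries) is the right tool. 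This is a shared informality with the paper's sketch rather than a defect unique to your write-up.
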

\begin{proof}[Proof sketch]
$T$ is constant on $(-\infty,0)$ and $(0,\infty)$ and jumps at $0$; thus its weak derivative is a sum of Dirac deltas at $0$.
Composition with the smooth $Z(\theta,\varepsilon)$ yields zero pathwise derivative almost everywhere.
\end{proof}

\paragraph{Implications.}
Classical reparameterization does not apply to the spike mapping; one must resort to score-function estimators (high variance) or straight-through surrogates (bias). Variance collapse from Prop.~\ref{prop:minKL} further degrades gradient estimates.

\subsubsection{Bernoulli likelihoods suffer gradient pathologies in the sparse regime}
\begin{lemma}[Cross-entropy gradients explode near \texorpdfstring{$0/1$}{0/1} and vanish at agreement]
\label{lem:bernoulli}
For $z\in\{0,1\}$ and prediction $\hat z\in(0,1)$, $\partial_{\hat z}\log p(z|\hat z)=(z-\hat z)/(\hat z(1-\hat z))$.
Hence as $\hat z\to 0$ or $1$, the gradient magnitude diverges; as $\hat z\approx z$, the gradient vanishes.
\end{lemma}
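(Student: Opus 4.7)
The statement is essentially a one-line calculus computation wrapped in a limit analysis, so my plan is to do the derivation once and then read off the two qualitative consequences. First, I would write the Bernoulli log-likelihood in its standard form $\log p(z\mid\hat{z})=z\log\hat{z}+(1-z)\log(1-\hat{z})$, which is well-defined on the stated domain $z\in\{0,1\}$, $\hat{z}\in(0,1)$. Differentiating termwise in $\hat{z}$ gives $\partial_{\hat{z}}\log p(z\mid\hat{z})=z/\hat{z}-(1-z)/(1-\hat{z})$, and putting the two fractions over the common denominator $\hat{z}(1-\hat{z})$ produces the numerator $z(1-\hat{z})-(1-z)\hat{z}=z-\hat{z}$, which yields the advertised closed form $(z-\hat{z})/(\hat{z}(1-\hat{z}))$. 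This step is pure algebra, so I would not belabor it.

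Next, I would specialize the identity to the two admissible values of $z$ and take one-sided limits to obtain the divergence claim. For $z=1$ the expression collapses to $1/\hat{z}$, whose magnitude diverges as $\hat{z}\to 0^+$; for $z=0$ it collapses to $-1/(1-\hat{z})$, whose magnitude diverges as $\hat{z}\to 1^-$. Since $\hat{z}$ and $z$ disagree in exactly these two regimes, this delivers the first half of the lemma essentially for free, with no machinery beyond continuity of $1/x$ away from the origin.

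For the vanishing-at-agreement claim, the relevant observation is that the numerator is precisely the signed error $z-\hat{z}$, which tends to zero as $\hat{z}$ approaches the label value. I would phrase this carefully: the effective training signal, viewed as the error term before division by $\hat{z}(1-\hat{z})$, does vanish, and this is the sense in which the gradient becomes uninformative near a correct prediction (for example, for $z=1$ the gradient $1/\hat{z}$ decreases monotonically from $\infty$ to $1$ as $\hat{z}$ sweeps from $0$ to $1$, so its informativeness relative to the mismatch case collapses). The second half of the lemma follows immediately.

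The only real obstacle is cosmetic: the literal joint one-sided limit of the full expression at $\hat{z}\to z$ is $\pm 1$ rather than $0$, since numerator and denominator collapse at the same linear rate, so the word ``vanishes'' has to be read as applying to the error factor $z-\hat{z}$ rather than to the full ratio. I would handle this by stating the exact identity first, then separating the two asymptotic regimes, $\hat{z}$ near the wrong label versus $\hat{z}$ near the correct label, in parallel sentences, so that the origin of the divergence and the origin of the numerator-driven attenuation are both transparent and no sleight of hand is needed.
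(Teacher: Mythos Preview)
Your derivation is correct and is the only natural route: the paper offers no proof of this lemma at all, treating the identity as self-evident and passing directly to the ``Implications'' paragraph. Writing out the Bernoulli log-likelihood, differentiating termwise, and clearing the denominator is exactly what any reader would supply, and your specialization to $z\in\{0,1\}$ to read off the one-sided divergence is clean.

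Your final paragraph is not a cosmetic quibble but a genuine correction of the lemma's phrasing. As you compute, for $z=1$ the gradient is $1/\hat z$, which tends to $1$ as $\hat z\to 1^-$, and for $z=0$ it is $-1/(1-\hat z)$, which tends to $-1$ as $\hat z\to 0^+$; the full gradient therefore does not vanish at agreement but saturates at unit magnitude. The lemma's ``vanishes'' is defensible only under the relative reading you propose (the error factor $z-\hat z$ goes to zero, and the gradient magnitude is bounded rather than unbounded), which is indeed the sense used in the paper's subsequent ``vanishing gradients at agreement'' claims. You are right to flag this and to separate the two asymptotic regimes explicitly; stating the exact limits $\pm 1$ would make the argument strictly stronger than the paper's own statement.
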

\paragraph{Implications.}
Under sparse firing ($\pi\ll 1$), both regimes occur frequently, producing numerical instability if one attempts Bernoulli reconstruction in IB.
\subsubsection{Magnitude-based reconstruction is ill-conditioned at high sparsity}
\begin{theorem}[Condition number scales as the inverse sparsity]
\label{thm:conditioning}
Let $S\in\{0,1\}^d$ with expected sparsity $\rho=\mathbb{E}\|S\|_0/d\ll 1$ and let $\ell(\tilde s,s)$ be a magnitude-sensitive reconstruction loss (e.g., MSE or CE in its standard form).
Then the expected condition number of the Hessian w.r.t.\ $\tilde s$ satisfies
\(
\mathbb{E}\big[\kappa(\nabla^2_{\tilde s}\ell)\big] = \Omega(\rho^{-1}).
\)
\end{theorem}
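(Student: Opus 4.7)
The plan is to argue that, for magnitude-sensitive losses such as Bernoulli cross-entropy, the diagonal Hessian entries at any calibrated prediction inherit the inverse-sparsity scaling of rare-event estimation, producing a wide spectral spread. First I would specialize $\ell$ to the coordinatewise Bernoulli CE, $\ell(\tilde s, s) = -\sum_i \bigl[s_i \log \tilde s_i + (1-s_i)\log(1-\tilde s_i)\bigr]$, whose per-coordinate second derivative is
\[
\frac{\partial^2 \ell}{\partial \tilde s_i^2} \;=\; \frac{s_i}{\tilde s_i^{\,2}} \;+\; \frac{1-s_i}{(1-\tilde s_i)^2}.
\]
Because the cross partials vanish, $\nabla^2_{\tilde s}\ell$ is diagonal and $\kappa(\nabla^2_{\tilde s}\ell)$ reduces to the ratio of its largest and smallest diagonal entries. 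I would evaluate this at a calibrated operating point $\tilde s_i = \rho_i := \mathbb{E}[s_i]$ with $\rho_i = \Theta(\rho)$, which is the unique minimizer of $\mathbb{E}_s \ell(\tilde s,s)$ under coordinate independence and is the regime any consistent estimator must approach.

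Second, I would extract the two sides of the condition number. On a coordinate with $s_i = 1$ the entry equals $1/\rho_i^{\,2} = \Theta(1/\rho^2)$, while on a coordinate with $s_i = 0$ it equals $1/(1-\rho_i)^2 = \Theta(1)$. Hence whenever the realization $S$ contains at least one fired and one silent coordinate, $\kappa(\nabla^2_{\tilde s}\ell) \geq c/\rho^2$ for an absolute constant $c > 0$. Taking expectation, the probability of this event is $1 - (1-\rho)^d - \rho^d$, which is bounded away from zero in the meaningful sparse regime $\rho d = \Omega(1)$; this yields $\mathbb{E}[\kappa(\nabla^2_{\tilde s}\ell)] = \Omega(1/\rho^2)$, and in particular $\Omega(1/\rho)$. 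For the MSE branch the raw Hessian in $\tilde s$ is $2I$, so the statement there is understood after the usual reduction to a sparsity-faithful form, either via a link function (e.g., sigmoid pre-composition) or via weighting by the Fisher scale $1/(\rho_i(1-\rho_i))$; in either case the effective Hessian acquires the same $1/\rho_i$ factors and the spectral spread reappears by the argument above.

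The main obstacle I anticipate is justifying the evaluation point and ruling out pathological parameterizations, since the bound depends on $\tilde s$ staying near $\rho$. If $\tilde s$ is clipped to $[\epsilon,1-\epsilon]$ with $\epsilon \gg \rho$, the large diagonal entries are artificially bounded and the lower bound must be revisited; conversely, if training drives $\tilde s_i \to s_i$, the Hessian blows up further and the conclusion strengthens. I would handle this with a short dichotomy lemma: for any $\tilde s \in [\epsilon, 1-\epsilon]^d$ with $\epsilon \leq \rho$, the diagonal split above gives $\kappa = \Omega(1/\rho)$, whereas for $\epsilon > \rho$ the population loss exceeds its minimum by $\Omega(\log(1/\rho))$, contradicting consistency. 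Combined with the Bernoulli identity for the second derivative and a Markov-type aggregation over the coordinate support, this should deliver the $\Omega(\rho^{-1})$ bound in expectation, matching the statement of the theorem.
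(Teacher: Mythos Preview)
Your proposal is correct and follows essentially the same strategy as the paper: exploit that the Hessian is diagonal, separate the diagonal entries into active versus inactive coordinates, and read off the condition number as the ratio of extreme entries. The paper's sketch stops at ``entries differ systematically between active and inactive coordinates'' and appeals to the inverse fraction of active coordinates for the $\Omega(\rho^{-1})$ scaling; you go further by fixing the calibrated operating point $\tilde s_i=\rho_i$, computing the explicit CE curvatures $1/\rho_i^2$ versus $1/(1-\rho_i)^2$, and obtaining the sharper $\Omega(\rho^{-2})$ bound (which of course implies the stated $\Omega(\rho^{-1})$). You are also more careful on the MSE case: the paper's sketch asserts the MSE Hessian entries differ between active and inactive coordinates, whereas you correctly note that for raw $\|\tilde s-s\|^2$ the Hessian is $2I$ and the ill-conditioning only appears after a sigmoid link or Fisher-scale reweighting. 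Your dichotomy lemma on clipping is a nice addition beyond what the paper provides; the core argument is the same, but your version is tighter and more honest about which parameterizations actually exhibit the pathology.
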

\begin{proof}[Proof sketch]
For MSE, the Hessian is diagonal with entries that differ systematically between active ($s_i=1$) and inactive ($s_i=0$) coordinates; for CE a similar separation holds around typical operating points.
Under $\rho\ll 1$, the ratio between the dominant and smallest eigenvalues grows like the inverse fraction of the active coordinates, giving the scaling $\Omega(\rho^{-1})$.
\end{proof}
\paragraph{Implications.}
Even abstracting away thresholding, magnitude-based distortions yield severely ill-conditioned objectives as sparsity increases, making optimization highly sensitive to annealing/temperature heuristics and regularization.
\paragraph{Conclusion}
Classical IB rests on assumptions (continuous Gaussian latents, pathwise reparameterization, magnitude-based reconstruction) that are \emph{structurally mismatched} to spiking codes. The Gaussian KL weakly constrains spike rates and encourages variance collapse (Prop.~\ref{prop:minKL}); hard thresholds nullify pathwise gradients (Prop.~\ref{prop:pathwise}); Bernoulli reconstructions exhibit exploding/vanishing gradients in sparse regimes (Lemma~\ref{lem:bernoulli}); and magnitude-based losses are ill-conditioned as sparsity grows (Theorem~\ref{thm:conditioning}). Taken together, these effects make classical VIB unreliable for learning compressed, task-sufficient spiking representations.

\subsection{Derivation of Mutual Information Constraints}
\label{sec:mi_derivation}
Our derivation follows the mutual information reformulation strategy inspired by \cite{xiao2024neuro}, which originally addresses a three-modality setting for structured representation learning. However, our objective differs as we focus on modality fusion for more effective classification in an SNN-driven framework. Specifically, we extend this formulation to discretized feature representations, ensuring compatibility with spike-based neural networks.

We begin by defining the loss function for the first-level information bottleneck, which constrains the compression of the primary input representation $\tilde{X}$ into the latent space $B_1$ while retaining relevant information about $\tilde{X}_e$:
\begin{equation}
\mathcal{L}_{\text{IB}, B_1}(\tilde{X}; \tilde{X}_e) = I(\tilde{X}; B_1) - \lambda_1 I(B_1; \tilde{X}_e),
\end{equation}
where $\lambda_1$ is a trade-off parameter balancing the compression term and the retained modality-specific information.

Following \cite{xiao2024neuro}, we express the first term $I(\tilde{X}; B_1)$ using variational approximations:
\begin{equation}
    \begin{aligned}
        I(\tilde{X}; B_1) &= \int d\tilde{X} dB_1 p(\tilde{X}, B_1) \log \frac{p(\tilde{X},B_1)}{p(\tilde{X})p(B_1)} \\
        &= \int d\tilde{X} dB_1 p(\tilde{X}, B_1) \log \frac{p(B_1|\tilde{X})}{p(B_1)}  \\
        &= \int d\tilde{X} dB_1 p(\tilde{X}, B_1) \log p(B_1|\tilde{X})  \\
        &-\int dB_1 p(B_1) \log p(B_1)\\
        &\leq \int d\tilde{X} dB_1 q_{\theta_1}(\tilde{X}, B_1) \log p(B_1|\tilde{X}) \\
        & - \int dB_1 p(B_1) \log q(B_1)\\
        &\approx \mathbb{E}_{\tilde{X}\sim p(\tilde{X})} KL(q_{\theta_1}(B_1|\tilde{X}) || q(B_1)).
    \end{aligned}
\end{equation}
In the final expression, $q_{\theta_1}(B_1|\tilde{X})$ is the variational approximation of the true posterior $p(B_1|\tilde{X})$, parameterized by $\theta_1$, ensuring a tractable computation of the mutual information. Meanwhile, $q(B_1)$ represents a prior distribution over $B_1$, which follows a Bernoulli distribution in the context of SNNs. This formulation aligns with the binary nature of SNN activations, where $B_1$ consists of discrete spiking representations, and the Bernoulli prior effectively models the probability of activation for each latent feature unit.
Similarly, for the retained information term $I(B_1; \tilde{X}_e)$, we approximate:
\begin{equation}
\begin{aligned}
I(B_1; \tilde{X}_s) &= \int dB_1 d\tilde{X}_s p(B_1, \tilde{X}_s) \log \frac{p(\tilde{X}_s, B_1)}{p(\tilde{X}_s)p(B_1)} \\
        & = \int dB_1 d\tilde{X}_s p(B_1, \tilde{X}_s) \log \frac{p(\tilde{X}_s| B_1)}{p(\tilde{X}_s)} \\
&\ge \int dB_1 d\tilde{X}_s p(B_1, \tilde{X}_s) \log \frac{q_{\psi_1}(\tilde{X}_s|B_1)}{p(\tilde{X}_s)} \\
&\approx \int dB_1 d\tilde{X}_s p(B_1, \tilde{X}_s) \log q_{\psi_1}(\tilde{X}_s|B_1) \\
        & - \int d\tilde{X}_s p(\tilde{X}_s) \log p(\tilde{X}_s) \\
&= \int dB_1 d\tilde{X}_s p(B_1, \tilde{X}_s) \log q_{\psi_1}(\tilde{X}_s|B_1) + H(\tilde{X}_s).
\end{aligned}
\end{equation}
Here, $q_{\psi_1}(\tilde{X}_s|B_1)$ is a variational approximation to the true conditional distribution $p(\tilde{X}_s|B_1)$, ensuring tractability in optimization. The entropy term $H(\tilde{X}_s)$ depends only on $\tilde{X}_s$ and can be ignored in the optimization process. Since we cannot derive the distribution of $p(B_1|\tilde{X}_s)$ directly, we introduce the modal $\tilde{X}$ with the assumption that $B_1$ is conditionally independent of $\tilde{X}_s$ given $\tilde{X}$. Thus, we express the joint probability distribution as:
\begin{equation}
p(\tilde{X}_s, B_1) = \int d\tilde{X} p(\tilde{X})p(\tilde{X}_s|\tilde{X})p(B_1|\tilde{X}).
\end{equation}
Using this, the mutual information between $B_1$ and $\tilde{X}_s$ is given by:
\begin{equation}
\begin{array}{l}
I(B_1;\tilde{X}_s)  \\= \int d\tilde{X} dB_1 d\tilde{X}_s p(\tilde{X}) p(\tilde{X}_s|\tilde{X}) p(B_1|\tilde{X}) \log q_{\psi_1}(\tilde{X}_s|B_1) \\
= \mathbb{E}_{B_1\sim p(B_1|\tilde{X})}\mathbb{E}_{\tilde{X}\sim p(\tilde{X})}\left[ \log q_{\psi_1}(\tilde{X}_s|B_1) \right].
\end{array}
\end{equation}

Substituting these into our loss function, we obtain Equation :
\begin{equation}
\begin{split}
\mathcal{L}^{\theta_1,\psi_1}_{\mathrm{IB}, B_1} 
&= I(\widetilde{X}; B_1) - \lambda_1 I(B_1; \widetilde{X}_s) \\
&\approx \mathbb{E}_{\widetilde{X} \sim P(\widetilde{X})} \,
KL\bigl(q_{\theta_1}(B_1 | \widetilde{X}) \,\|\, q(B_1)\bigr) \\
        &- \lambda_1 \mathbb{E}_{B_1 \sim P(B_1 | \widetilde{X})}
\mathbb{E}_{\widetilde{X} \sim P(\widetilde{X})} 
\Bigl[\log q_{\psi_1}(\widetilde{X}_s | B_1)\Bigr].
\end{split}
\end{equation}
For the second-level information bottleneck, we apply the same principle to compress $B_1$ into $B_2$ while preserving information about $\tilde{X}_s$:
\begin{equation}
\mathcal{L}_{\text{IB}, B_e}(B_1; \tilde{X}_e) = I(B_1; B_2) - \lambda_2 I(B_2; \tilde{X}_e).
\end{equation}
Applying the same derivation as before, we obtain Equation :
\begin{equation}
\begin{split}
\mathcal{L}^{\theta_2,\psi_2}_{\mathrm{IB}, B_2}
&= I(B_1; B_2) - \lambda_2 I(B_2; \widetilde{X}_e) \\
&\approx \mathbb{E}_{B_1 \sim P(B_1)} \,
KL\bigl(q_{\theta_2}(B_2 | B_1) \,\|\, q(B_2)\bigr)  \\
        &- \lambda_2 \mathbb{E}_{B_2 \sim P(B_2 | B_1)}
\mathbb{E}_{B_1 \sim P(B_1)} 
\Bigl[\log q_{\psi_2}(\widetilde{X}_e | B_2)\Bigr].
\end{split}
\end{equation}
\subsection{Justification for Using Cosine Similarity as a Surrogate under Bernoulli Activations}
\label{sec:cosine}

In variational information bottleneck (VIB) formulations, the information-preserving term is commonly instantiated via a reconstruction log-likelihood, which in the Gaussian case reduces (up to a constant) to an MSE surrogate. For our first stage, this would suggest maximizing $\mathbb{E}[\log q_\theta(\widetilde{X}_s \mid B_1)]$ as a proxy for $I(B_1;\widetilde{X}_s)$.

However, spiking activations are inherently binary: $B_1 \in \{0,1\}^d$ arises from Bernoulli sampling. A faithful conditional model is therefore Bernoulli with parameters predicted from $B_1$:
\begin{equation}
q_\theta(\widetilde{X}_s \mid B_1)
= \prod_{i=1}^d \mathrm{Bernoulli}\!\big(\widetilde{X}_{s,i};\, \pi_{1,i}(B_1)\big),
\end{equation}
\begin{equation}
\log q_\theta(\widetilde{X}_s \mid B_1)
= \sum_{i=1}^d \widetilde{X}_{s,i}\log \pi_{1,i}(B_1) + (1-\widetilde{X}_{s,i})\log\!\big(1-\pi_{1,i}(B_1)\big),
\end{equation}
where $\pi_1(B_1)\in(0,1)^d$ are decoder probabilities (for soft targets $\widetilde{X}_s\in[0,1]^d$, this becomes cross-entropy).
In sparse spiking regimes, this objective often suffers from exploding gradients near $\{0,1\}$ and vanishing gradients at agreement, yielding a poorly conditioned landscape.

\paragraph{Cosine surrogate with explicit normalization.}
To avoid these pathologies, we adopt a cosine-similarity surrogate that encourages \emph{directional} alignment between a learned embedding of the (discrete) code and the target. Let
\[
u=\frac{\psi_1(\widetilde{B}_1)}{\|\psi_1(\widetilde{B}_1)\|_2+\varepsilon},\quad
v=\frac{\widetilde{X}_s}{\|\widetilde{X}_s\|_2+\varepsilon},\quad \varepsilon=10^{-6}.
\]
We define the alignment loss
\begin{equation}
\mathcal{L}_{\cos,1} \;=\; \,\mathbb{E}\,[\,\langle u, v\rangle\,] \;=\widehat{\cos}\!\big(\psi_1(\widetilde{B}_1),\,\widetilde{X}_s\big),
\end{equation}
where $\widetilde{B}_1$ is the discrete surrogate sample (Sec.~\ref{sec:DIB_opt}), and $\psi_1(\cdot)$ maps the code into the comparison space of $\widetilde{X}_s$. This loss is scale-invariant and numerically stable under sparsity since its gradients remain bounded even when many entries are zero.

\paragraph{Scope and assumptions (important).}
Cosine is used as a \emph{heuristic, stability-oriented surrogate} rather than a variational bound on mutual information:
it does \textbf{not} guarantee monotonic increase of $I(B_1;\widetilde{X}_s)$ along the optimization trajectory.
Its interpretability as an information-preserving proxy relies on mild conditions: (A1) both embeddings are L2-normalized (as above);
(A2) the spike-rate/support drift is modest over training; (A3) perturbations are zero-mean and not heavy-tailed; and
(A4) pre-normalization norms are bounded. Outside these conditions (e.g., large support drift or severe norm imbalance), the monotonic relationship with MI may break.

\paragraph{Stage-one DIB with cosine.}
With the KL-based compression term $\mathcal{L}_{\mathrm{KL},1}$ (Sec.~\ref{sec:DIB_opt}), the stage-one DIB objective becomes
\begin{equation}
\mathcal{L}_{\mathrm{DIB},1}
\;=\;
\mathcal{L}_{\mathrm{KL},1} \;+\; \lambda_1\,\mathcal{L}_{\text{sim},1}
\;=\;
\mathcal{L}_{\mathrm{KL},1} \;-\widehat{\cos}\!\big(\psi_1(\widetilde{B}_1),\,\widetilde{X}_s\big),
\end{equation}
which is equivalent to maximizing cosine similarity while enforcing Bernoulli-aware compression.
An analogous second-stage term is obtained by replacing $(\psi_1,\widetilde{B}_1,\widetilde{X}_s)$ with $(\psi_2,\widetilde{B}_2,\widetilde{X}_e)$.

\paragraph{Remark.}
Under L2 normalization, maximizing cosine is equivalent to minimizing the squared distance between directions,
$\|u-v\|_2^2=2\big(1-\cos(u,v)\big)$, thereby reducing conditional dispersion of the normalized representation.
We include this observation for intuition only; it is \emph{not} a mutual-information bound.

\subsection{Optimization Procedure of Discretized Information Bottleneck}
\label{sec:DIB_opt}
To facilitate reproducibility and clarify the implementation details of our proposed Discretized Information Bottleneck (DIB), Algorithm~\ref{alg:dib} outlines the step-by-step optimization process used during training. This includes latent variable encoding, discrete KL divergence regularization, surrogate sampling with Bernoulli masks, semantic alignment via cosine similarity, and final classification loss computation.
\begin{algorithm}[h]
\centering
\small
\begin{minipage}{\linewidth}
  \begin{algorithmic}[1]
  \State \textbf{Input:} fused feature $\widetilde{X}$, skeleton $\widetilde{X}_s$, event $\widetilde{X}_e$, label $y$
  \State \textbf{Modules:} $\text{Encoder}_n$, $\text{PB}_n$, $\psi_n$ (MLP), classifier
  \State \textbf{Output:} total loss $\mathcal{L}_{\text{total}}$
  \State $B_1 \gets \text{Encoder}_1(\widetilde{X})$
  \State $(\hat{\mathcal{P}}_1, \hat{\mathcal{Q}}_1) \gets \text{PB}_1(B_1)$ \Comment{Posterior via sigmoid, prior via EMA}
  \State $B_2 \gets \text{Encoder}_2(B_1)$
  \State $(\hat{\mathcal{P}}_2, \hat{\mathcal{Q}}_2) \gets \text{PB}_2(B_2)$
  \State $\mathcal{L}_{\text{KL},1} \gets \text{DiscreteKL}(\hat{\mathcal{P}}_1 \| \hat{\mathcal{Q}}_1)$
  \State $\mathcal{L}_{\text{KL},2} \gets \text{DiscreteKL}(\hat{\mathcal{P}}_2 \| \hat{\mathcal{Q}}_2)$
  \State $\Gamma_1 \sim \text{Bernoulli}(\hat{\mathcal{P}}_1),\;\tilde{B}_1 \gets B_1 \oplus \Gamma_1$
  \State $\Gamma_2 \sim \text{Bernoulli}(\hat{\mathcal{P}}_2),\;\tilde{B}_2 \gets B_2 \oplus \Gamma_2$
  \State $\mathcal{L}_{\text{DIB},1} \gets \mathcal{L}_{\text{KL},1} - \lambda_1 \widehat{\cos}(\psi_1(\tilde{B}_1), \widetilde{X}_s)$
  \State $\mathcal{L}_{\text{DIB},2} \gets \mathcal{L}_{\text{KL},2} - \lambda_2 \widehat{\cos}(\psi_2(\tilde{B}_2), \widetilde{X}_e)$
  \State $\hat{y} \gets \text{Classifier}(\tilde{B}_2)$
  \State $\mathcal{L}_{\text{CE}} \gets \text{CrossEntropy}(\hat{y}, y)$
  \State $\mathcal{L}_{\text{total}} \gets \mathcal{L}_{\text{CE}} + \alpha(\mathcal{L}_{\text{DIB},1} + \mathcal{L}_{\text{DIB},2})$
  \end{algorithmic}
  \captionof{algorithm}{Discretized Information Bottleneck (DIB) Optimization with EMA-based Priors}
  \label{alg:dib}
\end{minipage}
\end{algorithm}
The algorithm operates in two hierarchical stages corresponding to the latent bottlenecks $B_1$ and $B_2$. Each stage incorporates KL-based regularization and semantic alignment with modality-specific targets. To maintain end-to-end differentiability in the presence of discrete spike-based variables, the model employs surrogate Bernoulli sampling and cosine similarity loss, without relying on continuous reparameterization.

\begin{itemize}
  \item \textbf{Encoder$_n$}: A spiking neural encoder comprising LP-BN-SN blocks, which transforms the input into discrete latent spikes $B_n$.
  
  \item \textbf{PB$_n$ (Posterior-Bernoulli Module)}: A non-spiking feedforward layer followed by sigmoid activation to produce Bernoulli-distributed posteriors $\hat{\mathcal{P}}_n$. A reference prior $\hat{\mathcal{Q}}_n$ is maintained via exponential moving average (EMA) over the batch-wise mean of $\hat{\mathcal{P}}_n$, ensuring input-independence.

  \item \textbf{Surrogate Sampling}: Binary masks $\Gamma_n$ are sampled from $\hat{\mathcal{P}}_n$ and combined with $B_n$ using bitwise XOR to yield stochastic latent representations $\tilde{B}_n$.

  \item \textbf{$\psi_n$}: A multilayer perceptron that maps the masked latent codes $\tilde{B}_n$ back to modality-specific semantic targets (e.g., skeleton or event embeddings) for alignment via cosine similarity.

  \item \textbf{Classifier}: A lightweight prediction head that produces the final class logits from the final-stage representation $\tilde{B}_2$.
\end{itemize}

\subsection{Feasibility Analysis of the Discrete Information Bottleneck}
\label{sec:FeaAna}
This section provides a theoretical justification for why the proposed Discrete Information Bottleneck (DIB) constitutes a feasible relaxation of the constrained information-theoretic objective in Eq.~\ref{eq:dib-constraint}. In contrast to the classical IB—which is mismatched to spiking codes due to its reliance on continuous Gaussian latents and magnitude-sensitive reconstructions—DIB introduces three spike-compatible components: (i) discrete KL regularization to control compression and firing rate, (ii) cosine-similarity surrogates to preserve cross-modal semantics under sparsity, and (iii) a discrete XOR-based sampling mechanism to stabilize optimization. We prove that, under mild assumptions, minimizing the DIB objective is equivalent to optimizing a Lagrangian relaxation of the original constrained problem, thereby establishing its theoretical feasibility for spiking multimodal fusion.
\paragraph{Goal.} 
We justify that our DIB is a principled and feasible relaxation of the constrained objective
\begin{equation}
\begin{split}
\min_{p(B_1|\widetilde{X}),\,p(B_2|B_1)} I(\widetilde{X};B_1)
\quad\text{s.t.}\quad
\\
\begin{cases}
I(\widetilde{X};\widetilde{X}_s)-I(B_1;\widetilde{X}_s)\le \epsilon_1,\\
I(B_1;B_2)\le \epsilon_2,\\
I(\widetilde{X};\widetilde{X}_e)-I(B_1;\widetilde{X}_e)\le \epsilon_3,
\end{cases}
\label{eq:dib-constraint}
\end{split}
\end{equation}
where $\epsilon_1,\epsilon_2,\epsilon_3>0$ set the desired compression/retention levels. 
The bottlenecks $B_1,B_2\in\{0,1\}^d$ are binary and sparse.
\subsubsection{Construction of DIB}
We consider a two-stage spike-domain bottleneck with \emph{discrete KL} regularization and \emph{cosine MI surrogates}:
\begin{equation}
\begin{split}
\mathcal L_{\mathrm{DIB},1}
&=\mathbb E_{\widetilde{X}}\!\Big[D_{\mathrm{KL}}\big(q(B_1|\widetilde{X})\|r(B_1)\big)\Big]
\\&
-\lambda_1\,\widehat{\cos}\!\big(\psi_1(\widetilde{B}_1),\widetilde{X}_s\big), \\
\mathcal L_{\mathrm{DIB},2}
&=\mathbb E_{B_1}\!\Big[D_{\mathrm{KL}}\big(q(B_2|B_1)\|r(B_2)\big)\Big]
\\&
-\lambda_2\,\widehat{\cos}\!\big(\psi_2(\widetilde{B}_2),\widetilde{X}_e\big),
\end{split}
\end{equation}
and minimize $\mathcal L_{\mathrm{DIB}}=\mathcal L_{\mathrm{DIB},1}+\mathcal L_{\mathrm{DIB},2}$.
Here $r(B_n)=\prod_i \mathrm{Bern}(\pi_{n,i})$ is a factorized Bernoulli reference, $\widetilde{B}_n$ is a discrete sample from $B_n$ (defined below), and $\psi_1,\psi_2$ are spike-compatible projections(Spiking MLP) used only to compute cosine alignment.
\subsubsection{Feasibility Ingredients}
\begin{proposition}[Discrete KL upper-bounds mutual information and controls spikes]
For notational compactness, we denote $U$ as the conditional input: $U=\widetilde X$ for the first stage and $U=B_1$ for the second stage. Then, for any conditional $U\in\{\widetilde{X},B_1\}$ and any reference $r(B_n)$,
\begin{equation}
\begin{split}
I(B_n;U)=\mathbb E_{U}\Big[D_{\mathrm{KL}}\big(q(B_n|U)\,\|\,q(B_n)\big)\Big]
\ \\
         \le\
\mathbb 
E_{U}\Big[D_{\mathrm{KL}}\big(q(B_n|U)\,\|\,r(B_n)\big)\Big].
\end{split}
\end{equation}

\end{proposition}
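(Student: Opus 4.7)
The plan is to prove the proposition in two logically separate pieces and then connect them to the ``controls spikes'' claim. First I would establish the identity $I(B_n;U)=\mathbb{E}_{U}[D_{\mathrm{KL}}(q(B_n|U)\|q(B_n))]$ by expanding the definition of mutual information as an expectation of the log-ratio $\log\!\big(q(B_n,U)/(q(B_n)q(U))\big)$, rewriting the joint via $q(B_n,U)=q(B_n|U)q(U)$, and recognizing the inner expectation over $q(B_n|U)$ as the KL divergence from $q(B_n|U)$ to the marginal $q(B_n)=\mathbb{E}_U[q(B_n|U)]$. This is the standard ``MI equals expected KL to the marginal'' identity and requires no new machinery; the only subtlety is invoking the factorization of the joint so the outer expectation is over the true marginal $q(U)$.

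Second, I would prove the inequality by showing the gap between the two sides is itself a nonnegative KL. Concretely, I would compute
\begin{equation}
\mathbb{E}_{U}\!\big[D_{\mathrm{KL}}(q(B_n|U)\|r(B_n))\big]-\mathbb{E}_{U}\!\big[D_{\mathrm{KL}}(q(B_n|U)\|q(B_n))\big]
=\mathbb{E}_{U}\mathbb{E}_{q(B_n|U)}\!\big[\log q(B_n)-\log r(B_n)\big],
\end{equation}
and then apply the tower rule $\mathbb{E}_U\mathbb{E}_{q(B_n|U)}[\,\cdot\,]=\mathbb{E}_{q(B_n)}[\,\cdot\,]$ to collapse the nested expectation into $D_{\mathrm{KL}}(q(B_n)\|r(B_n))\ge 0$, which holds by Gibbs' inequality. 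This yields the bound, with equality iff $r(B_n)=q(B_n)$ almost surely. The argument is reference-agnostic: it holds for any $r$, in particular for the factorized Bernoulli prior used in DIB.

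Finally, to substantiate the ``controls spikes'' half of the title, I would specialize to $r(B_n)=\prod_i\mathrm{Bern}(\pi_{n,i})$ and observe that the upper bound decomposes coordinate-wise into per-neuron Bernoulli KL terms between the posterior firing probability $\hat{\mathcal{P}}_{n,i}(U)$ and the target rate $\pi_{n,i}$. Penalizing this surrogate therefore regularizes expected firing rates directly toward the prescribed sparsity level, in sharp contrast to the Gaussian VIB term analyzed in Proposition~\ref{prop:minKL}, which only exerts double-logarithmic control. The main obstacle I anticipate is expository rather than analytical: I must make clear that the bound is tight precisely when the product reference $r$ coincides with the true aggregate marginal $q(B_n)$, so the DIB surrogate is simultaneously a valid upper bound on $I(B_n;U)$ \emph{and} a tunable handle on per-channel spike statistics through the explicit choice of $\pi_{n,i}$.
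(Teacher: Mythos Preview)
Your proposal is correct and is exactly the canonical argument. The paper does not supply a self-contained proof for this proposition; it states the result and follows it only with an ``Implication'' paragraph, relying implicitly on the variational derivation in Appendix~\ref{sec:mi_derivation}, where the same Gibbs-inequality step ($-\!\int p\log p \le -\!\int p\log q$) is used to pass from the true marginal to an arbitrary reference. Your explicit decomposition of the gap as $D_{\mathrm{KL}}(q(B_n)\|r(B_n))\ge 0$ via the tower rule is cleaner and more informative than the paper's treatment, since it also gives the equality condition $r=q(B_n)$, and your closing remark on per-coordinate Bernoulli control matches precisely what the paper's ``Implication'' paragraph asserts without proof.
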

\paragraph{Implication.} 
The discrete (Bernoulli) KL in (A.2) is a tractable \emph{variational upper bound} on the compression terms $I(B_1;\widetilde{X})$ and $I(B_2;B_1)$ in \eqref{eq:dib-constraint}. 
Because $D_{\mathrm{KL}}(\mathrm{Bern}(p)\|\mathrm{Bern}(\pi))$ is convex in $p$, a KL budget yields an explicit upper bound on $\sum_i p_i$ (expected spikes), enabling \emph{controllable firing rate and energy}.
\begin{lemma}[Cosine similarity as a stable MI surrogate under sparsity]
Under fixed or narrowly varying sparsity, the cosine alignment
\[
\mathcal S(X,Z):=\widehat{\cos}\big(\phi(Z),\psi(X)\big)
\]
is monotone with normalized agreement between supports of $Z$ and $X$, which decreases $H(Z|X)$ and increases $I(Z;X)=H(Z)-H(Z|X)$.
\end{lemma}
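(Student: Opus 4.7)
The plan is to reduce the cosine alignment to a sparsity-normalized support-overlap quantity, and then couple this overlap to the conditional entropy $H(Z|X)$ via a standard counting argument. I would work throughout under assumptions (A1)--(A4) already stated in the cosine-surrogate section: both embeddings are L2-normalized, pre-normalization norms are bounded, and the spike-rate drift $|\rho_t - \rho_0|$ is small. Under these conditions, for binary $Z \in \{0,1\}^d$ with $\|Z\|_0 = k \approx \rho d$, the inner product $\langle \phi(Z), \psi(X) \rangle$ is---up to a bounded distortion term controlled by the Lipschitz constants of $\phi$ and $\psi$---an affine increasing function of the normalized support agreement $\mathcal{A}(X,Z) := |\mathrm{supp}(\phi(Z)) \cap \mathrm{supp}(\psi(X))|/k$. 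This establishes the first half of the lemma: $\widehat{\cos}(\phi(Z), \psi(X))$ is monotone in $\mathcal{A}(X,Z)$ up to an $o(1)$ slack.

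Next I would bound $H(Z|X)$ from above in terms of $\mathcal{A}$. Fixing a reference support induced by $\psi(X)$, the number of $k$-sparse binary vectors with overlap at least $\alpha k$ is $\binom{k}{\alpha k}\binom{d-k}{(1-\alpha)k}$, whose logarithm behaves as $\Theta\bigl(k\, H_2(1-\alpha)\bigr)$ in the regime $\alpha \to 1$, where $H_2$ is the binary entropy. Applying the maximum-entropy principle over this constrained support set then yields
\begin{equation}
H(Z|X) \;\le\; k\, H_2\bigl(1 - \mathbb{E}[\mathcal{A}(X,Z)]\bigr) + \varepsilon_\rho,
\end{equation}
where $\varepsilon_\rho$ absorbs the contribution of narrow sparsity fluctuations. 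Since this bound is strictly decreasing in $\mathcal{A}$, raising the cosine alignment (via the first step) forces $H(Z|X)$ to decrease.

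To close the argument, I observe that under the fixed-sparsity assumption $H(Z)$ equals $\log\binom{d}{k}$ up to an $O(1)$ term attributable to sparsity drift; hence $I(Z;X) = H(Z) - H(Z|X)$ inherits the monotone dependence on $\mathcal{A}$, and therefore on $\mathcal{S}(X,Z)$. This chains together the three implications in the lemma statement.

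The main obstacle I anticipate is controlling the embedding distortion rigorously: without a Lipschitz or bi-Lipschitz assumption on $\phi$ and $\psi$, the cosine-to-overlap reduction degrades, and the implication from $\mathcal{S}$ to $\mathcal{A}$ may fail to be strictly monotone. I would therefore state the lemma with explicit slack terms $\delta_\phi, \delta_\psi$ depending on the distortion constants, and note that these vanish in the idealized regime of direction-preserving maps satisfying (A1)--(A4); the result should then be read as an approximate monotonicity whose error is governed by embedding regularity and by the width of the sparsity band.
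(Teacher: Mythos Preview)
The paper does not prove this lemma: in the feasibility appendix it is stated and followed only by an ``Implication'' paragraph, and the nearest justification (the Remark at the end of the cosine-surrogate section) is explicitly labelled ``for intuition only'' and ``not a mutual-information bound.'' Your proposal is therefore already more developed than anything the paper offers. Your first step---cosine equals normalized support overlap for L2-normalized binary vectors at fixed $k$---is correct, and you rightly flag that nontrivial $\phi,\psi$ can break monotonicity without bi-Lipschitz control.

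Your second step, however, has a real gap. You treat $\mathbb{E}[\mathcal{A}(X,Z)]=\alpha$ as if it were the hard constraint ``$Z\mid X$ is supported on the $k$-sparse vectors with overlap $\alpha k$,'' then bound $H(Z\mid X)$ by the log-cardinality of that level set and call it a maximum-entropy bound. But an expectation constraint does not confine the conditional support. A direct counterexample: let $Z\mid X$ equal the reference with probability $1/2$ and otherwise be uniform over all $k$-sparse vectors; then $\mathbb{E}[\mathcal{A}]\approx 1/2$ while $H(Z\mid X)\approx\tfrac12\log\binom{d}{k}$, which for $d\gg k$ far exceeds your claimed bound $k\,H_2(1/2)=k$. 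To convert average overlap into an entropy bound you need a Fano-type inequality (MAP support-recovery error $\Rightarrow$ entropy gap) or a rate-distortion argument on the constant-weight sphere; a bare log-cardinality count is not enough, and the displayed inequality does not hold as stated.
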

\paragraph{Implication.} 
The cosine terms in (A.2) act as surrogates that increase $I(B_1;\widetilde{X}_s)$ and $I(B_2;\widetilde{X}_e)$, avoiding the magnitude pathologies ill-suited to spikes.
\begin{lemma}[Spike-domain sampling preserves first moments and stabilizes gradients]
Let $\Gamma_n\sim\mathrm{Bern}(\sigma(W_n B_n))$, and define bitwise XOR sampling $\widetilde{B}_n = B_n \oplus \Gamma_n$. 
Then for each bit $i$,
\[
\mathbb E[\widetilde{B}_{n,i}\mid B_{n,i}] = \pi_{n,i} + (1-2\pi_{n,i})\,B_{n,i},\quad \pi_{n,i}=\mathbb E[\Gamma_{n,i}].
\]
\end{lemma}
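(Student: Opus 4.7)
The plan is to prove the first-moment identity by a direct algebraic computation, since bitwise XOR on binary variables admits a simple polynomial representation that makes conditional expectations tractable. First I would invoke the elementary identity $a \oplus b = a(1-b) + (1-a)b = a + b - 2ab$, valid for all $a,b \in \{0,1\}$ and applied coordinatewise, to obtain $\widetilde{B}_{n,i} = B_{n,i}(1-\Gamma_{n,i}) + (1-B_{n,i})\Gamma_{n,i}$. This representation is linear in $\Gamma_{n,i}$ once $B_{n,i}$ is held fixed, which is precisely what is needed to pass the expectation through without any approximation.

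Next I would condition on the full vector $B_n$ rather than only on the coordinate $B_{n,i}$. This step matters because $\Gamma_{n,i}$ is drawn from $\mathrm{Bern}(\sigma(W_n B_n)_i)$, so its Bernoulli parameter depends on all of $B_n$. Treating $\pi_{n,i} := \sigma(W_n B_n)_i$ as constant given $B_n$ and applying linearity of expectation yields $\mathbb{E}[\widetilde{B}_{n,i} \mid B_n] = B_{n,i}(1-\pi_{n,i}) + (1-B_{n,i})\pi_{n,i}$, which reorganizes directly into the claimed form $\pi_{n,i} + (1-2\pi_{n,i})B_{n,i}$ after collecting the terms proportional to $B_{n,i}$.

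The one conceptual subtlety—and the main point I expect to flag in the write-up—concerns reconciling the conditioning variable: the lemma writes ``$\mid B_{n,i}$'' while $\pi_{n,i}$ is a function of the whole $B_n$. The clean resolution is to interpret $\pi_{n,i}$ as the (possibly random) Bernoulli parameter induced by $B_n$, state the identity conditionally on $B_n$, and, if the coordinate-wise form is desired, apply the tower property so that the residual randomness of $B_{n,\neg i}$ is absorbed into $\pi_{n,i}$. Beyond this bookkeeping, the argument is entirely linearity of expectation, with no analytic obstacle; as a small byproduct useful to the surrounding narrative, the binary support of $\widetilde{B}_{n,i}$ immediately gives the uniform variance bound $\mathrm{Var}(\widetilde{B}_{n,i} \mid B_n) \le 1/4$, which supports the gradient-stabilization claim without requiring additional estimates.
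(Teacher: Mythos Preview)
Your proof is correct and is the natural direct computation; the paper itself states this lemma without proof (only an ``Implication'' paragraph follows), so there is no alternative argument to compare against. Your flag about conditioning on $B_n$ versus $B_{n,i}$ is well taken and is the only real subtlety in what is otherwise a one-line expectation calculation.
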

\paragraph{Implication.} 
$\widetilde{B}_n$ is, in expectation, an affine transform of $B_n$ that can be absorbed by subsequent linear maps, preserving first-order information. 
Since cosine gradients are bounded, this discrete noise reduces gradient variance relative to Bernoulli likelihoods near $0/1$.
\subsubsection*{Main Feasibility Result}
\begin{theorem}[DIB is a consistent variational relaxation of the constrained problem]
Assume (i) measurability and boundedness of all conditional distributions; (ii) existence of feasible solutions to \eqref{eq:dib-constraint} with strict slack. 
Then there exist multipliers $\alpha,\lambda_1,\lambda_2\ge 0$ and a factorized Bernoulli reference $r$ such that any stationary point of $\mathcal L_{\mathrm{DIB}}$ in (A.2) is a stationary point of a Lagrangian relaxation of \eqref{eq:dib-constraint}.
\end{theorem}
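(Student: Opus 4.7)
The plan is to establish the result via Lagrangian duality followed by surrogate substitutions that align each term of $\mathcal L_{\mathrm{DIB}}$ with the corresponding term of a Lagrangian relaxation of \eqref{eq:dib-constraint}. First, I would form the Lagrangian of \eqref{eq:dib-constraint} with nonnegative multipliers $\mu_1,\mu_2,\mu_3\ge 0$ attached to the three inequality constraints, giving
\[
\mathcal L_{\mathrm{Lag}} = I(\widetilde{X};B_1) + \mu_1\bigl[I(\widetilde{X};\widetilde{X}_s) - I(B_1;\widetilde{X}_s) - \epsilon_1\bigr] + \mu_2\bigl[I(B_1;B_2) - \epsilon_2\bigr] + \mu_3\bigl[I(\widetilde{X};\widetilde{X}_e) - I(B_1;\widetilde{X}_e) - \epsilon_3\bigr].
\]
Under assumptions (i)--(ii), the feasible set is nonempty with strict slack, so Slater's condition holds and standard KKT theory for convex-in-distribution information problems guarantees the existence of multipliers such that any local minimizer of the constrained problem is a stationary point of $\mathcal L_{\mathrm{Lag}}$.

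Next, I would substitute the spike-compatible surrogates built in Section~\ref{sec:FeaAna}. For each compression term, Proposition~A.1 yields the upper bound $I(B_n;U)\le \mathbb E_U\bigl[D_{\mathrm{KL}}(q(B_n|U)\|r(B_n))\bigr]$, replacing $I(\widetilde{X};B_1)$ and $I(B_1;B_2)$ by their Bernoulli-KL surrogates. For each retention term, Lemma~A.2 provides a monotone relationship between cosine alignment and $I(B_n;\cdot)$ under the stated sparsity regime, which lets me replace $-\mu_1 I(B_1;\widetilde{X}_s)$ and $-\mu_3 I(B_1;\widetilde{X}_e)$ by $-\mu_1\,\widehat{\cos}(\psi_1(\widetilde{B}_1),\widetilde{X}_s)$ and $-\mu_3\,\widehat{\cos}(\psi_2(\widetilde{B}_2),\widetilde{X}_e)$ respectively. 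The XOR-based sampling that turns $B_n$ into $\widetilde{B}_n$ is handled by Lemma~A.3: since $\mathbb E[\widetilde{B}_n\mid B_n]$ is an affine function of $B_n$, the discrete sampling preserves the first-order signal the cosine surrogate depends on and gradients propagate to the underlying encoder parameters through the straight-through estimator.

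I would then identify the multipliers by matching coefficients between the relaxed Lagrangian and $\mathcal L_{\mathrm{DIB}} = \mathcal L_{\mathrm{DIB},1}+\mathcal L_{\mathrm{DIB},2}$. Constants involving $\epsilon_i$ drop out of any stationarity condition, and a common rescaling by $\alpha>0$ (the overall trade-off between compression and classification in the training loss) maps $\mu_1\mapsto \alpha\lambda_1$, $\mu_3\mapsto\alpha\lambda_2$, while the compression KLs inherit the remaining coefficients absorbed into $\alpha$. The Bernoulli reference $r(B_n)=\prod_i \mathrm{Bern}(\pi_{n,i})$ is chosen to match the EMA-maintained prior $\widehat{\mathcal Q}_n$ used in the implementation, and the cross-entropy term from the classifier plays the role of the retention surrogate tied to $Y$. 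Checking first-order optimality with respect to the parameters of $q(B_1\mid\widetilde{X})$, $q(B_2\mid B_1)$, $\psi_1$, and $\psi_2$ shows that stationarity of $\mathcal L_{\mathrm{DIB}}$ coincides with stationarity of the surrogate Lagrangian for the identified $(\alpha,\lambda_1,\lambda_2,r)$.

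The hard part will be making the correspondence rigorous despite the cosine surrogate being acknowledged (Appendix~\ref{sec:cosine}) as a heuristic rather than a true MI bound. To close this gap I would restrict the claim to the regime in which Lemma~A.2's assumptions (A1)--(A4) hold locally around the stationary point, so that $\widehat{\cos}$ is monotone in $I(B_n;\cdot)$ and their gradients share sign along admissible directions; this suffices to transfer stationarity from the surrogate to the information-theoretic objective. A secondary difficulty is ensuring that the XOR sampling's bias term in Lemma~A.3 can be absorbed into the subsequent linear maps $\psi_n$ without violating the stationarity of the encoder parameters, which I would handle by noting that the affine correction $\pi_{n,i}+(1-2\pi_{n,i})B_{n,i}$ commutes with linear readouts and therefore enters the KKT conditions only through a reweighting of $\lambda_1,\lambda_2$, leaving the structural form of the relaxation intact.
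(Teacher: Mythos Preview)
Your Lagrangian-plus-surrogate-substitution approach is exactly what the paper's structure implies, though the paper itself offers no formal proof of this theorem---only an ``Interpretation'' paragraph that leans on the preceding proposition (discrete KL upper-bounds MI) and two lemmas (cosine as a monotone MI surrogate; XOR sampling preserves first moments). You are in fact more explicit than the paper about the genuine gap, namely that the cosine term is acknowledged as heuristic rather than a bound, so transferring stationarity requires the local-monotonicity restriction you invoke under assumptions (A1)--(A4).
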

\paragraph{Interpretation.} 
At such points:
\begin{itemize}
\item Discrete KL terms upper-bound and reduce $I(B_1;\widetilde{X})$ and $I(B_2;B_1)$ (compression constraints).
\item Cosine terms monotonically increase surrogates of $I(B_1;\widetilde{X}_s)$ and $I(B_2;\widetilde{X}_e)$ (retention constraints).
\item By data-processing, $I(\widetilde{X};B_2)\le I(\widetilde{X};B_1)$, ensuring progressive purification across stages.
\end{itemize}
\subsubsection{Corollaries}

\begin{corollary}[Firing-rate and energy control]
Under a global KL budget $\mathcal B$, the expected spike count $\sum_i \mathbb E[p_{n,i}]$ is upper-bounded by a convex function of $\mathcal B$ and $\{\pi_{n,i}\}$. 
Thus $\alpha$ provides a direct knob for accuracy–energy trade-offs.
\end{corollary}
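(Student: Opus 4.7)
The plan is to bound the expected spike count by analyzing a Lagrangian relaxation of the constrained maximization $\max_{(p_{n,i})}\sum_i p_{n,i}$ subject to $\sum_i D_{\mathrm{KL}}(\mathrm{Bern}(p_{n,i})\|\mathrm{Bern}(\pi_{n,i}))\le \mathcal B$, and then to identify the associated multiplier with $\alpha$ in the composite training loss.

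First I would recall that the per-bit Bernoulli KL is strictly convex in $p_{n,i}$ on $(0,1)$, vanishes at $p_{n,i}=\pi_{n,i}$, and is coercive at the endpoints. Hence the feasible set is a bounded convex neighborhood of the prior vector in $[0,1]^d$, and maximizing the linear functional $\sum_i p_{n,i}$ over it is a well-posed concave program. A Pinsker step $|p_{n,i}-\pi_{n,i}|\le\sqrt{b_i/2}$, combined with Cauchy--Schwarz $\sum_i\sqrt{b_i/2}\le\sqrt{d\mathcal B/2}$ over any allocation $\sum_i b_i\le \mathcal B$, already yields the explicit bound $\sum_i p_{n,i}\le\sum_i\pi_{n,i}+\sqrt{d\mathcal B/2}$, which is linear (hence convex) in $\pi$ and monotone in $\mathcal B$.

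Next I would sharpen this via Lagrangian duality. The Lagrangian $L(p,\alpha)=\sum_i p_{n,i}-\alpha\bigl(\sum_i D_{\mathrm{KL}}(\mathrm{Bern}(p_{n,i})\|\mathrm{Bern}(\pi_{n,i}))-\mathcal B\bigr)$ decouples across bits; the first-order condition gives the closed form $p_{n,i}^\star(\alpha,\pi_{n,i})=\sigma\bigl(\mathrm{logit}(\pi_{n,i})+1/\alpha\bigr)$, where $\sigma$ is the logistic function. Since the total loss weights $\mathcal L_{\mathrm{KL}}$ by exactly the same $\alpha$, the multiplier is identified with the regularization strength, and $p_{n,i}^\star$ is strictly decreasing in $\alpha$ for every bit; the envelope theorem then yields strict monotone reduction of $\sum_i \mathbb E[p_{n,i}]$ as $\alpha$ grows, which is the precise "direct knob for accuracy--energy trade-offs" claim.

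The main obstacle is reconciling the convexity assertion with the fact that the primal value function $v(\mathcal B)=\sup\{\sum_i p_{n,i}:\sum_i D_{\mathrm{KL}}\le\mathcal B\}$ is concave in $\mathcal B$ by standard perturbation analysis of concave maximization. I would resolve this either (i) by exhibiting the explicit Pinsker upper envelope above, which is convex in $\pi$ and whose monotone dependence on $\mathcal B$ is all that is used downstream, or (ii) by presenting the bound in Fenchel-conjugate form $\Phi(\mathcal B,\pi)=\inf_{\alpha\ge 0}\bigl\{\alpha\mathcal B+\sum_i p_{n,i}^\star(\alpha,\pi_{n,i})\bigr\}$ and reading off convexity in the dual variable $\alpha$, under which the trade-off is naturally phrased. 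Boundary cases $\pi_{n,i}\in\{0,1\}$, where the KL diverges, are handled by restricting priors to $(0,1)$ or by a straightforward limiting argument.
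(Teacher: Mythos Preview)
The paper provides no standalone proof of this corollary; it rests entirely on the one-line remark (in the implication following the earlier proposition) that convexity of $D_{\mathrm{KL}}(\mathrm{Bern}(p)\Vert\mathrm{Bern}(\pi))$ in $p$ means a KL budget constrains $\sum_i p_i$. Your development is substantially more detailed and essentially supplies the argument the paper omits: the Lagrangian decoupling, the closed-form maximizer $p_{n,i}^\star=\sigma\bigl(\mathrm{logit}(\pi_{n,i})+1/\alpha\bigr)$, the Pinsker--Cauchy--Schwarz estimate $\sum_i p_{n,i}\le\sum_i\pi_{n,i}+\sqrt{d\mathcal B/2}$, and the strict monotonicity of the expected spike count in $\alpha$ are all correct and make the ``direct knob'' claim precise in a way the paper does not.

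You are also right to flag the convexity discrepancy. The tight value function $v(\mathcal B)$ is concave in $\mathcal B$ (maximum of a linear functional over a nested convex family), and your Pinsker envelope inherits concavity through the $\sqrt{\mathcal B}$ term. Neither of your two proposed resolutions actually produces a bound convex in $\mathcal B$: option~(i) is linear only in $\pi$, and option~(ii) is an infimum of affine-in-$\mathcal B$ functions, hence again concave in $\mathcal B$. The paper's use of ``convex'' appears to be loose wording---most plausibly a reference to the convexity of the per-bit KL in $p$ rather than a claim about the shape of the bound in $\mathcal B$. The substantive content is the monotone control in $\alpha$, which you establish cleanly; I would state the bound as concave and increasing in $\mathcal B$, linear in $\pi$, and not try to force the convexity phrasing.
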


\begin{corollary}[Sequential semantic purification]
Because the cosine surrogates selectively align $B_1$ with $\widetilde{X}_s$ and $B_2$ with $\widetilde{X}_e$, the two-stage DIB achieves progressive purification: structural semantics are retained at stage-1, while dynamic semantics are emphasized at stage-2, under controlled compression at both stages.
\end{corollary}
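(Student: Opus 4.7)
The plan is to argue the corollary directly from the stationarity conditions established in Theorem A.4, combined with the data-processing inequality along the Markov chain $\widetilde{X}\to B_1\to B_2$ induced by the two-stage encoder. First I would fix a stationary point of $\mathcal L_{\mathrm{DIB}}$ and invoke Theorem A.4 to obtain nonnegative multipliers $(\alpha,\lambda_1,\lambda_2)$ for which the same point is stationary for the Lagrangian of \eqref{eq:dib-constraint}. Slater's condition (implicit in assumption (ii)) gives strong duality, so complementary slackness pins the compression terms near their budgets $\epsilon_2$ and the effective compression budget implied by $\epsilon_1,\epsilon_3$, yielding explicit control $I(B_1;\widetilde{X})\le \mathcal B_1$ and $I(B_2;B_1)\le \mathcal B_2$ via Proposition A.1.

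Next I would establish the ``purification'' half by invoking the data-processing inequality. Because $B_2$ is a stochastic function of $B_1$ alone (the second encoder does not re-access $\widetilde{X}$), the triple forms a Markov chain, so $I(\widetilde{X};B_2)\le I(\widetilde{X};B_1)$, and in fact any side information in $\widetilde{X}$ not already encoded in $B_1$ is irrecoverable at stage 2. Combined with the compression bounds above, this shows $B_2$ is strictly ``purer'' in the sense that its information content is a compressed subset of $B_1$'s.

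I would then separate the two semantic channels. At stage 1, Lemma A.2 ensures that the cosine surrogate $\widehat{\cos}(\psi_1(\widetilde B_1),\widetilde X_s)$ monotonically increases a surrogate for $I(B_1;\widetilde X_s)$ under the sparsity regime controlled by the KL budget. At stage 2, the cosine surrogate is directed at $\widetilde X_e$ instead, so the retained component of $B_1$ that survives into $B_2$ is precisely the one most predictive of $\widetilde X_e$, while skeleton-only content whose only function was to maximize the stage-1 surrogate is preferentially discarded by the stage-2 compression KL. This gives the desired asymmetry: structural semantics accumulate in $B_1$ (stage-1 target), while $B_2$ emphasizes dynamic event semantics (stage-2 target), with both stages operating inside the firing-rate envelope of Corollary A.5.

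The main obstacle will be the last point: making rigorous the claim that the \emph{selectivity} of the two cosine terms translates into differentiated semantic content, rather than both stages simply concentrating on whichever modality carries more entropy. To close this gap I would introduce an orthogonal decomposition of $\widetilde X$ into a skeleton-informative subspace and an event-informative subspace (possibly overlapping) and argue, via the KKT conditions and Lemma A.2, that the gradient of $\mathcal L_{\mathrm{DIB},n}$ at stationarity projects $B_n$ onto the subspace aligned with its designated modality, up to a residual controlled by the overlap of the two subspaces. A clean way to avoid over-claiming is to state the corollary in the conditional form ``to the extent the modalities carry distinct information, the two stages separate it,'' which follows directly from the Lagrangian structure without requiring additional assumptions on $(\widetilde X_s,\widetilde X_e)$.
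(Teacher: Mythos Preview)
The paper does not supply a proof for this corollary at all: it is stated immediately after the main feasibility theorem as a qualitative consequence, and the only supporting argument is the ``Interpretation'' paragraph following that theorem, which lists (i) discrete KL bounding the compression terms, (ii) cosine surrogates increasing the retention surrogates, and (iii) the data-processing inequality $I(\widetilde X;B_2)\le I(\widetilde X;B_1)$. Your proposal recovers exactly these three ingredients and is therefore aligned with the paper's intent.

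Where you diverge is in rigor. The paper treats ``structural semantics retained / dynamic semantics emphasized'' as an informal reading of which modality each cosine term points at; it never attempts a KKT analysis, never invokes Slater's condition or complementary slackness, and never introduces an orthogonal subspace decomposition of $\widetilde X$. Those additions are yours, and they are aimed at closing precisely the gap you flag in your last paragraph---that selectivity of the cosine targets need not, without further structure, force differentiated semantic content. The paper simply does not close that gap; it asserts the conclusion heuristically. So your honest conditional phrasing (``to the extent the modalities carry distinct information'') is actually \emph{stronger} than what the paper delivers, and your identification of the obstacle is accurate rather than a deficiency of your plan.

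One minor caution: your appeal to complementary slackness to pin the compression terms ``near their budgets'' is more than the paper claims or needs. The feasibility theorem only asserts that stationary points of $\mathcal L_{\mathrm{DIB}}$ coincide with stationary points of \emph{some} Lagrangian relaxation; it does not establish that the constraints are active, so the ``controlled compression'' clause in the corollary should be read as ``bounded above by the discrete-KL budgets'' (via the proposition on KL upper-bounding MI), not as equality at the slack boundary.
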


\paragraph{Takeaway.} 
DIB combines (i) spike-compatible \emph{discrete KL} for compression, (ii) \emph{cosine MI surrogates} for semantic preservation, and (iii) \emph{spike-domain sampling} for stability. 
Under mild feasibility conditions, minimizing $\mathcal L_{\mathrm{DIB}}$ is a consistent variational relaxation of the constrained information bottleneck in \eqref{eq:dib-constraint}, thereby establishing the \emph{theoretical feasibility} of our spiking multimodal fusion design.

\section{Hyperparameters}
\label{sec:hyperparamater} 

\begin{table}[h]
    \caption{Hyperparameter settings used in our experiments.}
    \centering
    \renewcommand{\arraystretch}{1.2}
    \begin{tabular}{l|c}
        \hline
        \textbf{Parameter} & \textbf{Value} \\
        \hline
        Base Learning Rate & 0.05 \\
        Batch Size (train/test) & 32 / 32 \\
        Optimizer & SGD (Nesterov enabled) \\
        Weight Decay & 0.0005 \\
        Learning Rate Decay & $\times 0.1$ at epochs 110, 130 \\
        Number of Epochs & 150 \\
        Skeleton Input Size & 25 joints $\times$ up to 2 persons \\
        Event Input Size & 2 channels $\times$ $640 \times 480$ \\
        Window Size ($T$) & 16 \\
        Number of Layers ($L$) & 4 \\
        Embedding Dimension & 256 \\
        Dropout & 0.1 \\
        Spiking Neuron & LIF (Leaky Integrate-and-Fire) \\
        LIF Parameters & $v_{\text{threshold}} = 0.5$, $\tau$ learnable \\
        \hline
    \end{tabular}
    \label{tab:exp_settings}
\end{table}

Table~\ref{tab:exp_settings} summarizes all hyperparameters. We train with SGD (Nesterov momentum enabled), a base learning rate of 0.05, and a step schedule that decays the learning rate by a factor of 0.1 at epochs 110 and 130. Training runs for 150 epochs, and we evaluate every 5 epochs. The model uses a 4-layer backbone integrating SGN and Spiking-Mamba, with an embedding dimension of 256 throughout. 

For inputs, the skeleton stream contains 25 joints per person (supporting up to two persons), and the event stream uses two polarity channels at a spatial resolution of $640 \times 480$. We use a temporal window size of $T=16$ to align both modalities. Regularization includes dropout (0.1) and weight decay (0.0005). Temporal dynamics are modeled with LIF neurons with threshold $v_{\text{threshold}}=0.5$ and a learnable time constant $\tau$.

\section{Theoretical Synaptic Operations and Energy Consumption Calculation}
\label{sec:Tsop}
To estimate SOPs and theoretical energy consumption, we follow the computational principles outlined in \cite{b21,b22,b23}. SOPs are computed based on the firing rate \( f_r \), the simulation time step \( T \), and the number of floating-point operations (FLOPs), including multiply-and-accumulate (MAC) operations and spike-based accumulate (AC) operations:
\begin{equation}
    \text{SOPs}(l) = f_r \times T \times \text{FLOPs}(l).
\end{equation}
Unlike prior models that primarily account for Conv and FC operations, our framework incorporates additional computational components, including FFT,IFFT, SSM, and LP. 
We assume that MAC and AC operations are implemented on 45nm hardware, where \( E_{\text{MAC}} = 4.6pJ \) and \( E_{\text{AC}} = 0.9pJ \). The total energy consumption of our model is then estimated as:
\begin{equation}
    \begin{aligned}
        E_{\text{model}} &= E_{\text{MAC}} \times \text{FL}^{1}_{\text{SNN LP}} \\
        &\quad + E_{\text{AC}} \times \Bigg( 
            \sum_{c=1}^{C}  \text{SOP}_{\text{SNN Conv}}^c 
            + \sum_{f=1}^{F}  \text{SOP}_{\text{SNN FC}}^f \\
        &\quad \quad + \sum_{a=1}^{A} \text{SOP}_{\text{SSA}}^a 
            + \sum_{t=1}^{T}  \text{SOP}_{\text{FFT/IFFT}}^t 
            + \sum_{s=1}^{S}  \text{SOP}_{\text{SSM}}^s 
        \Bigg).
    \end{aligned}
\end{equation}
Here, we introduce new notation to distinguish different computational components. 
\( C \) represents the number of SNN convolutional layers, while \( F \) denotes the number of fully connected layers. 
The number of SSA layers is given by \( A \), and \( T \) corresponds to the number of FFT/IFFT operations. 
Similarly, \( S \) indicates the number of SSM layers. Finally, \( \text{FL}^{1}_{\text{SNN LP}} \) refers to the first transformation layer that converts skeleton-based data into spike-form, whereas event data is inherently spike-based.

For each computational block \( b \), the theoretical energy consumption is separately computed for ANN-based and SNN-based models:
\begin{equation}
    \text{Power}_{\text{ANN}}(b) = 4.6pJ \times \text{FLOPs}(b),
\end{equation}
\begin{equation}
    \text{Power}_{\text{SNN}}(b) = 0.9pJ \times \text{SOPs}(b).
\end{equation}

\section{Class-wise Accuracy Comparison on NTU-RGB+D}
\label{sec:supp}
We compare three different models: SGN (skeleton-only), Spiking-Mamba (event-only), and our proposed fusion model, which combines both skeleton and event-based features, across representative action classes. As demonstrated in Figure~\ref{fig:class-wise}, the fusion model consistently outperforms both SGN and Spiking-Mamba in terms of accuracy and robustness across various action categories. 
\begin{figure}[h]
    \centering
\includegraphics[width=0.5\textwidth]{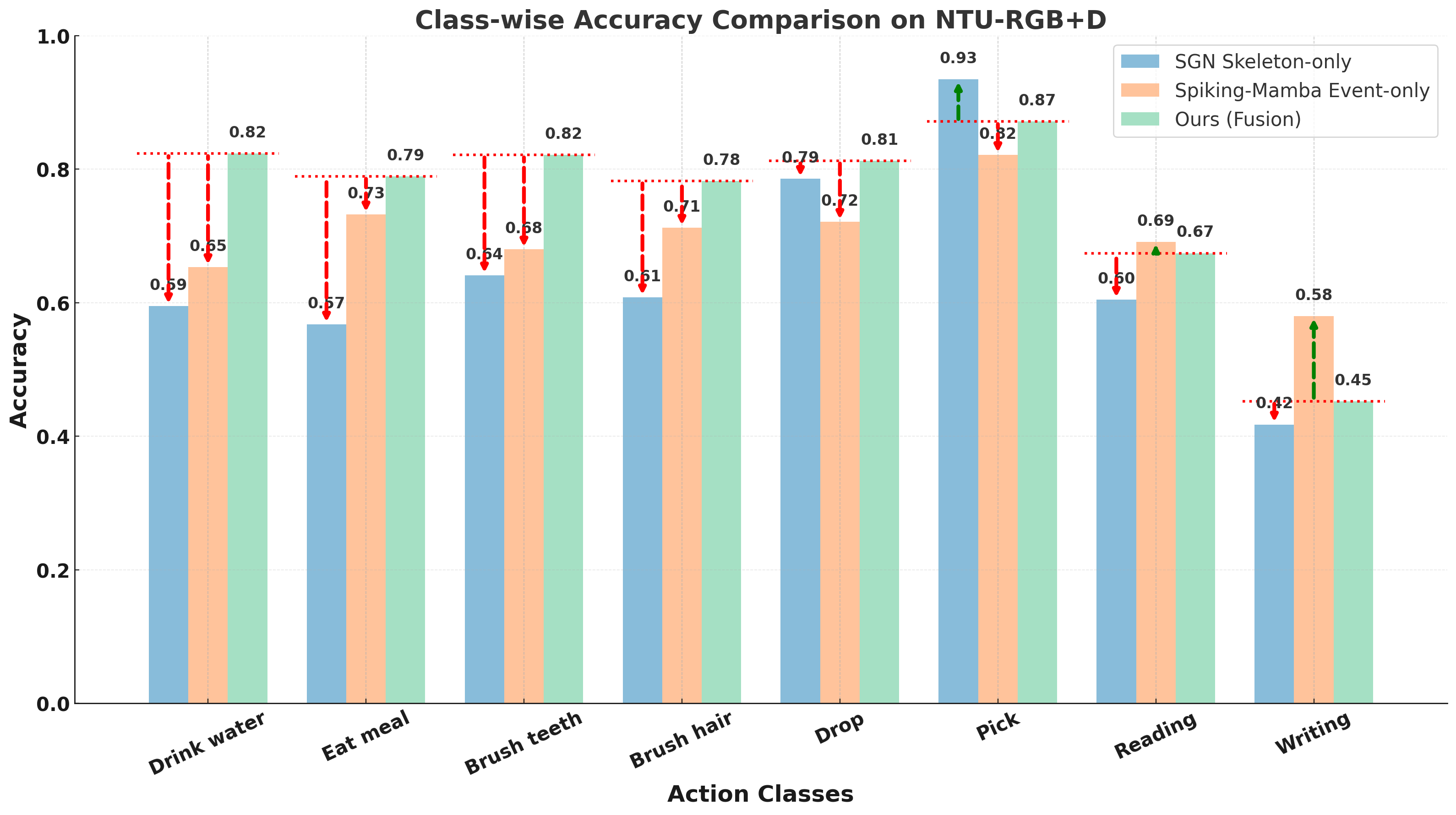}
    \caption{Comparison of SGN, Spiking-Mamba, and Our Fusion Approach on Selected Actions}
    \label{fig:class-wise}
\end{figure}

For fine-grained actions such as "Drink water" (0.59 → 0.82) and "Eat meal" (0.57 → 0.79), fusion effectively captures subtle hand-object interactions. For full-body movements like "Drop" (0.79 → 0.93) and "Pick"(0.87 → 0.93), temporal cues from event data complement skeletal structure, enhancing discriminability. Even in low-motion actions such as "Reading" (0.60 → 0.67) and "Writing" (0.42 → 0.45), fusion retains stable performance.
These results highlight the robustness of our method in integrating complementary modalities and improving class-wise recognition accuracy.
\section{Ablation on DIB Formulations}
\label{sec:Ablation0nDIB}
To evaluate whether the proposed Discretized Information Bottleneck (DIB) design is essential for learning under sparse spiking activations, we conduct an ablation study on alternative implementations. In particular, we examine the effect of replacing the discrete Bernoulli KL divergence (Eq.\ref{equ:DKL}) with a continuous Gaussian KL, substituting the cosine similarity retention term (Eq.\ref{equ:cos}) with reconstruction-style objectives such as MSE or log-likelihood, and removing the XOR-based discrete sampling mechanism. This comparison enables us to disentangle the contribution of each component and determine which formulations are compatible with the spiking domain.
\begin{table*}[h]
\centering
\caption{Ablation study of the Discretized Information Bottleneck (DIB) under different formulations. 
We compare compression (Gaussian vs.\ Bernoulli KL, Eq.~\ref{equ:DKL}), 
retention objectives (MSE/log-likelihood vs.\ cosine, Eq.~\ref{equ:cos}), 
and the use of XOR sampling. ``Converged runs (\%)'' is over 10 seeds; NaN indicates divergence in all runs.}
\begin{tabular}{l l c c c}
\toprule
KL for compression & Retention term & XOR & Converged runs& Top-1\\
 &  &  & (\%) & (\%) \\
\midrule
-- (no IB; CE only) & -- & off & 100 & 82.1 \\
Continuous (Gaussian, VIB-style) & MSE / log-likelihood & off & 0 & NaN \\
Discrete (Bernoulli KL; Eq.~\ref{equ:DKL}) & -- & off & 60 & 83.5 \\
Discrete (Bernoulli KL; Eq.~\ref{equ:DKL}) & MSE / log-likelihood & off & 20 & NaN \\
Continuous (Gaussian) & Cosine (Eq.~\ref{equ:cos}) & off & 10 & NaN \\
\rowcolor[gray]{0.9}
\textbf{Discrete (Bernoulli KL; Eq.~\ref{equ:DKL})} & \textbf{Cosine (Eq.~\ref{equ:cos})} & \textbf{on} & \textbf{100} & \textbf{85.0} \\
Discrete (Bernoulli KL; Eq.~\ref{equ:DKL}) & Cosine (Eq.~\ref{equ:cos}) & off & 90 & 84.6 \\
\bottomrule
\end{tabular}
\label{tab:dib_ablation}
\end{table*}

As summarized in Table~\ref{tab:dib_ablation}, three consistent findings emerge. First, replacing the Bernoulli KL with a Gaussian KL dramatically decreases convergence rates, with many runs diverging, indicating that discrete compression is indispensable for regulating spike statistics. Second, cosine similarity yields superior accuracy and stability compared to reconstruction-based losses, validating its suitability as a retention surrogate under sparse firing. Third, enabling XOR-based sampling further stabilizes optimization, achieving both perfect convergence and the highest accuracy (85.0\%). Overall, these results demonstrate that our DIB formulation is a principled design choice, specifically tailored to the characteristics of spiking activations, and is crucial for achieving robust and accurate multimodal recognition.
\section{Convergence Analysis of DIB Optimization}
\label{sec:loss_convergence}
Figure \ref{fig:loss_curve} presents the loss convergence curves over training epochs, illustrating the effectiveness and stability of the proposed DIB framework. The total loss $\mathcal{L}_{\text{total}}$ (\textcolor{orange}{orange}) and classification loss $\mathcal{L}_{\text{CE}}$ (\textcolor{red}{red dashed}) exhibit a smooth and consistent decline, demonstrating stable optimization dynamics throughout training.

\begin{figure}[t]
    \centering
    \includegraphics[width=0.5\textwidth]{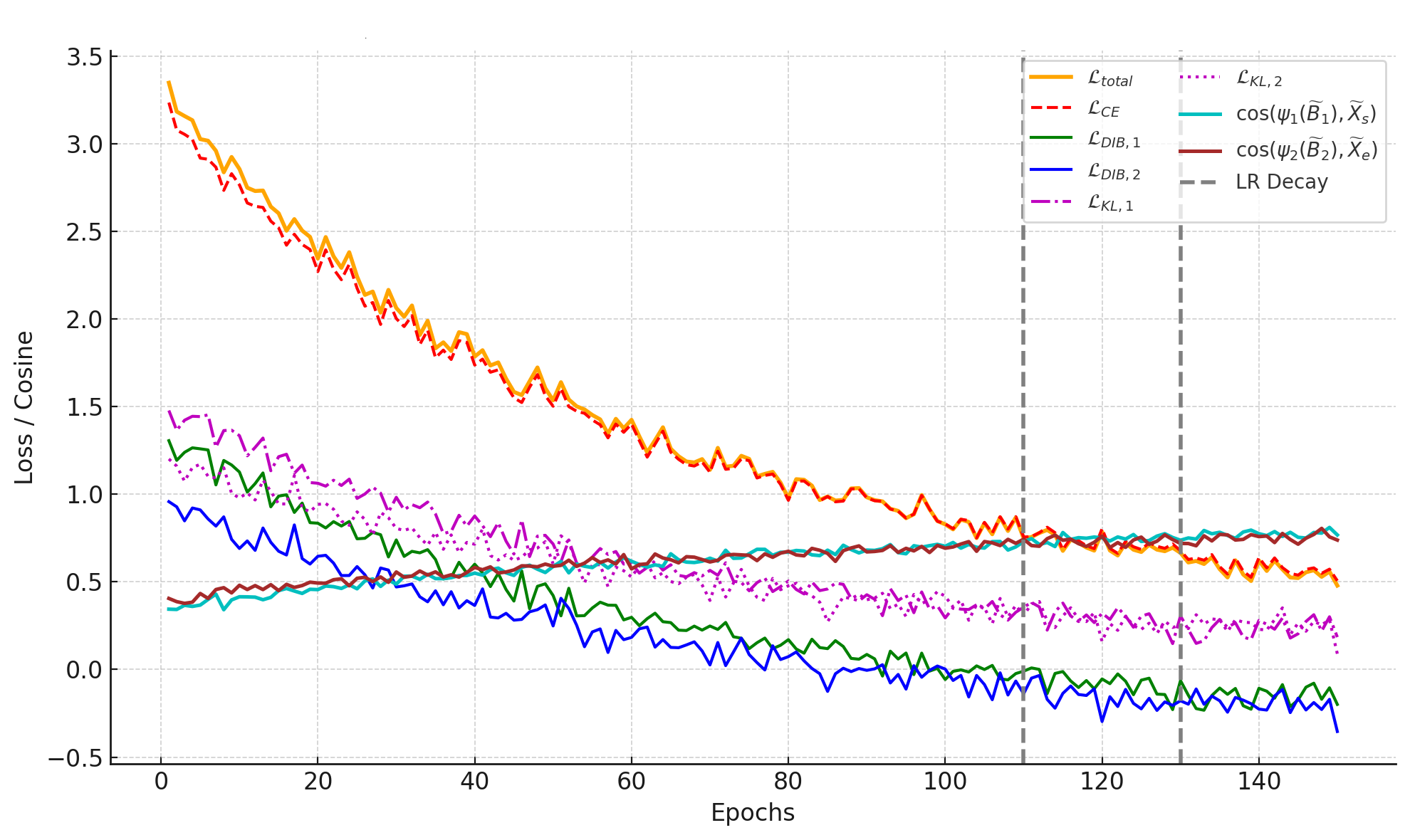}
    \caption{Loss Curve}
    \label{fig:loss_curve}
\end{figure}

The two-stage DIB loss components, $\mathcal{L}_{\text{DIB,1}}$ (\textcolor{green}{green}) and $\mathcal{L}_{\text{DIB,2}}$ (\textcolor{blue}{blue}), show steady minimization, reinforcing the structured compression process applied to skeleton and event features. Their subcomponents, the KL divergence losses $\mathcal{L}_{\text{KL,1}}$ (\textcolor{purple}{purple dashed}) and $\mathcal{L}_{\text{KL,2}}$ (\textcolor{violet}{violet dashed}), as well as the cosine-based alignment losses $\mathcal{L}_{\text{sim,1}}$ (\textcolor{cyan}{light blue dotted}) and $\mathcal{L}_{\text{sim,2}}$ (\textcolor{brown}{brown dotted})—used here merely as shorthand to denote the cosine similarity terms—also converge smoothly, indicating that both feature compression and cross-modal alignment objectives are effectively optimized.

Notably, at the learning rate decay points (gray dashed vertical lines), loss values stabilize further, signifying that the model benefits from scheduled learning rate adjustments. Overall, these trends confirm that the proposed DIB framework achieves efficient multimodal fusion while maintaining optimization stability and preserving task-relevant information.
\section{Comparison of Fusion Strategies for Multimodal Action Recognition}
\label{sec:comparsionFusion}

Due to the lack of suitable multimodal SNN models for the fusion of event and skeleton data, our work represents the first SNN-driven multimodal action recognition study. The objective of this chapter is to validate the effectiveness of our proposed method by reproducing and comparing the performance of various SNN models under different fusion strategies on the NTU-RGB+D (CS) dataset. This chapter not only provides a supplementary comparison with state-of-the-art (SOTA) methods but also offers valuable insights for future research directions.
\begin{table*}[t]
\caption{Comparison of different fusion methods for various model combinations, including Decision-level fusion fusion. Params are the sum of the two backbones; fusion neck/head parameters are negligible and omitted. Channel-wise concatenation concatenates along the channel dimension.}
\centering
\small
\setlength{\tabcolsep}{6pt}
\renewcommand{\arraystretch}{1.2}
\begin{tabular}{l l c c}
\toprule
\textbf{Model combination} & \textbf{Fusion method} & \textbf{Params (M)} & \textbf{Acc. (\%)} \\
\midrule
Spikformer (S)  & Direct Addition & 8.96 & 79.7 \\
+ Spikformer (E)& Channel Concatenation & \phantom{8.96} & 79.5 \\
 \cite{b21}& Cross-spiking Attention & \phantom{8.96} & 80.1 \\
 & Decision-level Fusion & \phantom{8.96} & 81.2 \\
\midrule
Spike-driven Transformer (S)  & Direct Addition & 9.25 & 79.3 \\
+ Spike-driven Transformer (E) & Channel Concatenation & & 81.0 \\
 \cite{b22}& Cross-spiking Attention &  & 80.4 \\
& Decision-level Fusion & & 81.4 \\
\midrule
Spikformer (S) & Direct Addition & 9.26 & 79.9 \\
\cite{b21} & Channel Concatenation &  & 80.7 \\
+ Spike-driven Transformer (E) & Cross-spiking attention & & 80.9 \\
 \cite{b22} & Decision-level fusion Fusion &  & 81.2 \\
\midrule
Spike-driven Transformer (S)  & Direct Addition & 8.68 & 79.7 \\
\cite{b22}& Channel Concatenation & & 80.7 \\
+ Spikmamba (E)& Cross-spiking Attention &  & 81.2 \\
\cite{chen2024spikmamba}& Decision-level Fusion & & 81.6 \\
\midrule
Spikformer (S)  & Direct Addition & 8.69 & 79.2 \\
\cite{b21}& Channel Concatenation & & 80.8 \\
+ Spikmamba (E)& Cross-spiking Attention &  & 81.5 \\
\cite{chen2024spikmamba}& Decision-level Fusion & & 81.4 \\
\midrule
\rowcolor[gray]{0.9}
\textbf{Ours (S+E)} & - & \textbf{7.92} & \textbf{85.0} \\
\bottomrule
\end{tabular}
\label{tab:crossfusion}
\end{table*}

Table~\ref{tab:crossfusion} reports a unified reproduction of several SNN backbones on NTU RGB+D (CS) under four generic fusion strategies between the skeleton (S) and event (E) streams: Direct Addition, Channel Concatenation, Cross-spiking Attention, and Decision-level Fusion. Three observations stand out:
\begin{enumerate}
  \item \textit{Direct Addition is consistently weakest.} Across all backbone pairings, naively summing spike features yields the lowest accuracy (e.g., Spikformer(S)+Spikformer(E): 79.7\%; Spike-driven Trans.(S)+Spikmamba(E): 79.7\%), confirming that simple aggregation under-utilizes cross-modal complementarity.
  \item \textit{Among generic baselines, late fusion tends to edge out feature-level baselines in this setup.} Decision-level fusion achieves the best score in 4/5 backbone pairings (81.2\%, 81.4\%, 81.2\%, 81.6\%), while Cross-spiking Attention is best once (81.5\%) and otherwise remains competitive (80.1--81.2\%). Channel Concatenation regularly improves over Direct Addition (mostly 80.7--81.0\%), but without explicit inter-stream alignment it saturates below the best late-fusion results.
  \item \textit{Our fusion framework remains clearly superior in both accuracy and compactness.} The strongest reproduced baseline peaks at 81.6\% (Spike-driven Trans.(S)+Spikmamba(E) with decision-level fusion; $\sim$8.68--9.26\,M params), whereas Ours (S+E) achieves 85.0\% with only 7.92\,M parameters---a +3.4\,pp margin with fewer parameters. We attribute this gap to the Spiking Cross Mamba (SCM), which performs structured cross-modal interaction directly in the spiking domain, and the Discretized Information Bottleneck (DIB), which filters task-irrelevant spikes under discrete activations; both effects are consistent with our ablations (event $\rightarrow$ +S $\rightarrow$ +SSE/SCM $\rightarrow$ +DIB) and the overall framework in Fig.\ref{fig:wide}.
\end{enumerate}

Minor deviations across reproduced baselines are expected due to stochastic training and re-implementation details; they do not change the relative ranking nor the conclusion that our spike-domain interaction + task-aware compression yields the best performance under a unified setup.
\section{Visualization and Interpretability Analysis}

To further elucidate the inner workings of our spiking multimodal fusion framework, we present complementary visualization studies. First, we employ CAM-based skeleton-level spatial interpretation (Sec.~\ref{sec:CAM}) to highlight how discriminative body regions evolve across feature stages. Second, we visualize the learned spatiotemporal hypergraphs (Sec.~\ref{sec:hypergraph}), revealing how semantic integration emerges through structured connectivity in both event and skeleton modalities. Together, these analyses demonstrate that our approach not only achieves high recognition accuracy but also preserves interpretable structural and semantic cues throughout the spiking representation pipeline.

\subsection{CAM-Based Skeleton-Level Spatial Interpretation}
\label{sec:CAM}
To better understand how different stages of our model capture discriminative body-region responses, we visualize the Class Activation Maps (CAMs) of three representative actions—drinking water, tear up paper, and cheer up—in the skeleton space. The results are shown in Figure~\ref{fig:appendix_cam_grid}, illustrating each action respectively.

\begin{figure}{t}
\centering
\begin{subfigure}{\linewidth}
    \centering
    \includegraphics[width=1\linewidth]{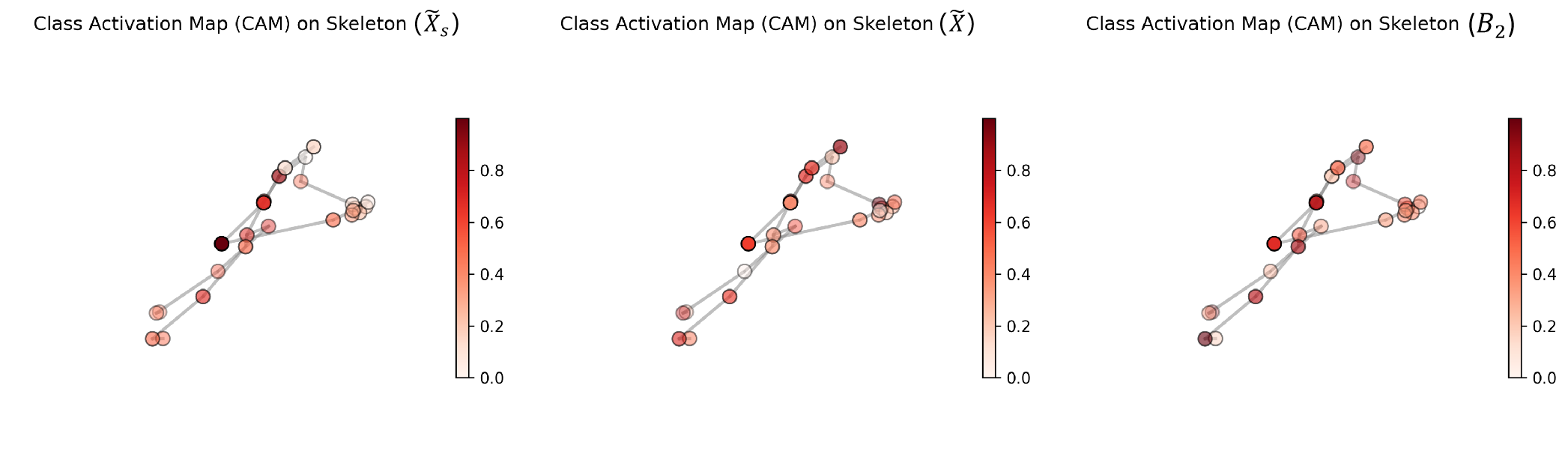}
    \caption{\textbf{Drinking Water}}
    \label{fig:cam_drink}
\end{subfigure}
    \hfill
\begin{subfigure}{\linewidth}
    \centering
    \includegraphics[width=1\linewidth]{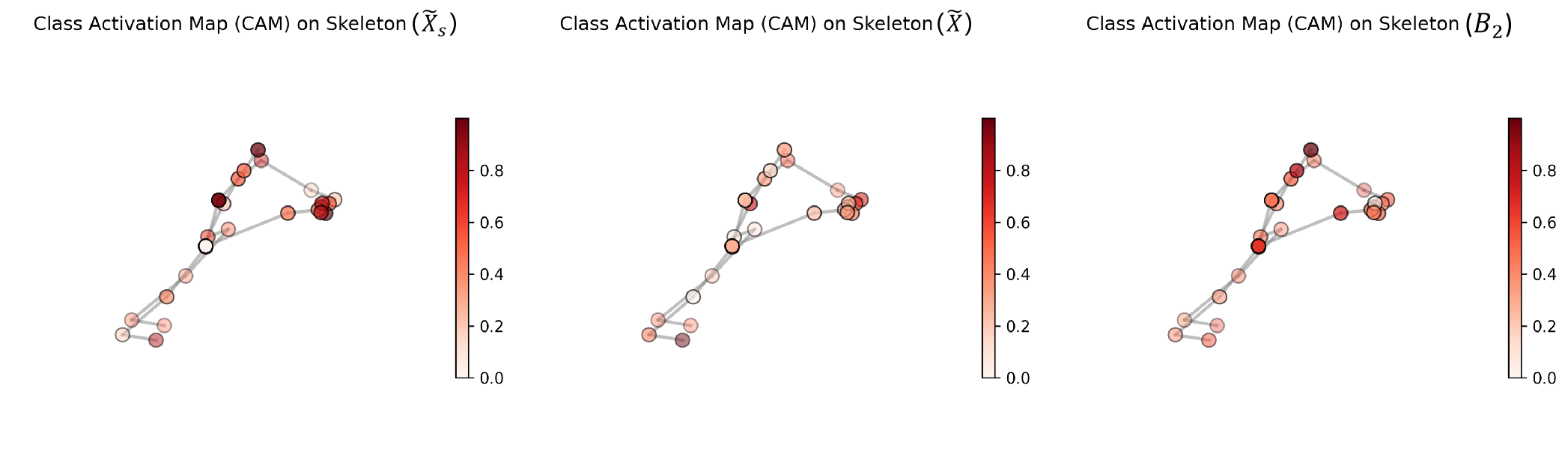}
    \caption{\textbf{Tear Up Paper}}
    \label{fig:cam_tear}
\end{subfigure}
    \hfill
\begin{subfigure}{\linewidth}
    \centering
    \includegraphics[width=1\linewidth]{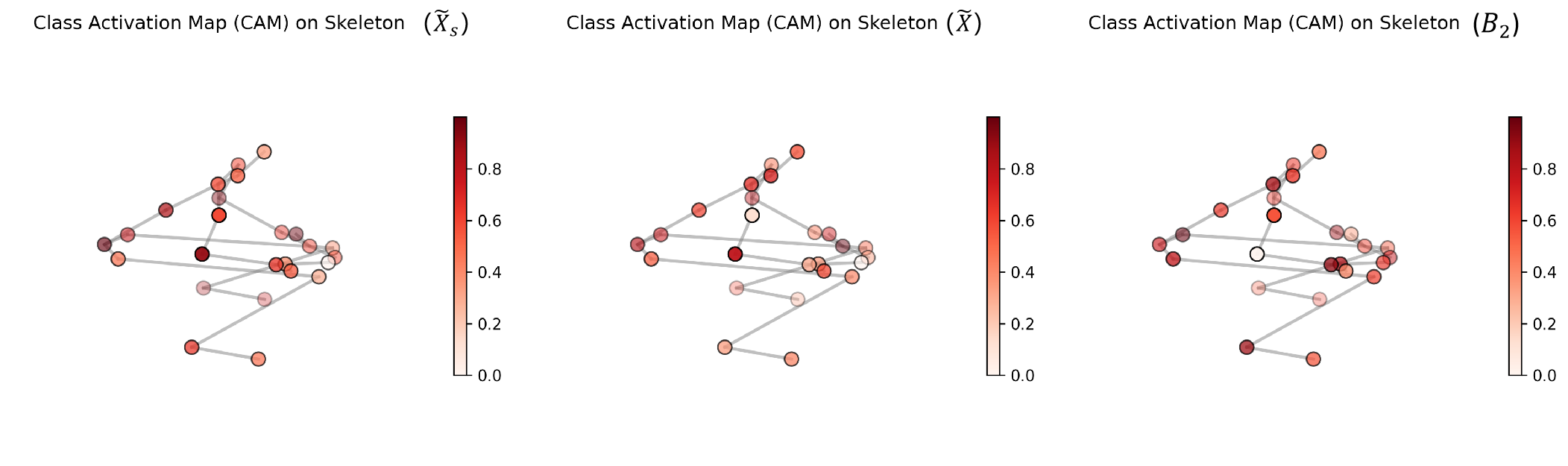}
    \caption{\textbf{Cheer Up}}
    \label{fig:cam_cheer}
\end{subfigure}
\caption{CAM-based skeleton visualization of different actions across feature stages.}
\label{fig:appendix_cam_grid}
\end{figure}

Since the event modality undergoes significant spatial downsampling throughout the model, its spatial resolution is insufficient for intuitive visualization. In contrast, the skeleton modality preserves well-defined joint positions, making it suitable for spatial CAM-based interpretation. Thus, we perform our analysis in the joint-time space, focusing on the skeleton stream.

As shown in Figure~\ref{fig:cam_drink}, for the action drinking water, attention is concentrated on the hand and head joints, especially in later stages of feature fusion, indicating the model’s focus on the key movement of bringing the hand to the mouth. In Figure~\ref{fig:cam_tear}, tear up paper demonstrates symmetric attention over both hands, evolving from diffuse upper-body activation to concentrated wrist focus in the final representation. Lastly, in Figure~\ref{fig:cam_cheer}, the cheer up action shows progressive refinement of attention from general shoulder regions to dynamic arm joints, capturing the expressive nature of the motion.

Together, these results reveal that the proposed multimodal fusion framework not only preserves but also refines spatial semantics over the skeleton, enabling interpretable and class-specific attention through its spiking representations.

\subsection{Hypergraph Visualization and Semantic Integration}
\label{sec:hypergraph}
To gain deeper insight into the learned spatiotemporal hypergraphs, we provide a detailed visualization of a representative "drinking water" action. In the initial layer, the event hypergraph (Figure~\ref{fig:groupB}) is characterized by a scattered and irregular structure, which reflects the diversity and temporal sensitivity inherent in the event-based features. Conversely, the skeleton hypergraph (Figure~\ref{fig:groupA}) is more compact and structured, revealing clear anatomical correlations among the joints.

\begin{figure}[t]
  \centering
  \begin{subfigure}[b]{0.48\textwidth}
    \centering
    \begin{subfigure}[b]{0.48\textwidth}
      \includegraphics[width=\linewidth]{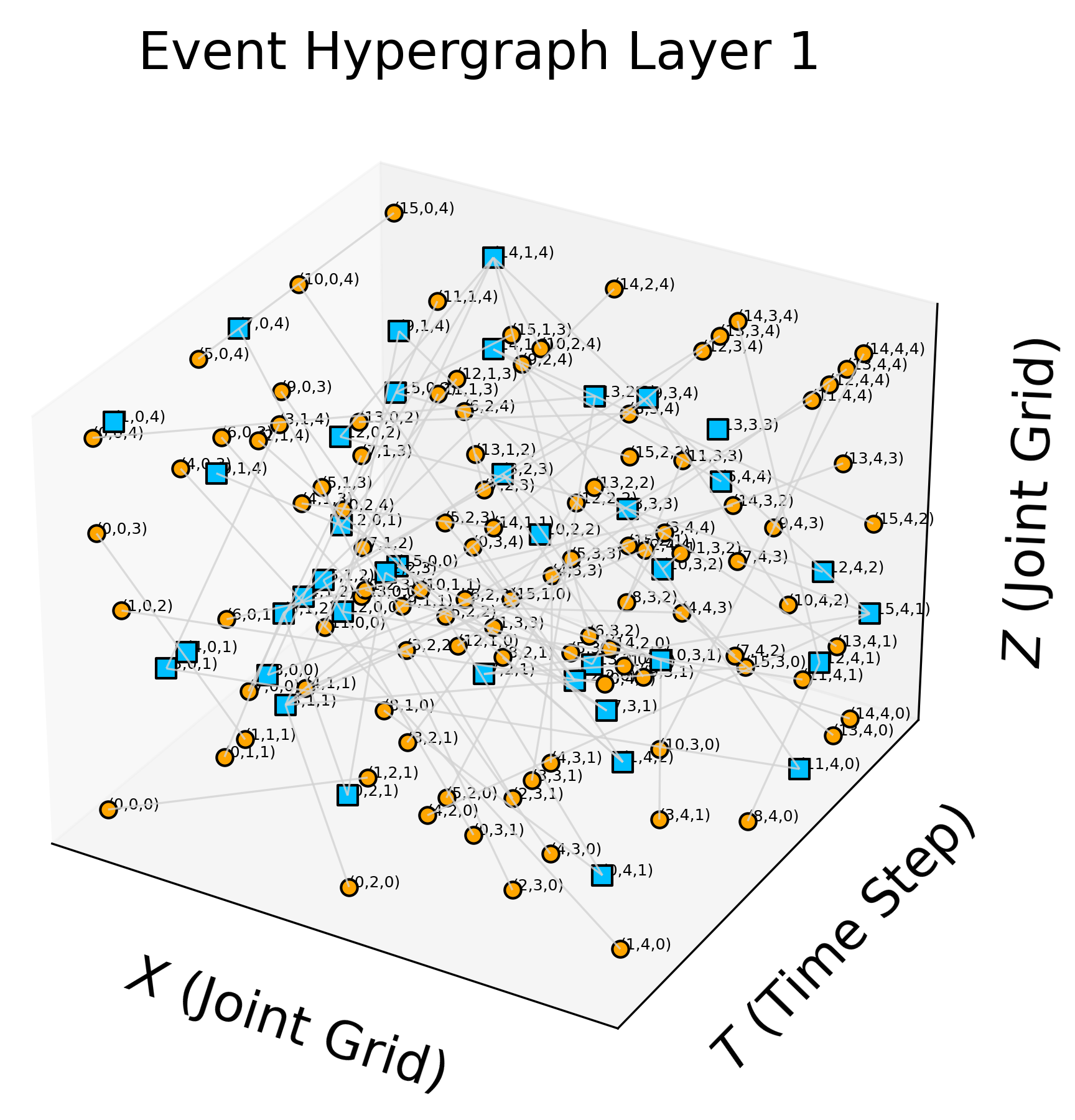}
    \end{subfigure}
    \begin{subfigure}[b]{0.48\textwidth}
      \includegraphics[width=\linewidth]{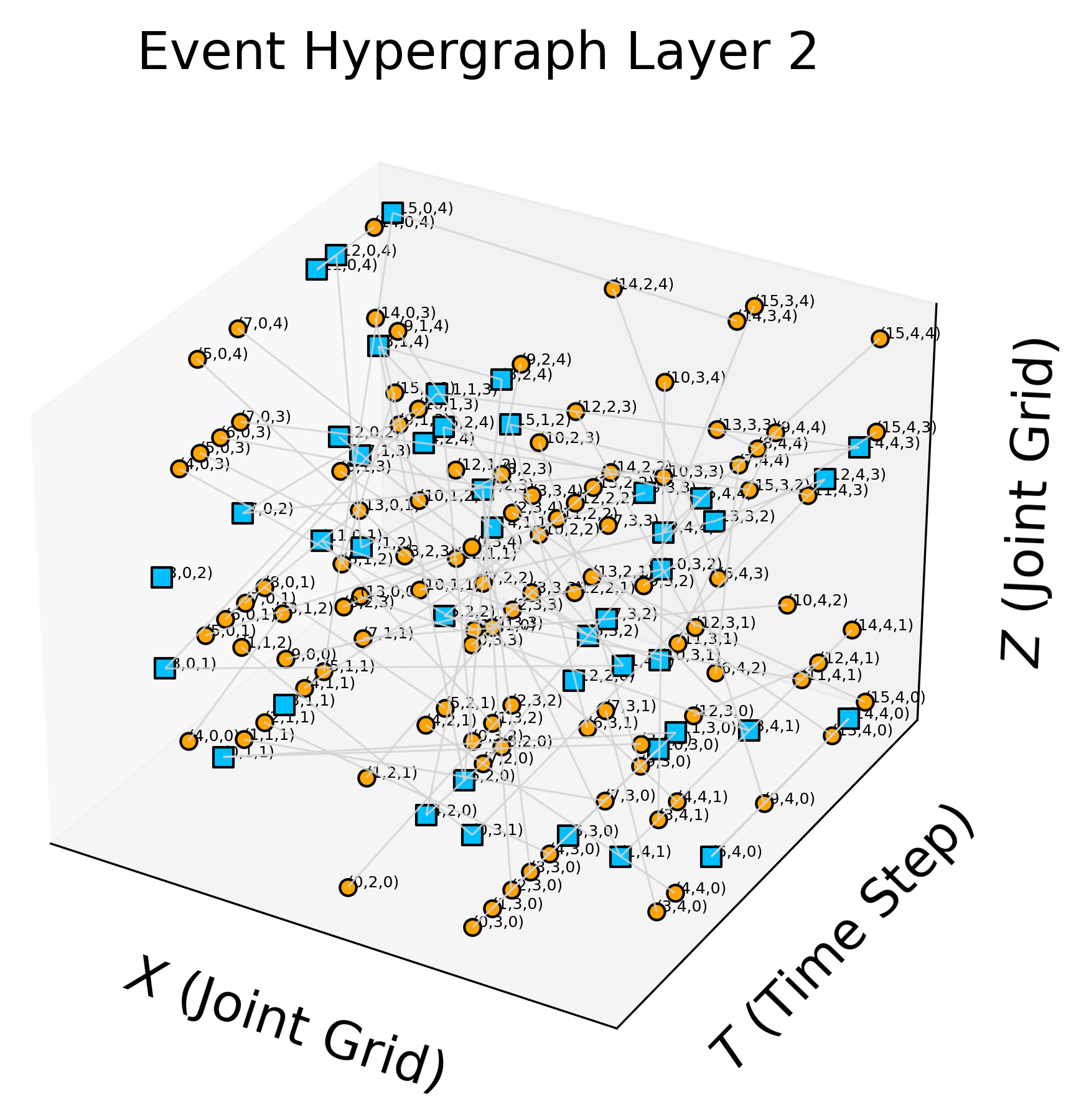}
    \end{subfigure}
    \caption{event feature hypergraph}
    \label{fig:groupB}
  \end{subfigure}
\hfill
  \begin{subfigure}[b]{0.48\textwidth}
    \centering
    \begin{subfigure}[b]{0.48\textwidth}
      \includegraphics[width=\linewidth]{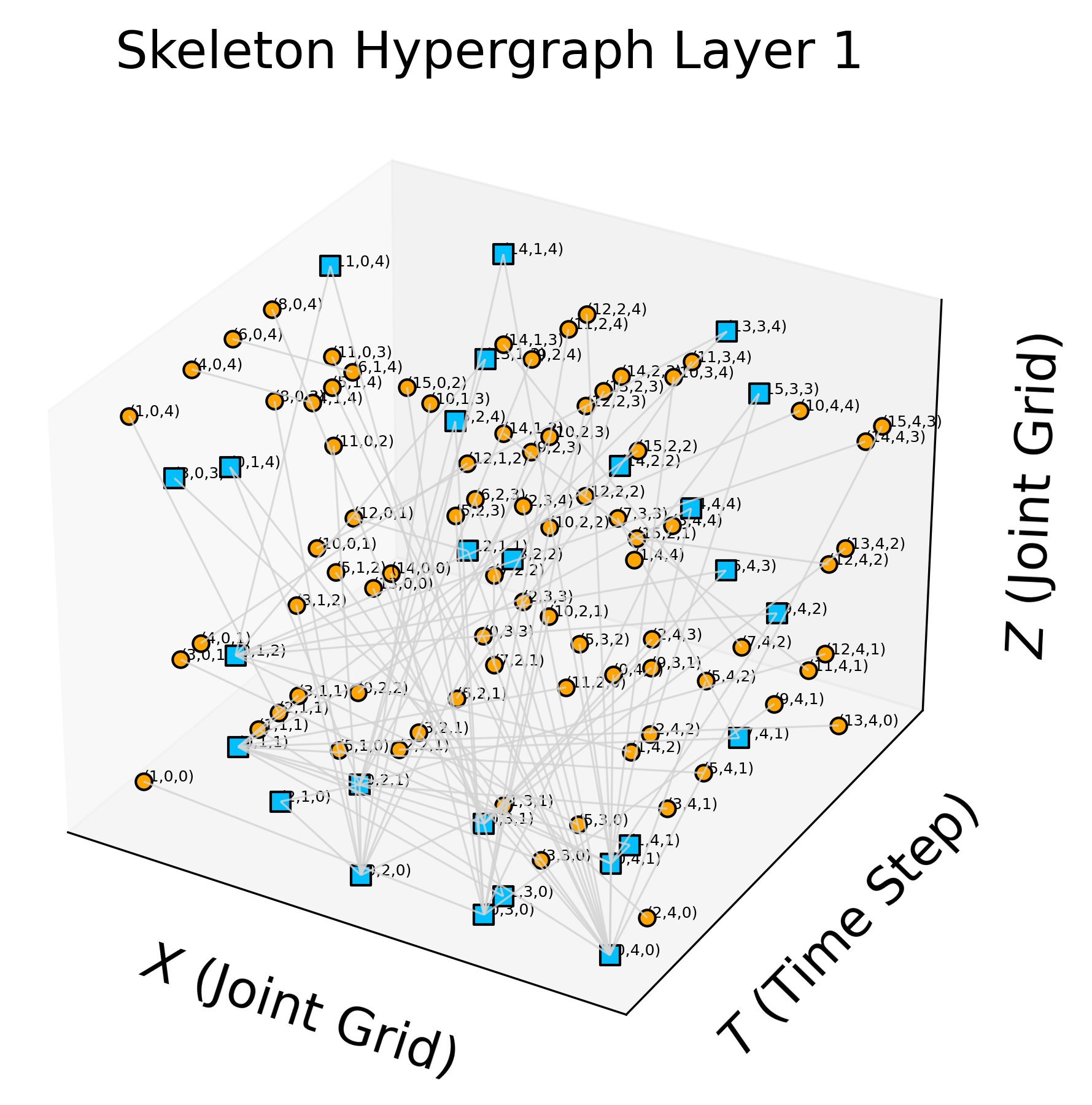}
    \end{subfigure}
    \begin{subfigure}[b]{0.48\textwidth}
      \includegraphics[width=\linewidth]{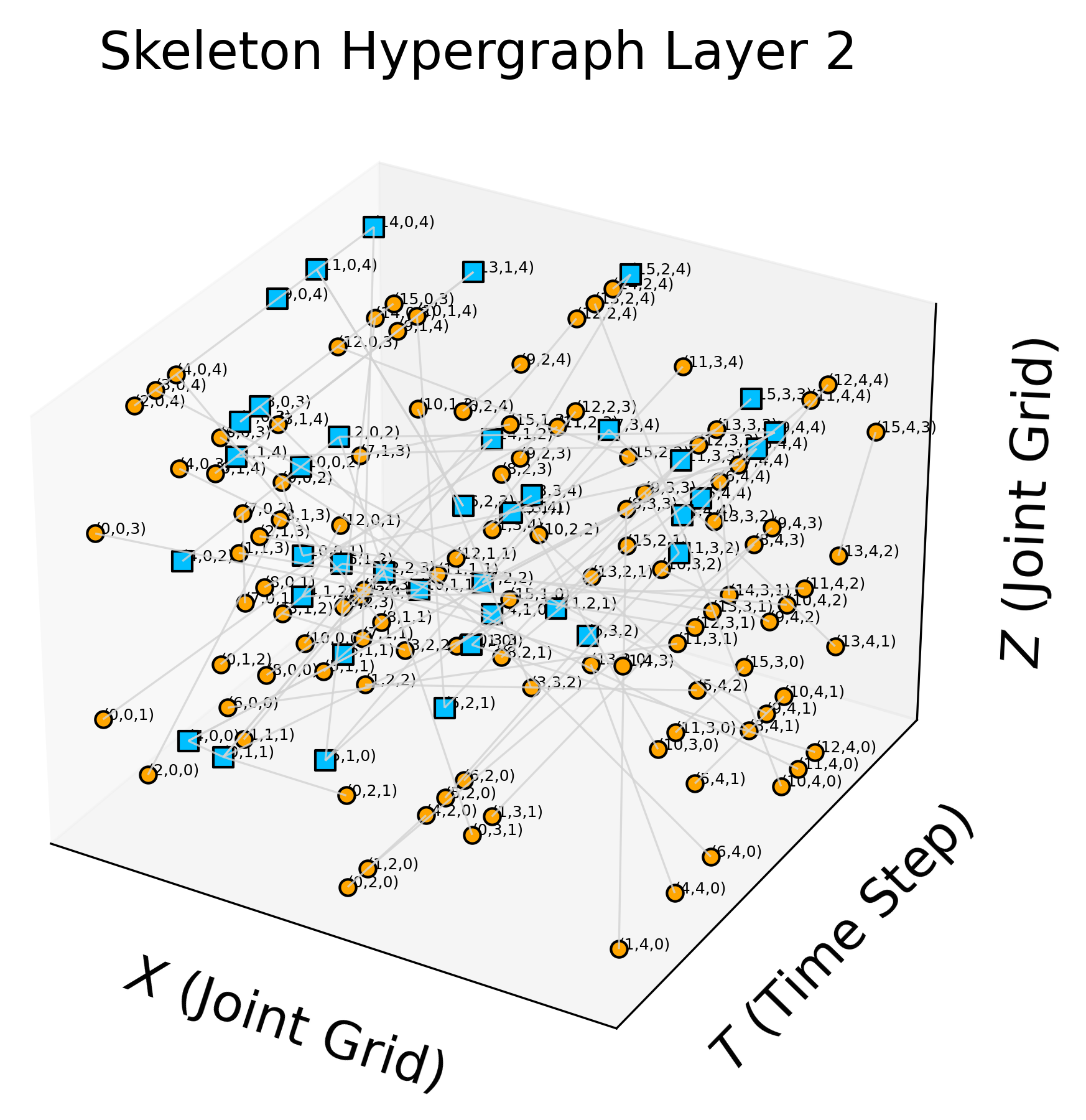}
    \end{subfigure}
    \caption{skeleton feature hypergraph}
    \label{fig:groupA}
  \end{subfigure}
  \caption{Hypergraph visualization of spiking features. Yellow circles represent singly connected nodes; blue squares denote hypernodes with multi-edge connections across time and joint. Two-layer structures illustrate the evolution of semantic integration.}
  \label{fig:whole}
\end{figure}
Upon refinement through the Global Spiking Attention (GSA) module in the second layer, both hypergraphs exhibit a significant improvement in modularity and connectivity. The skeleton hypergraph becomes more organized, while the event hypergraph demonstrates enhanced clustering. This evolution signifies the strengthened semantic grouping of features and reinforces the complementary nature of the event and skeleton modalities, thereby validating their joint contribution to the recognition task.

\section{Demonstration of Dataset Construction}
\label{sec:Dataset Construction}
To provide a detailed breakdown of our dataset construction process, we describe the methodology used to extract ROI skeleton data and convert it into event-based representations using the V2E transformation. This process is illustrated in Figure.~\ref{fig:Skeleton-Event Dataset Construction Pipeline}, and key parameters are summarized in Table~\ref{tab:v2e_params}.
\begin{figure*}[h]
    \centering
\includegraphics[width=1\textwidth]{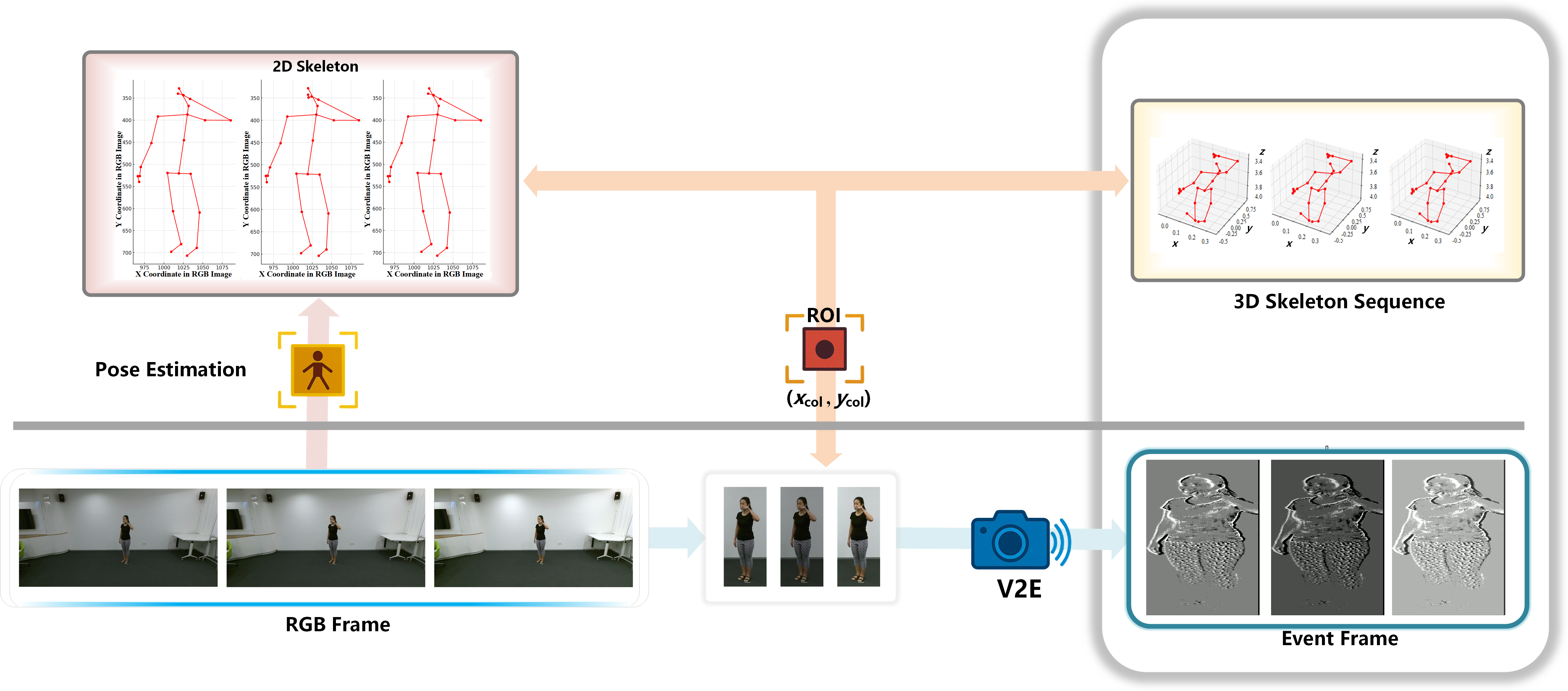} 
    \caption{Skeleton-Event Dataset Construction Pipeline.}
    \label{fig:Skeleton-Event Dataset Construction Pipeline}
\end{figure*}
\subsection{Skeleton-Based ROI Extraction}
We begin by localizing the ROI of the subject in each frame using 2D skeleton coordinates \( (x_{\text{col}}, y_{\text{col}}) \), where \( (x_{\text{col}}, y_{\text{col}}) \) represents the color-space joint positions provided by NTU RGB+D. The bounding box surrounding the subject is determined as follows:
\begin{equation}
x_{\min} = \min(x_{\text{col}, i}), \quad x_{\max} = \max(x_{\text{col}, i}),
\end{equation}
\begin{equation}
 y_{\min} = \min(y_{\text{col}, i}), \quad y_{\max} = \max(y_{\text{col}, i}).
\end{equation}
To ensure full-body coverage, the bounding box is expanded using empirical scaling factors:
\begin{equation}
    ROI_w = 1.2 \times (x_{\max} - x_{\min}), \quad ROI_h = 1.3 \times (y_{\max} - y_{\min}).
\end{equation}
These computed ROI frames serve as the input to the V2E transformation, which converts intensity-based image sequences into event-based representations.

To further validate our data construction method, we select two representative actions from the NTU-RGB+D dataset: \textbf{drinking water} and \textbf{pushing other person}. These actions are visualized in Figure.~\ref{fig:action_visualization}. 
\subsubsection{V2E: Event-Based Frame Generation}
The V2E (Video-to-Event) transformation \cite{b26} simulates a neuromorphic event camera, converting each RGB frame into a stream of asynchronous events. These events are generated when the logarithmic brightness change at a pixel exceeds a defined threshold, as given by:
\begin{equation}
    \Delta L(x, y, t) = \log(I(x, y, t)) - \log(I(x, y, t - \delta t)).
\end{equation}
A pixel at location \( (x, y) \) generates an event \( e(x, y, t, p) \) with polarity \( p \) (positive or negative) when:
\begin{equation}
    |\Delta L(x, y, t)| \geq \theta,
\end{equation}
where \( \theta \) is the contrast sensitivity threshold. The resulting event stream represents scene dynamics, capturing motion changes while discarding redundant static information.

\begin{table}[h]
    \centering
    \caption{Key Parameters for V2E-Based Event Stream Generation}
    \label{tab:v2e_params}
    \begin{tabular}{l c}
        \toprule
        Parameter & Value \\
        \midrule
        Timestamp Resolution & 0.01 \\
        Positive Threshold & 0.15 \\
        Negative Threshold & 0.15 \\
        Sigma Threshold & 0.03 \\
        DVS Output Resolution & \( 640 \times 480 \) \\
        Temporal Cutoff Frequency & 15 s \\
        Crop Region & ROI-based \\
        Output Format & AVI \\
        \bottomrule
    \end{tabular}
\end{table}
The key parameters in Table~\ref{tab:v2e_params} define the event generation process in V2E. The timestamp resolution of 0.01 determines the temporal precision of event timestamps. The positive and negative thresholds set at 0.15 control event triggering based on intensity changes, while the sigma threshold of 0.03 regulates noise suppression. The DVS output resolution is \( 640 \times 480 \), ensuring computational efficiency. A temporal cutoff frequency of 15 limits high-frequency noise, and the ROI-based cropping ensures that only the subject’s motion is retained. The final event representation is stored in AVI format for compatibility with downstream processing.
\subsection{Ablation Study on Skeleton-Event Construction}

To evaluate the impact of key parameters in our Skeleton-Event construction pipeline, we conduct a series of ablation studies focusing on three aspects: (1) ROI cropping, (2) event resolution, and (3) timestamp resolution. We use Spikformer as the baseline model and compare both recognition accuracy and average sample size under different settings.
\begin{figure}[h]
    \centering
    \centering
    \begin{subfigure}[t]{0.5\textwidth}
        \centering
        \includegraphics[width=\linewidth]{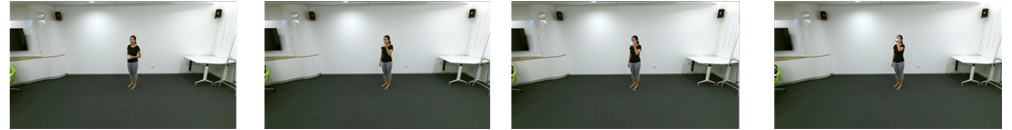}
        \caption{Raw RGB}
    \end{subfigure}
    \hfill
    \begin{subfigure}[t]{0.5\textwidth}
        \centering
        \includegraphics[width=\linewidth]{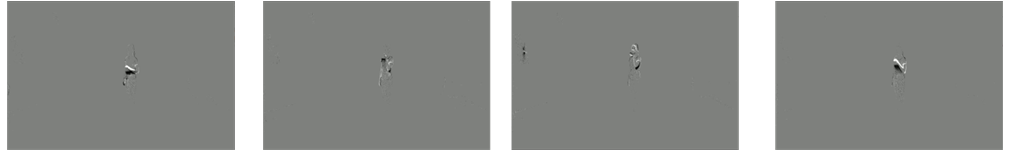}
        \caption{V2E without ROI (1024×1024)}
    \end{subfigure}
    \hfill
    \begin{subfigure}[t]{0.5\textwidth}
        \centering
        \includegraphics[width=\linewidth]{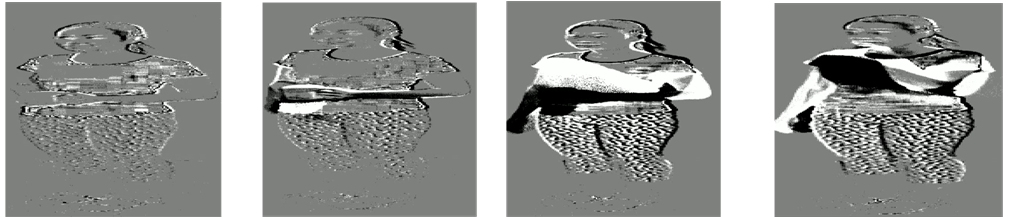}
        \caption{V2E with ROI (640×480)}
    \end{subfigure}
    \caption{Qualitative comparison of event data under different preprocessing settings. ROI-based method captures relevant motion with less noise and lower storage overhead.}
    \label{fig:roi_ablation_vis}
\end{figure}

As illustrated in Figure.~\ref{fig:roi_ablation_vis}, we visualize three variants of the same drinking water sample: raw RGB frames, event frames generated from the full image without ROI cropping at high resolution (\(1024 \times 1024\)), and our proposed ROI-based method at a reduced resolution of \(640 \times 480\). Compared to the full-frame events, which include significant background activity and redundant motion signals, the ROI-based approach yields a more concentrated and informative representation. By focusing on the subject’s region of interest, it effectively highlights critical motion dynamics—such as hand-object interaction—while suppressing background noise, resulting in a cleaner and more efficient spiking representation for neuromorphic processing.
\begin{table}[t]
\setlength{\tabcolsep}{2pt} 
\caption{Impact of ROI, Resolution, and Time Resolution on Recognition Accuracy and Data Size (Spikformer Backbone)}
\label{tab:roi_ablation}
\centering
\begin{tabular}{ccccc}
\toprule
\textbf{ROI} & \textbf{Resolution} & \textbf{Time Res.} & \textbf{Acc(\%)} & \textbf{Avg. Size (MB)} \\
\midrule
-  & 1024×1024 & 0.01s  & 74.0 & 2.01 \\
-  & 640×480   & 0.01s  & 74.6 & 1.51 \\
\ding{51} & 320×240   & 0.01s  & 75.3 & 0.68 \\
\rowcolor[gray]{0.9}
\ding{51} & 640×480   & 0.01s  & \textbf{76.9} & 0.82 \\
\ding{51} & 1280×720  & 0.01s  & 76.7 & 1.88 \\
\ding{51} & 640×480   & 0.005s & 77.0 & 1.47 \\
\ding{51} & 640×480   & 0.05s  & 74.1 & 0.57 \\
\bottomrule
\end{tabular}
\end{table}
Table~\ref{tab:roi_ablation} presents an ablation study on ROI usage, spatial resolution, and time resolution. Applying ROI notably improves both accuracy and storage efficiency—for instance, adding ROI at 640×480 yields 76.9\% accuracy with only 0.82 MB per sample, compared to 74.6\% and 1.51 MB without ROI.

Higher resolutions offer marginal gains but at significant memory cost, while overly low resolutions degrade performance. Similarly, finer time resolution (0.005s) slightly boosts accuracy (77.0\%) but increases size, whereas coarse resolution (0.05s) reduces both. And the 640×480 ROI-based setting with 0.01s time resolution achieves the best balance, and is thus adopted in our main experiments.

\subsubsection{Sample Visualization and Analysis}
We select three representative actions—drinking water, brushing teeth, and pushing another person—and visualize a sequence of frames containing both the skeleton-based representation and the corresponding event-based frames, as shown in Figure.~\ref{fig:action_visualization}. 

From these visualizations, we observe that skeleton representations effectively capture structured body movement patterns, making them suitable for modeling joint coordination and full-body actions. In contrast, event-based representations highlight fine-grained, high-speed motion, such as hand-object interactions and abrupt movements, which may not be fully captured by skeleton-based methods alone. The combination of these two modalities provides a comprehensive and robust understanding of human actions, leveraging the structural advantages of pose estimation and the dynamic sensitivity of neuromorphic vision.
\begin{figure*}[t]
    \centering
    \begin{subfigure}[t]{1\textwidth}
        \centering
        \includegraphics[width=0.9\textwidth]{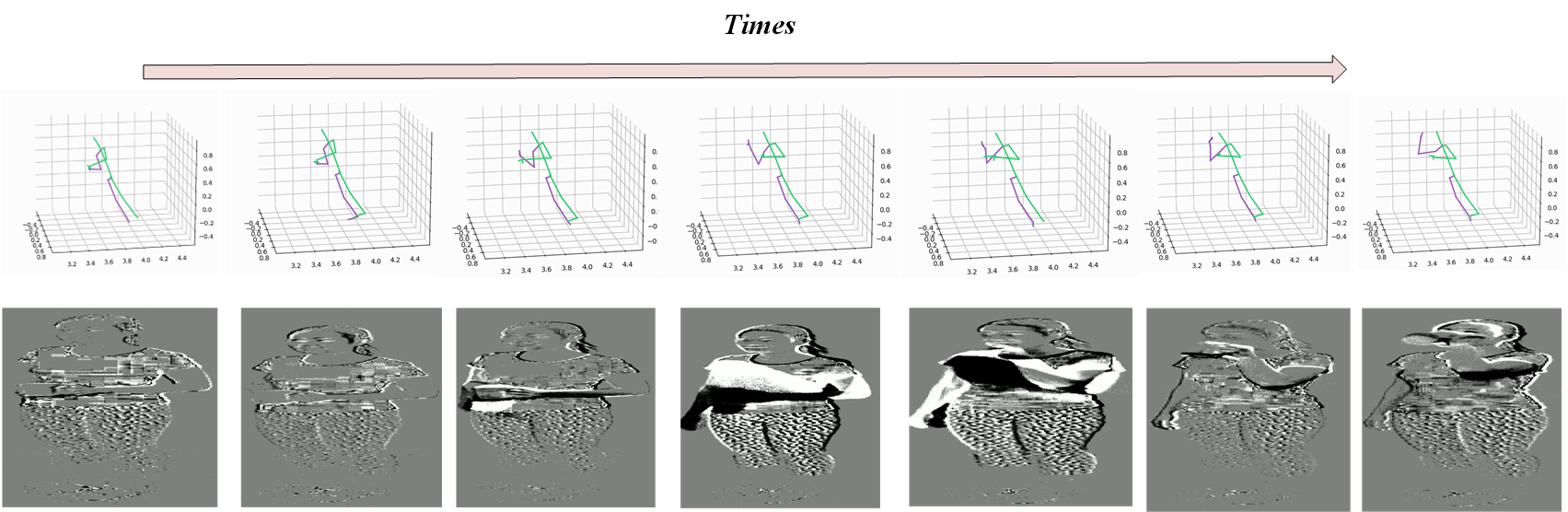}
        \caption{\textbf{Drinking Water}}
        \label{fig:drink_water}
    \end{subfigure}
        \hfill
    \begin{subfigure}[t]{1\textwidth}
        \centering
        \includegraphics[width=\textwidth]{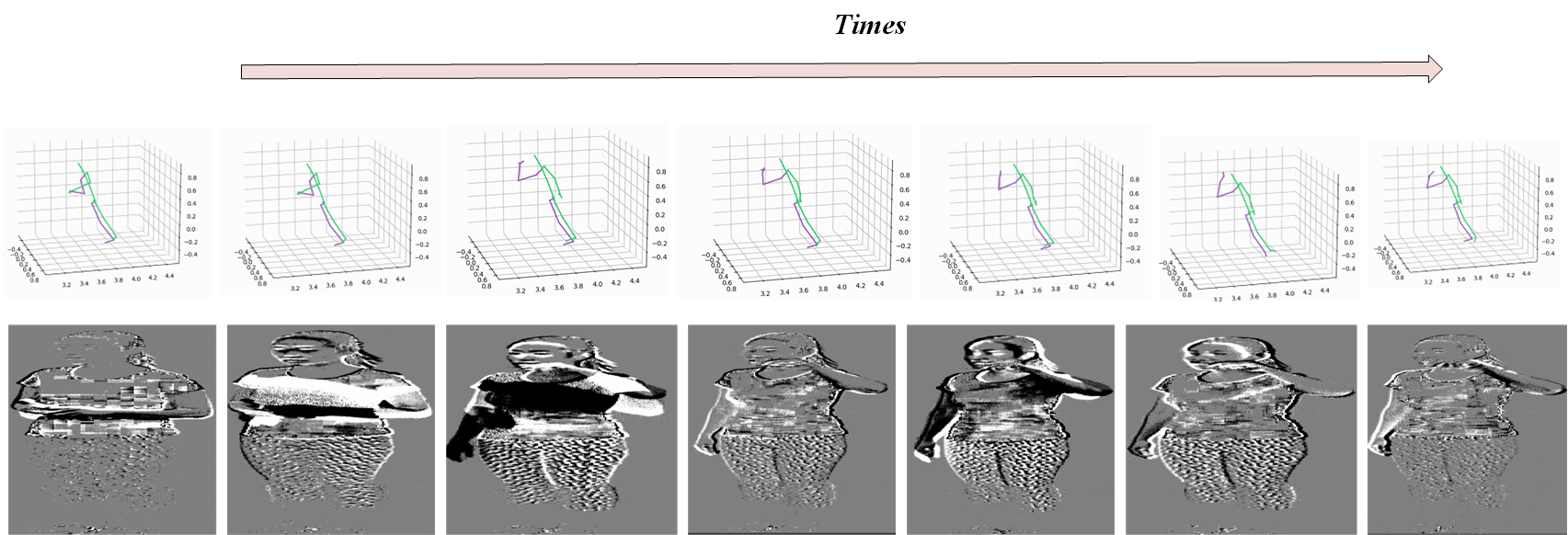}
        \caption{\textbf{Brush Teeth}}
        \label{fig:brush}
    \end{subfigure}
    \hfill
    \begin{subfigure}[t]{1\textwidth}
        \centering
        \includegraphics[width=\textwidth]{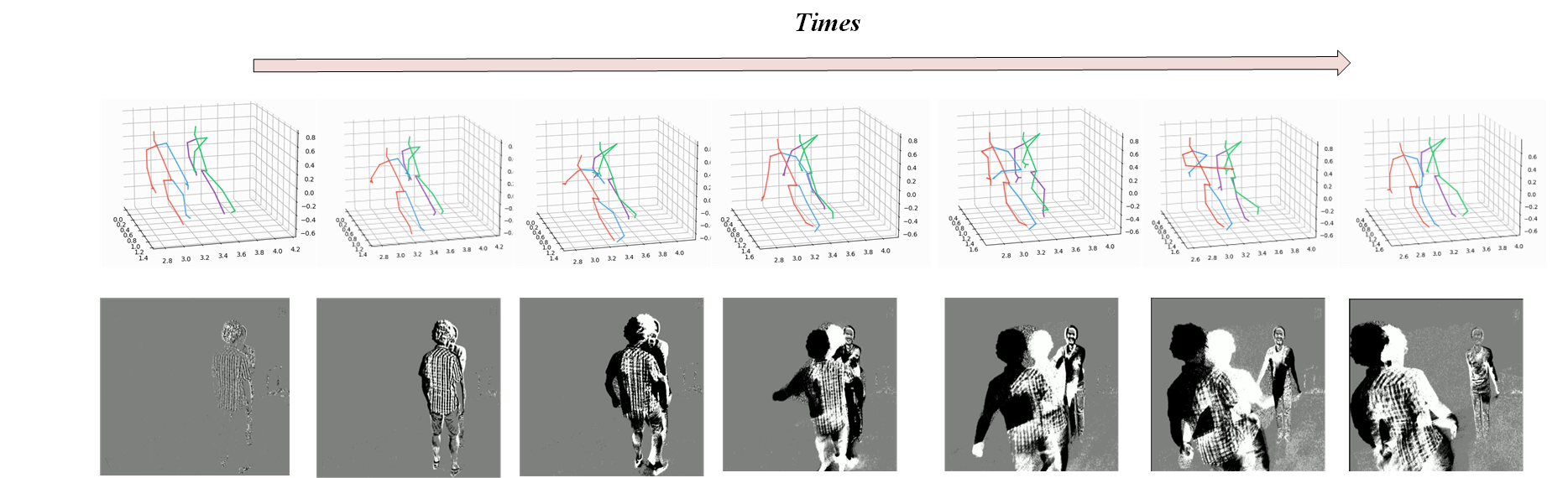}
        \caption{\textbf{Pushing Other Person}}
        \label{fig:pushing}
    \end{subfigure}
    \caption{Visualization of selected actions from NTU-RGB+D. (a) \textbf{Drinking water}. (b) \textbf{Brush Teeth}. (c) \textbf{Pushing other person}.}
    \label{fig:action_visualization}
\end{figure*}

\section{Future Work}

\textbf{Real-World Multimodal Dataset Collection.}  
While our experiments rely on pseudo-event data synthesized from RGB via V2E, we acknowledge the limitations of such surrogate signals in modeling the stochastic, asynchronous nature of real Dynamic Vision Sensor (DVS) input. As a future direction, we plan to construct and release a real-world multimodal dataset comprising synchronized RGB streams, skeletal pose annotations, and event camera data. To this end, we will explore cross-sensor hardware synchronization, robust timestamp alignment, and markerless motion capture for skeleton labeling. Such a dataset would facilitate a more faithful evaluation of multimodal SNNs under real event data and encourage broader adoption of spiking paradigms in multimodal understanding.

\textbf{Multimodal-Training, Unimodal-Inference (MT-UI).}  
In practical scenarios, it is often desirable to leverage rich multimodal supervision during training while retaining low-cost, single-modality inference at deployment. We intend to extend our model to support this asymmetrical paradigm—commonly referred to as Multimodal-Training, Unimodal-Inference (MT-UI), Learning Using Privileged Information (LUPI), or cross-modal distillation. Concretely, we will investigate:  
(i) modality dropout and consistency regularization to avoid co-adaptation during training;  
(ii) cross-modal teacher-student distillation where a spiking backbone learns to hallucinate or reconstruct representations of the missing modality;  
(iii) selective gating of modality-specific branches based on test-time availability and energy constraints.  
This line of work aims to enable robust and efficient action recognition even under partial modality conditions, while maintaining the benefits of joint training.

\textbf{Neuromorphic Deployment and Real-Time Evaluation.}  
Lastly, while we provide energy estimation through SOP-based proxy metrics, we plan to prototype our architecture on neuromorphic hardware platforms (e.g., Intel Loihi, DYNAP-SE, or FPGA-based accelerators) to validate real-world latency, power, and throughput. This would offer a more practical perspective on deploying multimodal SNNs in mobile or wearable applications.

\end{document}